\newcommand\figcaption{\def\@captype{figure}\caption}
\newcommand\tabcaption{\def\@captype{table}\caption}
\DeclareMathAlphabet{\mathsf}{OT1}{cmss}{m}{n}
\SetMathAlphabet{\mathsf}{bold}{OT1}{cmss}{bx}{n}
\newcommand{\Rmnum}[1]{\expandafter\@slowromancap\romannumeral #1@}
\begin{document}


\title{\Huge Towards Understanding the Importance of Noise in Training Neural Networks\footnote{Working in Progress.}}

\date{}

\author{Mo Zhou, Tianyi Liu, Yan Li, Dachao Lin, Enlu Zhou and Tuo Zhao\thanks{M.Zhou, D.Lin are affiliated with Peking University; T. Liu, Y.Li, E. Zhou, and T. Zhao are affiliated with School of Industrial and Systems Engineering at Georgia Tech; M. Zhou and T. Liu contribute equally; Tianyi Liu and Tuo Zhao is the corresponding author; Email: \{tianyiliu,tourzhao\}@gatech.edu.}}
\maketitle
\begin{abstract}
Numerous empirical evidence has corroborated that the noise plays a crucial rule in effective and efficient training of neural networks. The theory behind, however, is still largely unknown. This paper studies this fundamental problem through training a simple two-layer convolutional neural network model. Although training such a network requires solving a non-convex optimization problem with a spurious local optimum and a global optimum, we prove that perturbed gradient descent and perturbed mini-batch stochastic gradient algorithms in conjunction with noise annealing is guaranteed to converge to a global optimum in polynomial time with arbitrary initialization. This implies that the noise enables the algorithm to efficiently escape from the spurious local optimum. Numerical experiments are provided to support our theory.
\end{abstract}

\section{Introduction}
Deep neural networks (DNNs) have achieved great successes in a wide variety of domains such as speech and image recognition  \citep{hinton2012deep, krizhevsky2012imagenet}, nature language processing \citep{rumelhart1986learning} and recommendation systems \citep{salakhutdinov2007restricted}. Training DNNs requires solving  non-convex optimization problems. Specifically, given $n$ samples denoted by $\{(x_i, y_i)\}_{i=1}^{n}$, where $x_i$ is the $i$-th input feature and $y_i$ is the response, we solve the following optimization problem,
\begin{align*}\label{obj-general}
	\min_{\theta} \cF(\theta)=\frac{1}{n}\sum_{i=1}^n\ell(y_i,f(x_i,\theta)),
\end{align*}
where $\ell$ is a loss function, $f$ denotes the decision function based on the neural network, and $\theta$ denotes the  parameter associated with $f$.

Simple first order algorithms such as Stochastic Gradient Descent (SGD) and its variants have been very successful for training deep neural networks, despite the highly complex non-convex landscape. For instance, recent results show that there are a combinatorially large number of saddle points and local optima in training DNNs \citep{choromanska2015loss}. Though it has been shown that SGD and its variants can escape saddle points efficiently and converge to local optima \citep{dauphin2014identifying,kawaguchi2016deep,hardt2016identity, jin2017escape},  the reason why the neural network learnt by SGD generalizes well  cannot yet be well explained, since local optima do not necessarily guarantee generalization. For example, \citet{zhang2016understanding} empirically show the proliferation of global optima (when minimizing the empirical risk), most of which cannot generalize; \citet{keskar2016large} also provide empirical evidence of the existence of  sharp local optima, which do not generalize. They further observe that gradient descent (GD) can often converge to the sharp optima, while SGD tends to converge to the flat ones. This phenomenon implies that the noise in SGD is very crucial and enables SGD to select good optima. Besides, \citet{bottou1991stochastic, neelakantan2015adding} also show that adding noise to gradient can potentially improve training of deep neural networks. These empirical observations motivate us to theoretically investigate the role of the noise in training DNNs.


This paper aims to provide more theoretical insights on the following fundamental question:
\begin{center}
	
	\textbf{\emph{How does noise help train neural networks in the presence of bad  local optima?}}
	
\end{center}
Specifically, we  study a two-layer non-overlapping convolutional neural network (CNN) with one hidden layer, which takes the following form:
\begin{equation*}
	f(Z,w,a)=a^T\sigma(Z^Tw),
\end{equation*}
where $w\in \mathbb{R}^p$, $a\in \mathbb{R}^k$ and $Z\in \mathbb{R}^{p\times k}$ are the convolutional weight, the output weight and the input, respectively, and $\sigma(\cdot)$ is the element-wise ReLU activation operator. Since the ReLU activation is positive homogeneous, the weights $a$ and $w$ can arbitrarily scale with each other. Thus, we impose an additional constraint $\norm{w}_2=1$ to make the neural network identifiable. We consider the realizable case, where the training data is generated from a {\bf teacher} network with true parameters  $w^*$, $a^*$ and $\norm{w^*}_2=1$. Then we aim to recover the {\bf teacher} neural network by solving the following optimization problem,
\begin{align}\label{optimization}
	\min_{w,~a} &~~\frac{1}{2}\mathbb{E}_Z(f(Z,w,a) - f(Z,w^*,a^*))^2\quad \mathrm{subject to}\quad w^\top w=1,
\end{align}
where $Z$ is independent  Gaussian input\footnote{This is a common assumption in previous works \citep{tian2017analytical, brutzkus2017globally, zhong2017recovery}}. One can verify that $({w^*},a^*)$ is a global optimum of \eqref{optimization}.

Though over-simplified compared with complex deep neural networks in practice, the above model turns out to have some intriguing property, which helps us get insight into understanding the optimization landscape of training neural  networks. Specifically, \citet{du2017gradient} show that the optimization problem  \eqref{optimization} has a non-trivial spurious  optimum, which does not generalize well. They further prove that with random initialization, Gradient Descent (GD) can be trapped in this spurious optimum with constant probability\footnote{\citet{du2017gradient} prove that this probability is bounded between $1/4$ and $3/4$. Their numerical experiments show that this probability can be as worse as $1/2$.}.

Inspired by \citet{du2017gradient}, we propose to investigate whether adding noise to gradient descent helps avoid the spurious optimum using the same model. Specifically, we consider a perturbed GD algorithm\footnote{Our algorithm actually updates $w$ using the manifold gradient over the sphere. See more details in Section 2} in conjunction with noise annealing to solve the optimization problem \eqref{optimization}. To be more concrete, we run the algorithm with multiple epochs and decrease the magnitude of the noise  as the number of epochs increases. Note that our algorithm is different from SGD in terms  of the noise.  In our algorithm, we inject independent noise to the gradient update at every iteration, while the noise of SGD comes from the training sample. As a consequence, the noise of SGD has very complex dependence on the iterate, which is very difficult to analyze. See more detailed discussions in Sections \ref{sec_model} and \ref{sec_discuss}.

We further analyze the convergence properties of our perturbed GD algorithm: At early stages, large noise essentially convolutes with the loss surface and makes the optimization landscape smoother, which tames non-convexity and rules out the spurious local optimum. Hence, perturbed GD is capable of escaping from the spurious local optimum.  Though large noise leads to large optimization errors, this can be further compensated by noise annealing.  In another word, the injected noise with decreasing magnitude essentially guides GD to gradually approach and eventually fall in the basin of attraction of the global optimum. Given that the noise has been annealed to a sufficiently small level at later stages, the algorithm finally converges to the global optimum and stays in its neighborhood. Overall, we prove that with random initialization and noise  annealing, perturbed GD is guaranteed to converge to the global optimum with high probability in polynomial time. Moreover, we further extend our proposed theory to the perturbed mini-batch stochastic gradient algorithm, and establish similar theoretical guarantees. To the best of our knowledge, this is the first theoretical result towards justifying the effect of noise in training NNs by first order algorithms in the presence of the spurious local optima.

Our work is related to \citet{zhou2017stochastic,li2017convergence, kleinberg2018alternative,jin2018local}, which also study the effect of noise in non-convex optimization. We give detailed discussions in Section \ref{sec_discuss}.

The rest of the paper is organized as follows: Section \ref{sec_model} describes the two-layer non-overlapping convolutional network and introduces our perturbed GD algorithm; Section \ref{sec_theory} and \ref{extension} present the convergence analysis; Section \ref{sec_num} provides the numerical experiments; Section \ref{sec_discuss} discusses related works.

\noindent{\textbf{Notations}}: Given a vector $v = (v_1, \ldots, v_d)^\top\in\RR^{d}$, we define $\norm{v}_2^2 = \sum_jv_j^2$, $\norm{v}_1 = \sum_j|v_j|$. For vectors $v,u\in\RR^d$, we define $\langle u,v\rangle=\sum_{j=1}^du_jv_j$. $\mathbb{B}_0(r)$ denotes a ball with radius $r$ centered at zero in  $\RR^{d}$, i.e., $\mathbb{B}_0(r)=\{v\in\RR^{d}\,|\,\|v\|_2\leq r\}$ and $\mathbb{S}_0(r)$ denotes the boundary of  $\mathbb{B}_0(r)$. For two vectors $v, w$,  $\angle(v,w)$ represents the angle between them, i.e., $\angle(v,w)=\arccos{\frac{v^\top w}{\norm{v}_2\norm{w}_2}}.$ We denote the uniform distribution on $\cM\subseteq\RR^{d}$ by $\mathrm{unif}(\cM)$ and the projection of vector $v$ on set $\cM$ by $\mathrm{Proj}_{\cM}(v)$. For two sets $A$ and $B\in\RR^{d},$ $A/B=\{x\in\RR^d\big|x\in A, ~x\notin B\}$.
\section{Model and Algorithm}\label{sec_model}

We first introduce the neural network models of our interests, and then present the nonconvex optimization algorithm.

\subsection{Neural Network Models}

Recall that we study a two-layer non-overlapping convolutional neural network (CNN) given by:
\begin{align}\label{NN-model}
	f(Z,w,a)=a^\top\sigma(Z^\top w),
\end{align}
where $a\in \mathbb{R}^k$,  $w\in \mathbb{R}^p$ and $Z\in \mathbb{R}^{p\times k}$ are the output weight, the convolutional weight and input, respectively. $\sigma(\cdot)$ denotes the element-wise ReLU activation operator. Since the ReLU activation is homogeneous, $w$ and $a$ can arbitrarily scale with each other without changing the output of the network, i.e., $ f(Z, w, a) = f(Z, cw, \frac{a}{c})$ for any $c>0$.
Thus, we impose an additional constraint $\norm{w}_2=1$ to make the model identifiable.
We assume independent Gaussian input $Z=[Z_1,...,Z_p]$, where $Z_i$'s are independently sampled from $N(0,I)$, and focus on the noiseless realizable setting -- i.e., the response is generated by a noiseless {\bf teacher} network
\begin{align*}
	y=f(Z, w^*,a^*)=(a^*)^\top\sigma(Z^\top w^*)
\end{align*}
with some true parameters $\norm{w^*}_2 = 1$ and $a^*$. We aim to learn a {\bf student} network, i.e., recover the true parameter $(w^*, a^*)$ by solving the following regression problem using mean square loss:
\begin{align}\label{opt-recall}
	\min_{w,a} \cL(w,a)\quad\text{subject to}\quad w^\top w=1,
\end{align}
where $\cL(w,a)=\frac{1}{2}\mathbb{E}_Z(f(Z,w,a) - f(Z,w^*,a^*))^2.$ The optimization landscape has been partially studied by \citet{du2017gradient}. Specifically, one can easily verify that  $(w^*, a^*)$ is a global optimum of \eqref{opt-recall}. Moreover, they prove that there exists a spurious local optimum, and gradient descent with random initialization can be trapped in this spurious optimum with constant probability.  
\begin{proposition}[Informal, \citet{du2017gradient}]\label{bad_optima}
	Given $$w_0 \sim \mathrm{unif}\big(\mathbb{S}_0\left(1 \right)\big)\quad\textrm{and}\quad a_0 \sim
	\mathrm{unif}\big(\mathbb{B}_0\big(\frac{|\one^\top a^*|}{\sqrt{k}}\big)\big)$$ as the initialization and the learning rate is sufficiently small, then with at least probability  $1/4,$ GD converges to the spurious local minimum $(v^*,\tilde{a})$ satisfying $$\angle(v^*, w^*)=\pi,~\tilde{a}=(\one\one^\top+(\pi-1)I)^{-1}(\one\one^\top-I)a^*.$$
\end{proposition}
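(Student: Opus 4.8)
The plan is to follow the route of \citet{du2017gradient}: reduce the population loss to an essentially two-dimensional object, identify a forward-invariant ``bad'' region for the GD iterates, and show that random initialization lands in it with constant probability. I would first put $\cL(w,a)$ in closed form via the standard Gaussian--ReLU identities --- for a standard Gaussian $\xi$ and unit vectors $u,v$, $\mathbb{E}[\sigma(\xi^\top u)\sigma(\xi^\top v)]=\tfrac{1}{2\pi}(\sin\phi+(\pi-\phi)\cos\phi)$ with $\phi=\angle(u,v)$, and $\mathbb{E}[\sigma(\xi^\top u)]=\tfrac{1}{\sqrt{2\pi}}$ --- using also the independence of the columns of $Z$. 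The structural point is that $w$ enters $\cL$ only through the angle $\phi=\angle(w,w^*)$, and only via the term $-g(\phi)\langle a,a^*\rangle$ with $g(\phi)=\tfrac{1}{2\pi}(\sin\phi+(\pi-\phi)\cos\phi)$ decreasing on $[0,\pi]$, while the remainder of $\cL$ is a fixed quadratic in $a$. Differentiating, the Riemannian gradient on the unit sphere is $\mathrm{grad}_w\cL=-\tfrac{\pi-\phi}{2\pi}\langle a,a^*\rangle(w^*-\cos\phi\,w)$, so one GD step changes $\cos\phi=w^\top w^*$ by $\eta\tfrac{\pi-\phi}{2\pi}\langle a,a^*\rangle\sin^2\phi+O(\eta^2)$: the sign of $\langle a,a^*\rangle$ alone decides whether $\phi$ is driven toward $0$ or toward $\pi$, with a negative sign pushing $\phi$ up to $\pi$. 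Meanwhile $\nabla_a\cL$ is affine in $a$ with constant term $-g(\phi)a^*+\tfrac{1}{2\pi}(a^*-(\one^\top a^*)\one)$, so on the orthogonal complement of $\mathrm{span}\{a^*,\one\}$ the iterate $a_t$ contracts geometrically to $0$, leaving a two-dimensional affine dynamics coupled to $\phi$ only through the scalar $g(\phi)$; and setting $\nabla_a\cL=0$ at $\phi=\pi$ (where $g(\pi)=0$) gives $((\pi-1)I+\one\one^\top)a=(\one\one^\top-I)a^*$, whose unique solution is the $\tilde a$ of the statement.

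The crux is to exhibit a forward-invariant ``bad'' region $\Omega=\{(w,a):\langle a,a^*\rangle\le h(\phi),\ \phi\ge\phi_{\min}\}$ for a suitable threshold $h(\phi)\le 0$ with $h(\phi)\to 0$ as $\phi\to\pi$ and a suitable $\phi_{\min}\ge\pi/2$, and to prove by induction on the iterates --- carefully accounting for the renormalization of $w$ and the $O(\eta)$ discretization error --- that $\Omega$ is forward invariant under GD: inside $\Omega$, $\langle a,a^*\rangle<0$ forces $\phi_{t+1}\ge\phi_t$ by the gradient computation above, while $\phi_t$ remaining large keeps $g(\phi_t)$ small enough that the $-g(\phi)a^*$ term cannot push $\langle a_{t+1},a^*\rangle$ above $h(\phi_{t+1})$. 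Then $\phi_t$ is nondecreasing and bounded by $\pi$, so it converges to a stationary angle, which can only be $\pi$; once $\phi_t\to\pi$ the $a$-dynamics is governed by the positive-definite operator $\tfrac{1}{2\pi}((\pi-1)I+\one\one^\top)$ and contracts to its fixed point $\tilde a$. A two-phase bookkeeping --- the angle first saturates near $\pi$, then $a$ converges --- yields $w_t\to v^*$ with $\angle(v^*,w^*)=\pi$ and $a_t\to\tilde a$.

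For the probability: since $w_0\sim\mathrm{unif}(\mathbb{S}_0(1))$, the inner product $w_0^\top w^*$ is symmetric about $0$, so $\Pr[\phi_0\ge\pi/2]=\tfrac12$; since $a_0\sim\mathrm{unif}(\mathbb{B}_0(|\one^\top a^*|/\sqrt k))$ is symmetric about the origin, $\Pr[\langle a_0,a^*\rangle\le 0]=\tfrac12$, and the radius $|\one^\top a^*|/\sqrt k$ --- the length of the component of $a^*$ along the all-ones direction --- is small enough that $\langle a_0,a^*\rangle\le 0$ together with $\phi_0\ge\pi/2$ already forces $a_0$ below $h(\phi_0)$, i.e. $(w_0,a_0)\in\Omega$. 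As $w_0$ and $a_0$ are independent, this event has probability at least $\tfrac12\cdot\tfrac12=\tfrac14$, and on it GD converges to $(v^*,\tilde a)$. The main obstacle is exactly the crux step: $\phi$ and $a$ are genuinely coupled --- the $w$-gradient sees $\langle a,a^*\rangle$, the $a$-gradient sees $g(\phi)$ --- so they cannot be handled in isolation, and one must tune $h$, $\phi_{\min}$, and the ball radius so that renormalization and step-size errors are absorbed and no escape near $\phi=\pi/2$ can occur, all while keeping the probability lower bound as clean as $1/4$.
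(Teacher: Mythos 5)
You should first be aware that this paper contains no proof of Proposition \ref{bad_optima}: it is an informal restatement of a result of \citet{du2017gradient}, and the text simply defers to that reference (only the closed forms of $\cL$, $\nabla_a\cL$, $\nabla_w\cL$ are reproduced in Appendix \ref{app_pre}). So your proposal can only be measured against the argument in \citet{du2017gradient}. Your reduction machinery is correct and matches theirs: the Gaussian--ReLU closed form, the fact that $w$ enters $\cL$ only through $\phi$ via $g(\phi)\,a^\top a^*$, the manifold gradient $-\frac{\pi-\phi}{2\pi}a^\top a^*(w^*-\cos\phi\,w)$, the affine $a$-dynamics, the identification of $\tilde a$ from $(\one\one^\top+(\pi-1)I)a=(\one\one^\top-I)a^*$ at $\phi=\pi$, and the two symmetry facts $\Pr[\phi_0\ge\pi/2]=\Pr[a_0^\top a^*\le 0]=1/2$.

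The crux of your argument, however, has a genuine gap. The drift of $a_t^\top a^*$ is $\langle-\nabla_a\cL,a^*\rangle=\frac{1}{2\pi}\big[(\one^\top a^*)^2-(\one^\top a)(\one^\top a^*)\big]-\frac{\pi-1}{2\pi}\,a^\top a^*+\frac{g(\phi)-1}{2\pi}\norm{a^*}_2^2$. At the boundary $a^\top a^*=0$ the first term is nonnegative and of order $(\one^\top a^*)^2$, while the only stabilizing term, $-\frac{1-g(\phi)}{2\pi}\norm{a^*}_2^2$, vanishes as $\phi\downarrow\pi/2$ (where $g=1$). Hence, whenever $\one^\top a^*\neq 0$, the set $\{a^\top a^*\le 0,\ \phi\ge\pi/2\}$ is not forward invariant: an initialization with $a_0^\top a^*$ only slightly negative (and the radius $|\one^\top a^*|/\sqrt k$ forces $|a_0^\top a^*|$ to be small) and $\phi_0$ slightly above $\pi/2$ sees $a_t^\top a^*$ pushed across zero at rate $\Omega((\one^\top a^*)^2)$ while $\cos\phi_t$ moves only at rate $O(|a_t^\top a^*|)$; after the crossing, $\phi_t$ decreases, $g$ increases, and the iterate heads to the global optimum. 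So your event of probability $1/4$ is not contained in the basin of the spurious minimum, and no choice of $h$ repairs both ends at once: with $h\equiv 0$ on $[\pi/2,\pi]$ invariance fails as above, while with $h<0$ the probability step runs backwards ($a_0^\top a^*\le 0$ does not imply $a_0^\top a^*\le h(\phi_0)$, and the small radius makes this harder, not easier). Two further points you would need to handle: ``$\phi_t$ nondecreasing and bounded, hence converges, and the limit can only be $\pi$'' requires $a_t^\top a^*$ bounded away from $0$ (otherwise the angle can stall short of $\pi$); and the statement implicitly needs a condition on $a^*$ --- $(-w^*,\tilde a)$ is a local minimum only when $\tilde a^\top a^*<0$, equivalently $\norm{a^*}_2^2>\pi(\one^\top a^*)^2/(k+\pi-1)$, since $\partial\cL/\partial\phi\approx(\pi-\phi)^2\,a^\top a^*/(2\pi)$ near $\phi=\pi$ --- and a correct proof must use that condition exactly at the invariance step where your sketch is silent. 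This is why the argument in \citet{du2017gradient} controls $\one^\top a_t$ jointly with $a_t^\top a^*$ and $\phi_t$ (as this paper also does in its own Phase I lemmas), rather than the single scalar $a_t^\top a^*$.
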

Please refer to \citet{du2017gradient} for more details.

\subsection{Optimization Algorithm}

We then present the perturbed gradient descent algorithm for solving \eqref{opt-recall}.
Specifically, at the $t-$th iteration, we
perturb the iterate $(w_t, a_t)$ with independent noise $\xi_t \sim \mathrm{unif}\big(\mathbb{B}_0(\rho_w)\big)$ and $\epsilon_t \sim \mathrm{unif}\big(\mathbb{B}_0(\rho_a)\big)$
and take:
\begin{align*}
	a_{t+1}& = a_{t} - \eta \nabla_a \cL(w_t + \xi_t,a_t + \epsilon_t),\\
	w_{t+1} &= \mathrm{Proj}_{\mathbb{S}_0(1)} \big(w_t-\eta (I-w_tw_t^\top)\nabla_w \cL(w_t + \xi_t,a_t + \epsilon_t)\big),
\end{align*}
where $\eta$ is the learning rate. We remark that the update for $w$ in our algorithm is essentially based on the manifold gradient, where $(I-w_tw_t^\top)$ is the projection operator to the tangent space of the unit sphere at $w_t$. For simplicity, we still refer to our algorithms as Perturbed Gradient Descent.


As can be seen, for $\xi_t=0$ and $\epsilon_t=0$, our algorithm is reduced to the (noiseless) gradient descent. Different from SGD, the noise of which is usually from randomly sampling the data, we inject the noise directly to the iterate used for computing gradient. Moreover, stochastic gradient is usually an unbiased estimate of gradient, while our perturbed gradient $\nabla_aL(w_t + \xi_t,a_t + \epsilon_t)$ and $\nabla_wL(w_t + \xi_t ,a_t + \epsilon_t)$ yield biased estimates, i.e.,
\begin{align*}
	\EE_{\xi_t, \epsilon_t}{\nabla_a \cL(w_t + \xi_t,a_t + \epsilon_t)}\neq \nabla_a \cL(w_t ,a_t)\quad\textrm{and}\quad\EE_{\xi_t, \epsilon_t}{\nabla_w \cL(w_t + \xi_t ,a_t + \epsilon_t)}\neq \nabla_w \cL(w_t  ,a_t).
\end{align*}
See detailed discussions in Section \ref{sec_discuss} and Appendix \ref{app_pre}.

Our algorithm also incorporates the noise annealing approach. Specifically, the noise annealing consists of multiple epochs with varying noise levels. Specifically, we use large noise in early epochs and gradually decrease the noise level, as the number of epoch increases. Since we sample the noise $\xi_t$ and $\epsilon_t$ uniformly from $\mathbb{B}_0(\rho_w)$ and $\mathbb{B}_0(\rho_a)$, respectively, we can directly control the noise level by controlling the radius of the ball, i.e., $\rho_w$ and $\rho_a$. One can easily verify
\begin{align*}
	& \norm{\xi_t}_2\leq\rho_w,~\EE{ \xi_t}=0,~\Cov{ \xi_t}=\frac{\rho_w^2}{p+2}I,~\norm{\epsilon_t}_2\leq\rho_a,~\EE{ \epsilon_t}=0~\text{and}~ \Cov{ \epsilon_t}=\frac{\rho_a^2}{k+2}I.
\end{align*}
We summarize the algorithm in Algorithm \ref{NMGD}. 
\begin{remark}
	Note that our arbitrary initialization is different from the random initialization in \citet{du2017gradient}, which requires $w_0 \sim \mathrm{unif}\left(\mathbb{S}_0 (1)\right)$ and $a_0 \sim\mathrm{unif}\left( \mathbb{B}_0 \left(\frac{|\one^\top a^*|}{\sqrt{k}}\right)\right).$ They need the randomness to avoid falling into the basin of attraction of the spurious local optimum. Our perturbed GD, however, can be guaranteed to escape the spurious local optimum. Thus, we initialize the algorithm arbitrarily.
\end{remark}
\begin{remark}[Convolutional Effects] We remark that the $s$-epoch of the perturbed GD can also be viewed as solving
	\begin{align}\label{smoothed_opt}
		\min_{\norm{w}_2=1,a}\EE_{\xi_s,\epsilon_s} \cL(w+\xi_s,a+\epsilon_s),
	\end{align}
	where $\xi_{s} \sim \mathrm{unif}\big(\mathbb{B}_0(\rho_w^s)\big)$ and $\epsilon_{s} \sim \mathrm{unif}\big(\mathbb{B}_0(\rho_a^s)\big)$. Therefore, the noise injection can be interpreted as convoluting the objective function with uniform kernels. Such a convolution makes the objective much smoother, and leads to a benign optimization landscape with respect to the global optimum of the original problem, as illustrated in Figure \ref{smoothing} (See more details in the next section).
\end{remark}

\begin{figure}[htb!]
	\centering
	\includegraphics[width=0.8\linewidth]{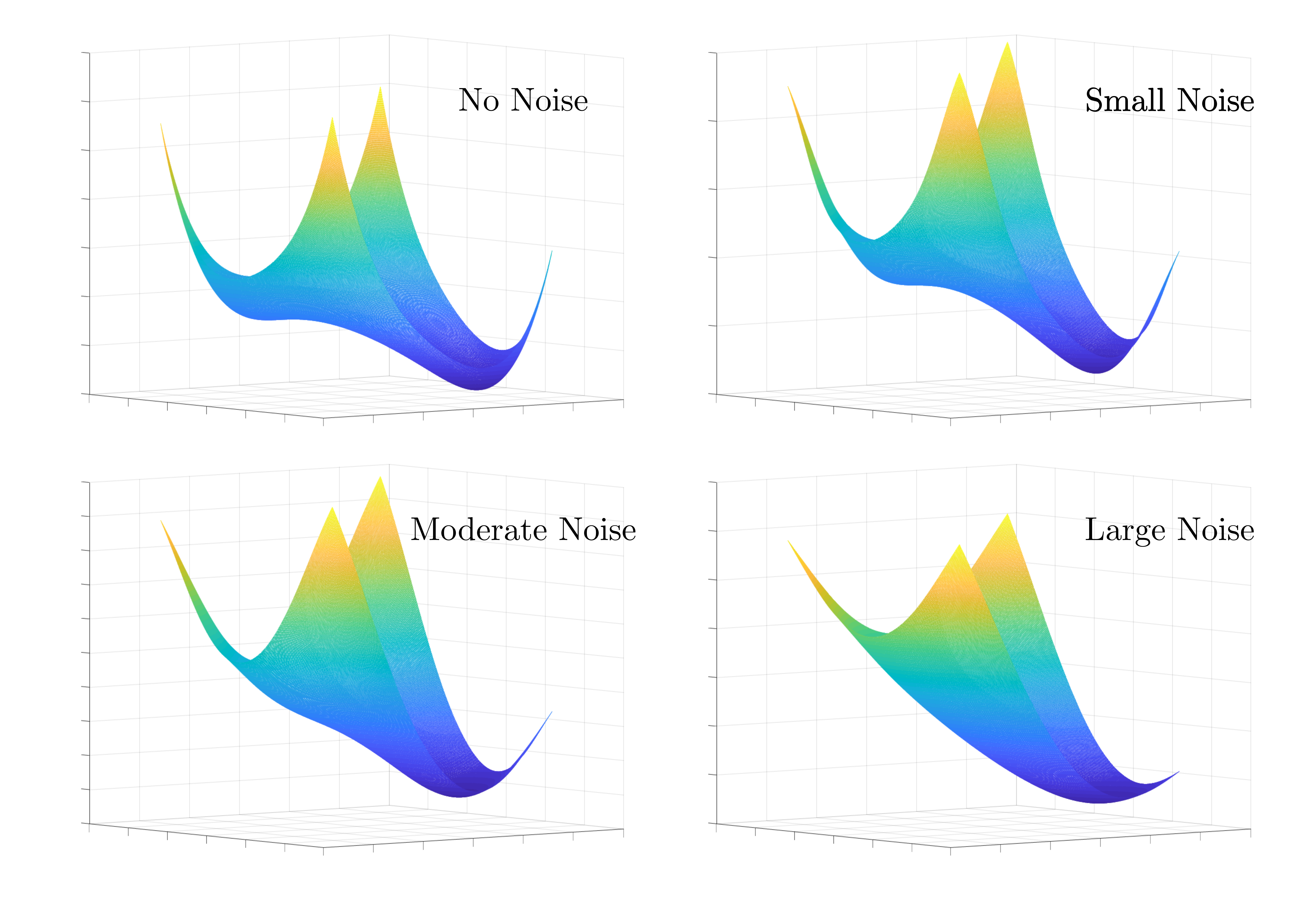}\label{smoothing}
	\caption{\it An illustration of the convolutional effects of the injected noise. Larger noise leads to smoother optimization landscapes, but also yields larger approximation errors to the original problem.}
\end{figure}

Note that the above convolution effect also introduces additional ``bias'' and ``variance'': (I) The global optimum of the smooth approximation \eqref{smoothed_opt} is different from the original problem; (II) The injected noise prevents the algorithm from converging. This is why we need to gradually decreasing the magnitude of the noise, which essentially guides the perturbed GD to gradually approach and eventually fall in the basin of attraction of the global optimum of the original problem (as illustrated in Figure \ref{BOA}).

\begin{algorithm}\label{alg:pgd}
	\setlength{\textfloatsep}{0pt}
	\DontPrintSemicolon
	\caption{\it Perturbed Gradient Descent Algorithm with Noise Annealing}
	\label{NMGD}
	\textbf{input:} number of epochs $S$, length of epochs $\{T_s\}_{s=1}^S$, learning rate schedule $\{\eta_s\}_{s=1}^S$ and noise level schedule $\{\rho_w^s\}_{s=1}^S$, $\{\rho_a^s\}_{s=1}^S$ \\
	\textbf{initialize}: choose any $w_0 \in \mathbb{S}_0 (1)$ and $a_0 \in \mathbb{B}_0 \big(\frac{|\one^\top a^*|}{\sqrt{k}}\big)$\\
	\For{$s = 1, \ldots, S$}{
		$w_{s,1} \leftarrow w_{0}$, $a_{s,1} \leftarrow a_{0}$ \\
		\For{$t = 1 \ldots T_s-1$}{
			$\xi_{s,t} \sim \mathrm{unif}\big(\mathbb{B}_0(\rho_w^s)\big)$ and $\epsilon_{s,t} \sim \mathrm{unif}\big(\mathbb{B}_0(\rho_a^s)\big)$\\
			$a_{s,t+1}   \leftarrow a_{s,t} - \eta_s \nabla_a \cL(w_{s,t}+\xi_{s,t},a_{s,t}+\epsilon_{s,t})$\\
			${w}_{s,t+1} \leftarrow \mathrm{Proj}_{\mathbb{S}_0(1)} \big(w_{s,t}-\eta_s (I-w_{s,t} w_{s,t}^\top)\cdot\nabla_w \cL(w_{s,t}+\xi_{s,t},a_{s,t}+\epsilon_{s,t})\big)$
		}
		$w_{0} \leftarrow w_{s,T_s}$, $a_{0} \leftarrow a_{s,T_s}$
	}
	\textbf{output:} $(w_{s,T_s}, \,\,  a_{s,T_s})$
\end{algorithm}

\begin{figure}[htb!]
	\centering
	\includegraphics[width=0.9\linewidth]{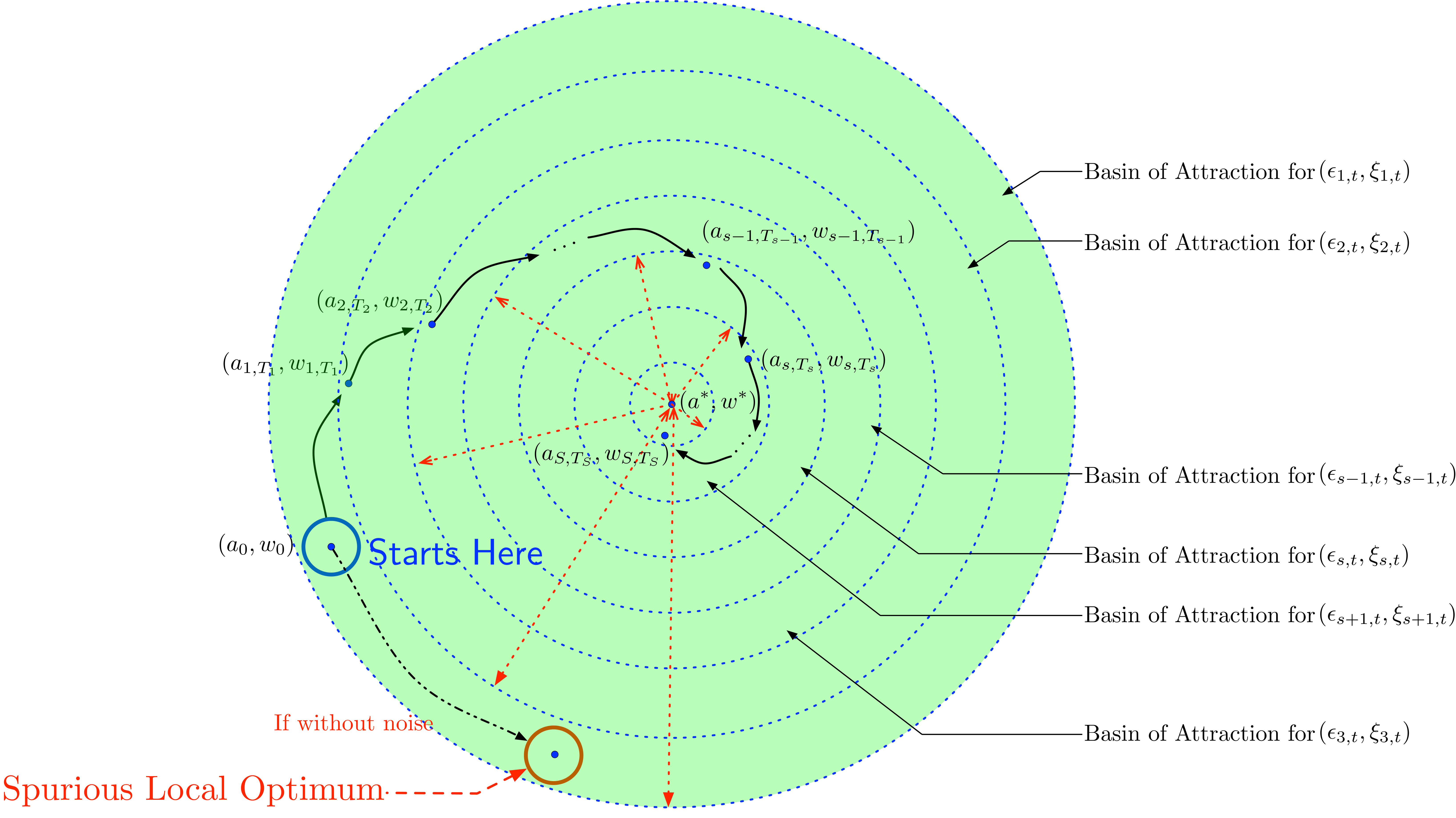}\label{BOA}
	\caption{\it An illustration of the noise injection in the perturbed GD algorithm. The injected noise with decreasing magnitude essentially guides the perturbed GD to gradually approach and eventually fall in the basin of attraction of the global optimum.}
\end{figure}

\section{Convergence Analysis}\label{sec_theory}
We investigate the algorithmic behavior of the proposed perturbed GD algorithm. Our analysis shows that the noise injected to the algorithm has a convolutional effect on the loss surface and makes the optimization landscape smoother, which tames non-convexity by avoiding  being trapped at the bad local optimum. Thus, our proposed algorithm can converge to the global one.

Our theory essentially reveals a phase transition as the magnitude of the injected noise decreases. For simplicity, our analysis only considers a two-epoch version of the proposed perturbed GD algorithm, but can be generalized to the multiple-epoch setting (See more detailed discussions in Section \ref{sec_discuss}). Specifically, the first epoch corresponds to {\bf Phase I}, and the proposed algorithm shows an escaping from the spurious local optimum phenomenon, as the injected noise is sufficiently large; The second epoch corresponds to {\bf  Phase II}, and the proposed algorithm demonstrates  convergence to the global optimum, as the injected noise is reduced.

Before we proceed with our main results, we first define the partial dissipative condition for an operator $\cH$ as follows.
\begin{definition}[Partial dissipativity]\label{SC}
	Let $\cM$ be a subset of $\{1,2,...,d\}$ with $|\cM|=m$, and $x_\cM$ be the subvector of $x\in \RR^d$ with all indices in $\cM$. For any operator  $\cH:~ \RR^d\rightarrow \RR^m,$ we say that $\cH$ is $(c_\cM,\gamma_\cM,\cM)$-partial dissipative with respect to (w.r.t.) the subset $\cX^*\subseteq \RR^d$ over the set $\cX\supseteq\cX^*$, if for every $x\in\cX$, there exist an $x^*\in\cX^*$ and two positive universal constants $c_\cM$ and $\gamma_{\cM}$ such
	that
	\begin{align}\label{con_sc}
		\langle-\cH(x),x_S^*-x_S\rangle\geq c_\cM \norm{x_S-x_S^*}_2^2-\gamma_{\cM}.
	\end{align}
	$\cX$ is called the partial dissipative region of the operator $\cH$ w.r.t. $x_{\cM}$.
\end{definition}

The partial dissipativity in definition \ref{SC} is actually a generalization of  the joint dissipativity from existing literature on studying attractors of dynamical systems \citep{barrera2015thermalisation}. To be specific, when $S=\{1,2,...,d\}$, partial dissipativity is reduced to dissipativity.  Here we are using the partial dissipativity, since our proposed algorithm can be viewed as a complicated dynamical system, and the global optimum is the target attractor.

The variational coherence studied in \cite{zhou2017stochastic} and one point convexity studied in \cite{kleinberg2018alternative} can be viewed as the special example of partial dissipativity. Specifically, they consider $\gamma=0$, the operator $\cH$ as the gradient of the objective function $f$ and $\cX^*$ as the set of all minimizers of $f$. More precisely, their conditions require $$\langle-\nabla f(x),x^*-x\rangle>c\norm{x-x^*}_2^2,$$ i.e., the negative gradient of the objective function to have a positive fraction pointing toward $\cX^*$, and therefore the gradient descent algorithm is guaranteed to make progress towards the optimum at every iteration. The variational coherence/one point convexity, though nice and intuitive, is a very strong assumption. For the optimization problem of our interest in \eqref{opt-recall}, such a condition does not hold even within a small neighborhood around the global optimum. Fortunately, we show that the problem enjoys partial dissipativity which is more general and can characterize more complicated structure of the problem.  Please see more discussion in Section \ref{sec_discuss}.


\subsection{Phase I: Escaping from the Local Optimum}
We first characterize the algorithmic behavior of our proposed algorithm in Phase I. Note that our proposed perturbed GD algorithm, different from GD, intentionally injects noise at each iteration, and the update is essentially based on the perturbed gradient.  The following theorem
characterizes the partial dissipativity of the perturbed gradient.
\begin{theorem}\label{thm_SC}
	Choose $\rho_w^0=C_w^0 kp^2\geq 1$ and $\rho^0_a=C_a^0$ for large enough constants $C_w$ and $C_a$. Let $\xi\sim\mathrm{unif}(\BB_0(\rho_w^0))$ and $\epsilon\sim\mathrm{unif}(\BB_0(\rho_a^0))$. There exist some constants $C_1$ and $C_2$ such that the perturbed gradient of $L$ w.r.t. $a$ satisfies
	\begin{align*}
		\langle-\mathbb{E}_{\xi,\epsilon}\nabla_a\cL(w+\xi,a+\epsilon),{a}^*-a\rangle
		\geq \frac{C_{1}}{p} \norm{a-{a}^*}_2^2
	\end{align*}
	for any $(w,a)\in\cA_{C_2,C_3}$, where
	\begin{align*}
		\cA_{C_2,C_3}=\big\{(w,a)~\big| a^\top a^*\leq \frac{C_2}{p}\norm{a^*}_2^2 \text{ or}~
		\norm{a-a^*/2}_2^2\geq\norm{a^*}_2^2,~ &w\in\SSS_0(1),\\
		&~-4(\one^\top{a}^*)^2\leq \one^\top {a}^*\one^\top a - (\one^\top {a}^*)^2
		\leq  \frac{C_{3}}{p}\norm{{a}^*}_2^2\big\}.
	\end{align*}
	Moreover, for any $C_4\in(-1,1]$ and $M>m>0$, there exists some constant $C_5$ such that the perturbed manifold gradient of $L$ w.r.t. $w$ satisfies
	\begin{align*}
		\langle-\mathbb{E}_{\xi,\epsilon}{(I-ww^\top)}\nabla_w\cL(w+\xi,a+\epsilon),{w}^*-w\rangle \geq \frac{m(1+C_4)}{16}\norm{w-{w}^*}_2^2-C_5\frac{k}{\rho_w},
	\end{align*}
	for any $(w,a)\in\mathcal{K}_{C_4,m,M}$, where
	\begin{align*}
		\mathcal{K}_{C_4,m,M}=\left\{(w,a)~\big|~ a^\top {a}^*\in[m,M],~ w^\top w^*\geq C_4,~w\in\SSS_0(1)\right\}.
	\end{align*}
\end{theorem}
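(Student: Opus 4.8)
The plan is to compute the two perturbed (manifold) gradients explicitly using the closed forms for $\nabla_a\cL$ and $\nabla_w\cL$ that are available for this two-layer non-overlapping CNN with Gaussian inputs, and then show that after averaging over the uniform noise the required inner-product lower bounds hold on the prescribed regions $\cA_{C_2,C_3}$ and $\cK_{C_4,m,M}$. For the first bound I would start from the exact gradient $\nabla_a\cL(w,a) = (\one\one^\top + (\pi - 1)I)a/(2\pi) - g(\angle(w,w^\star))\, a^\star$ type expression (the standard formula arising from $\EE_Z[\sigma(Z^\top w)\sigma(Z^\top w^\star)^\top]$ for unit vectors), take expectation over $\epsilon$ — which, since $\nabla_a\cL$ is affine in $a$, simply replaces $a$ by $a$ while inflating nothing — and over $\xi$, which only enters through the angle $\angle(w+\xi,w^\star)$ and through the normalization. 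The key point is that with $\rho_w^0$ of order $kp^2$ the direction $w+\xi$ is essentially uniformly distributed over a huge ball, so $\EE_\xi[(w+\xi)/\|w+\xi\|]$ concentrates near zero and $\EE_\xi[\cos\angle(w+\xi,w^\star)]$ is $O(1/\rho_w)=o(1/p)$; hence the $a^\star$-component of the averaged gradient becomes negligible and the inner product $\langle-\EE\nabla_a\cL,\,a^\star-a\rangle$ is dominated by the quadratic form $\langle (\one\one^\top + (\pi-1)I)a,\,a - a^\star\rangle/(2\pi)$. On the region $\cA_{C_2,C_3}$ — where either $a^\top a^\star$ is small or $\|a-a^\star/2\|$ is large, together with the two-sided control on $\one^\top a^\star\,\one^\top a$ — this quadratic form is bounded below by $(C_1/p)\|a-a^\star\|_2^2$; this is essentially a spectral/linear-algebra computation splitting $a-a^\star$ into its component along $\one$ and its orthogonal complement, using $\pi-1>0$ for the latter and the $\one^\top$-constraints plus the smallness of $a^\top a^\star$ for the former.

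For the second bound I would use the closed form of the manifold gradient $(I-ww^\top)\nabla_w\cL$, which for Gaussian inputs takes the form $(I-ww^\top)\big[\phi_1(\theta)\,w + \phi_2(\theta)\,w^\star\big]$ with $\theta=\angle(w,w^\star)$ and coefficients $\phi_i$ depending on the inner products $a^\top a,\,a^\top a^\star,\,a^{\star\top}a^\star$ and on $\theta$; after projection this is proportional to $(a^\top a^\star)\,h(\theta)\,(I-ww^\top)w^\star$ plus lower-order terms. Averaging over $\epsilon$ keeps $a^\top a^\star$ up to an $O(\rho_a^2)$ additive perturbation, and averaging over $\xi$ smooths $\theta$; on $\cK_{C_4,m,M}$ we have $a^\top a^\star\in[m,M]$ and $w^\top w^\star\ge C_4$, so $h(\theta)>0$ is bounded away from $0$ and $\langle (I-ww^\top)w^\star,\,w^\star-w\rangle = 1 - (w^\top w^\star)^2 \asymp (1+w^\top w^\star)\|w-w^\star\|_2^2/2 \ge (1+C_4)\|w-w^\star\|_2^2/2$. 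Collecting constants gives the claimed $\frac{m(1+C_4)}{16}\|w-w^\star\|_2^2$ main term; the residual $-C_5 k/\rho_w$ captures (i) the $O(\rho_a^2)$ slack in $a^\top a^\star$, bounded since $\rho_a^0$ is a constant, and (ii) the error from replacing $\EE_\xi$ of the normalized perturbed direction by its noiseless counterpart, which is controlled by the $1/\rho_w$ concentration estimate scaled by the dimension $k$.

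The main obstacle I anticipate is the careful control of the $\xi$-average when $w+\xi$ is renormalized onto the sphere: the map $\xi\mapsto (w+\xi)/\|w+\xi\|$ is nonlinear and the closed-form loss depends on it through trigonometric functions of the angle, so proving the $O(1/\rho_w)$ (resp. $O(1/p)$ after plugging $\rho_w^0=C_w^0 kp^2$) bounds on $\EE_\xi[\cos\angle(w+\xi,w^\star)]$ and on $\|\EE_\xi[(w+\xi)/\|w+\xi\|]\|$ requires a genuine concentration argument — e.g. splitting the ball $\BB_0(\rho_w)$ into the bulk where $\|\xi\|\gtrsim \rho_w/2$ (on which the unit vector is nearly uniform on the sphere by rotational symmetry, so its expectation and the expected cosine are tiny) and a small-probability remainder. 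A secondary technical point is that the regions $\cA_{C_2,C_3}$ and $\cK_{C_4,m,M}$ are designed precisely so that the dominant quadratic forms are positive definite in the relevant directions; verifying the required positivity of the spectral quantities (the $a^\star$-directional term under the $\one^\top a^\star\,\one^\top a$ constraints, and the coefficient $h(\theta)$ under $w^\top w^\star\ge C_4$) is where most of the bookkeeping lives, but it is routine once the averaged gradients are in hand. I would organize the write-up as: (1) recall the exact gradient formulas; (2) the two $\xi$-concentration lemmas; (3) the $a$-bound on $\cA_{C_2,C_3}$; (4) the $w$-bound on $\cK_{C_4,m,M}$; (5) assemble constants.
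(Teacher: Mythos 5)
Your treatment of the $w$-part is broadly in line with the paper's argument (main term $\frac{a^\top a^*(\pi-\mathbb{E}_\xi\phi_\xi)}{2\pi}\big(1-(w^\top w^*)^2\big)$ plus an $O(k/\rho_w)$ residual handled by a symmetry cancellation and a shell estimate for the $\xi$-dependent terms), but the $a$-part of your proposal has a genuine gap. You claim that after averaging over $\xi$ the $a^*$-component of $\nabla_a\cL$ becomes negligible, so that the inner product is dominated by the quadratic form $\frac{1}{2\pi}\langle(\one\one^\top+(\pi-1)I)a,\,a-a^*\rangle$, which you then assert is bounded below by $\frac{C_1}{p}\norm{a-a^*}_2^2$ on $\cA_{C_2,C_3}$. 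That assertion is false: the point $a=0$ lies in $\cA_{C_2,C_3}$ (it satisfies $a^\top a^*\le \frac{C_2}{p}\norm{a^*}_2^2$ and the $\one^\top$-constraints), yet the quadratic form vanishes there while the required lower bound is $\frac{C_1}{p}\norm{a^*}_2^2>0$. In other words, the positive margin cannot come from the $a$-quadratic after discarding the $a^*$ term; it must come from the $a^*$ term itself. In the paper, the averaged gradient is $\frac{1}{2\pi}(\one\one^\top+(\pi-1)I)a-\frac{1}{2\pi}(\one\one^\top+(\mathbb{E}_\xi g(\phi_\xi)-1)I)a^*$ with $g(\phi)=(\pi-\phi)\cos\phi+\sin\phi$, and the whole point of the large-noise regime $\rho_w\gtrsim p^2$ is that $\mathbb{E}_\xi g(\phi_\xi)\ge 1+C/p$ uniformly in $\phi$ (together with $\mathbb{E}_\xi g(\phi_\xi)\le\pi$ and $\mathbb{E}_\xi\phi_\xi\le 3\pi/4$); this is Lemma \ref{lem_gphi_1}, proved by an exact polar-coordinate/Gamma-function computation, and the margin $\mathbb{E}_\xi g(\phi_\xi)-1=\Theta(1/p)$ is precisely the $1/p$ in the theorem. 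Your estimate $\mathbb{E}_\xi[\cos\angle(w+\xi,w^*)]=O(1/\rho_w)$ does not capture this: $g$ also contains $\sin\phi_\xi$ and $(\pi-\phi_\xi)\cos\phi_\xi$, and for a nearly uniform direction $\mathbb{E}_\xi g(\phi_\xi)-1$ is of order $1/p$, not $o(1/p)$, with a sign that is essential.

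The correct route (the paper's) is to keep the $a^*$ term, complete the square to get the nonnegative contribution $\frac{1}{2\pi}(\one^\top a-\one^\top a^*)^2$, and decompose $(\pi-1)a-(\mathbb{E}_\xi g(\phi_\xi)-1)a^*=(\pi-\mathbb{E}_\xi g(\phi_\xi))a+(\mathbb{E}_\xi g(\phi_\xi)-1)(a-a^*)$, so the coefficient of $\norm{a-a^*}_2^2$ is $\frac{\mathbb{E}_\xi g(\phi_\xi)-1}{2\pi}\ge\frac{C}{2\pi p}$, while the cross term $(\pi-\mathbb{E}_\xi g(\phi_\xi))\,a^\top(a-a^*)$ is controlled exactly by the two alternatives defining $\cA_{C_2,C_3}$: either $a^\top a^*\le\frac{C_2}{p}\norm{a^*}_2^2$ (so $a^\top(a-a^*)\ge-\frac{C_2}{p}\norm{a^*}_2^2$ and $\norm{a-a^*}_2^2\gtrsim\norm{a^*}_2^2$), or $\norm{a-a^*/2}_2^2\ge\norm{a^*}_2^2$ (so $a^\top(a-a^*)\ge0$). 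Without establishing the uniform two-sided bound on $\mathbb{E}_\xi g(\phi_\xi)$ — the key smoothing lemma your sketch does not identify — the $a$-part of the theorem does not follow. (A minor further note on your $w$-part: the $O(\rho_a^2)$ slack you budget for is not needed, since $\epsilon$ is mean-zero and independent of $\xi$, so those terms vanish exactly; what does need work is the exact reflection-symmetry identity killing $\mathbb{E}_\xi\big[(w^*-w^\top w^*w)^\top\xi/\norm{w+\xi}_2\big]$ and the bound keeping $\mathbb{E}_\xi\phi_\xi$ away from $\pi$ despite the huge noise.)
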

The detailed proof of Theorem \ref{thm_SC} is provided in Appendix \ref{pf_SC}. 
Theorem \ref{thm_SC} shows that the partial dissipativity holds for the perturbed gradient of $L$ with respect to $a$ over $\cA_{C_2,C_3},$ and the partial dissipativity holds for the perturbed manifold gradient of $L$ with respect to $w$ over $\mathcal{K}_{C_4,m,M}$, respectively. Note that the joint dissipativity can hold but only over a smaller set $\cA_{C_2,C_3}\cap\cK_{C_4,m,M}$. 
Fortunately, the partial dissipativity is enough to ensure our proposed algorithm to make progress at every iteration, even though the joint dissipativity does not hold.  As a result, our proposed algorithm can avoid being trapped by the spurious local optimum. For simplicity, we denote $\phi_t$ as the angle between $w_t$ and $w^*,$ i.e., $\phi_t=\angle(w_t,w^*).$ The next theorem analyzes the algorithmic behavior of the perturbed GD algorithm in Phase I.
\begin{theorem}\label{stage1_w}
	Suppose $\rho_w^0=C_w^0 kp^2\geq 1, ~\rho_a^0=C_a^0, ~  a_0\in\BB_0\Big(\frac{|\one^\top{a^*}|}{\sqrt{k}}\Big)$ and $w_0\in \SSS_0(1).$
	For any $\delta\in(0,1)$, we choose step size $$\eta=C_6\Big(k^4p^6\cdot\max\Big\{1,p\log\frac{1}{\delta}\Big\}\Big)^{-1}$$ for some constant $C_6$. Then with at least probability $1-\delta$, we have
	\begin{align}\label{stage1_output}
		m_a\leq a_t^\top{a}^* \leq M_a\quad\text{and}\quad\phi_t\leq \frac{5}{12}\pi
	\end{align}
	for all $T_1\leq t\leq \tilde{O}(\eta^{-2})$, where $m_a=C_4\norm{{a}^*}_2^2/p$, $M_a=4(\one^\top{a}^*)^2  + (3+C_7/p))\norm{{a}^*}_2^2$ for some constants $C_4$ and $C_7$, and $$T_1
	=\tilde{O}\Big(\frac{p}{\eta}\log\frac{1}{\eta}\log\frac{1}{\delta}\Big).$$
\end{theorem}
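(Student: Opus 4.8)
The plan is to control the two scalar progress measures $a_t^\top a^*$ (equivalently $\norm{a_t-a^*}_2^2$) and $\phi_t=\angle(w_t,w^*)$ (equivalently $w_t^\top w^*$), turning the partial dissipativity of Theorem~\ref{thm_SC} into one-step drift inequalities for each and then converting these drifts into high-probability trajectory bounds by absorbing the randomness of the injected perturbations $\xi_t,\epsilon_t$ through an Azuma--Hoeffding argument on the associated martingale differences. Because Theorem~\ref{thm_SC} supplies drift only on the restricted sets $\cA_{C_2,C_3}$ and $\mathcal{K}_{C_4,m,M}$, the whole argument is organized around stopping times: let $\tau$ be the first iteration at which the iterate leaves the relevant set, run the contraction up to $\tau$, and show $\tau$ exceeds the horizon $\tilde{O}(\eta^{-2})$ with probability $1-\delta$. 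Within Phase~I there are two sub-stages run essentially in sequence: first $a_t$ converges into the band $[m_a,M_a]$, and then, with $a_t^\top a^*$ stabilized, the angle $\phi_t$ is driven below $\frac{5}{12}\pi$.

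\emph{Output-layer iterate.} Writing $\{\mathcal{F}_t\}$ for the natural filtration and expanding $a_{t+1}-a^*=a_t-a^*-\eta\nabla_a\cL(w_t+\xi_t,a_t+\epsilon_t)$,
\begin{align*}
\EE\!\left[\norm{a_{t+1}-a^*}_2^2\mid\mathcal{F}_t\right]
&=\norm{a_t-a^*}_2^2-2\eta\left\langle\EE_{\xi_t,\epsilon_t}\nabla_a\cL(w_t+\xi_t,a_t+\epsilon_t),\,a_t-a^*\right\rangle\\
&\quad+\eta^2\,\EE_{\xi_t,\epsilon_t}\norm{\nabla_a\cL(w_t+\xi_t,a_t+\epsilon_t)}_2^2 .
\end{align*}
On $\cA_{C_2,C_3}$ the first part of Theorem~\ref{thm_SC} bounds the middle term by $-\frac{2\eta C_1}{p}\norm{a_t-a^*}_2^2$, while the explicit form of $\nabla_a\cL$ together with $\norm{\xi_t}_2\le\rho_w^0$ and $\norm{\epsilon_t}_2\le\rho_a^0$ bounds the last term by $\eta^2 B$ with $B=\mathrm{poly}(k,p)(1+\norm{a_t}_2^2)$ (and $\norm{a_t}_2$ stays bounded because the clause $\norm{a-a^*/2}_2^2\ge\norm{a^*}_2^2$ makes the drift pull $a_t$ back whenever it grows), so $\EE[\norm{a_{t+1}-a^*}_2^2\mid\mathcal{F}_t]\le(1-\frac{2\eta C_1}{p})\norm{a_t-a^*}_2^2+\eta^2 B$ whenever $(w_t,a_t)\in\cA_{C_2,C_3}$. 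Membership in $\cA_{C_2,C_3}$ depends on $a_t$ alone (the sphere constraint on $w_t$ always holds), up to the band $-4(\one^\top a^*)^2\le\one^\top a^*\,\one^\top a_t-(\one^\top a^*)^2\le\frac{C_3}{p}\norm{a^*}_2^2$; this band is self-maintaining, since $\one^\top\nabla_a\cL(w,a)$ depends on $a$ only through $\one^\top a$, so $\one^\top a_t$ obeys a one-dimensional affine recursion contracting toward an interior point, and the initialization $\norm{a_0}_2\le|\one^\top a^*|/\sqrt{k}$ already gives $|\one^\top a_0|\le|\one^\top a^*|$. Iterating the contraction from any starting point in $\cA_{C_2,C_3}$, and choosing $\eta$ as in the statement so that the stationary level $\frac{\eta Bp}{2C_1}$ is a small constant multiple of $\norm{a^*}_2^2$, after $T_1=\tilde{O}\big(\frac{p}{\eta}\log\frac1\eta\log\frac1\delta\big)$ iterations we get $\norm{a_t-a^*}_2^2\le c_0\norm{a^*}_2^2$ for a small absolute constant $c_0$, which forces $a_t$ out of $\cA_{C_2,C_3}$ with $m_a\le a_t^\top a^*\le M_a$. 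This band is then preserved for all later $t\le\tilde{O}(\eta^{-2})$ because $\cA_{C_2,C_3}$ ``wraps'' its own complement: if $a_t^\top a^*$ drifts down toward $m_a$ the iterate re-enters through the clause $a^\top a^*\le\frac{C_2}{p}\norm{a^*}_2^2$, and if it drifts up it re-enters through $\norm{a-a^*/2}_2^2\ge\norm{a^*}_2^2$, so the drift always points back inward.

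\emph{Convolutional weight.} Since the tangent step $g_t=(I-w_tw_t^\top)\nabla_w\cL(w_t+\xi_t,a_t+\epsilon_t)$ is orthogonal to $w_t$, the projected update is $w_{t+1}=(w_t-\eta g_t)/\sqrt{1+\eta^2\norm{g_t}_2^2}$, and because $\langle(I-ww^\top)v,w\rangle=0$ the conditional drift of $w_t^\top w^*$ equals $\eta\,\langle-\EE_{\xi_t,\epsilon_t}(I-w_tw_t^\top)\nabla_w\cL(w_t+\xi_t,a_t+\epsilon_t),\,w^*-w_t\rangle$ up to an $O(\eta^2)$ remainder. On $\mathcal{K}_{C_4,m,M}$ the second part of Theorem~\ref{thm_SC} bounds this below by $\frac{\eta m_a(1+w_t^\top w^*)}{16}-\frac{\eta C_5 k}{\rho_w}$, and since $\rho_w^0=C_w^0 kp^2$ the error $C_5 k/\rho_w=C_5/(C_w^0p^2)$ is negligible against the contraction, so $w_t^\top w^*$ is effectively a submartingale and cannot approach the spurious antipode once bounded away from it. The delicate point is that the $w$-drift of Theorem~\ref{thm_SC} requires $a_t^\top a^*\in[m_a,M_a]$, which only holds after $T_1$; for the earlier segment one must instead use the large noise directly to keep $w_t^\top w^*$ above a fixed level $\kappa>-1$ (convolving with a ball of radius $C_w^0 kp^2$ smooths away the attraction toward $-w^*$ regardless of $a_t$), so that at time $T_1$ the contraction rate $\frac{m_a(1+\kappa)}{16}$ is of order $1/p$ rather than degenerate. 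Granting this, a further $\tilde{O}\big(\frac{p}{\eta}\log\frac1\delta\big)$ iterations raise $w_t^\top w^*$ past $\cos(5\pi/12)$, i.e. $\phi_t\le\frac{5}{12}\pi$, and since the rate only improves afterward, the inequality persists for the remaining horizon.

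\emph{Concentration and main obstacle.} In both recursions the realized increment minus its conditional expectation is a martingale difference bounded by $\mathrm{poly}(k,p)\cdot\eta$ uniformly over the dissipative sets, so Azuma--Hoeffding over $\tilde{O}(\eta^{-2})$ steps, with the stated $\eta=C_6(k^4p^6\max\{1,p\log\frac1\delta\})^{-1}$, keeps the accumulated fluctuation below the margins of the good regions with probability $1-\delta$; a union bound over the $O(1)$ bad events (band maintained, $a$-band reached and kept, $w_t^\top w^*$ kept above $\kappa$ and then raised past $\cos(5\pi/12)$) finishes the proof. I expect the main obstacle to be exactly this bookkeeping around \emph{restricted} dissipativity: one must show the trajectory never leaves $\cA_{C_2,C_3}$ or $\mathcal{K}_{C_4,m,M}$ prematurely, which requires region-uniform bounds on the perturbed gradients and on the martingale increments, the self-maintenance of the $\one^\top a$ band, and --- hardest of all --- control of the weight $w_t$ during the $\Theta(p/\eta)$ iterations before $a_t^\top a^*$ turns positive, a regime where Theorem~\ref{thm_SC} is silent and one must argue directly that the large injected noise prevents $w_t$ from being pulled toward $-w^*$.
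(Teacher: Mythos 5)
Your overall architecture — one-step drift from the partial dissipativity of Theorem~\ref{thm_SC}, stopping-time bookkeeping for the restricted regions, Azuma--Hoeffding for the injected-noise fluctuations, and the ordering ``first stabilize $a_t^\top a^*$, then drive $\phi_t$ down'' — matches the paper's proof (Lemmas~\ref{lem_initial}, \ref{stage1_a}, \ref{stage-nice} and the supporting band lemmas for $\one^\top a_t$ and $a_t^\top a^*$). The gap is in the $w$-component, and it is exactly the point you flag as ``hardest of all'' but then resolve by assertion: you claim the large noise keeps $w_t^\top w^*\ge\kappa>-1$ throughout the early segment ``regardless of $a_t$.'' No mechanism supports this. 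When $a_t^\top a^*<0$ (which is the generic situation before the $a$-band is reached, e.g.\ near the spurious optimum where $\tilde a^\top a^*<0$), the expected tangential drift $\frac{\eta}{2\pi}(1-(w_t^\top w^*)^2)\,a_t^\top a^*(\pi-\EE_\xi\phi_\xi)$ is \emph{negative}, i.e.\ it pushes $w_t$ toward $-w^*$, and the noise-induced cross terms have expectation zero or a sign that does not help (this is precisely what \eqref{eq1} and \eqref{ineq2} compute). Moreover the theorem allows arbitrary initialization, e.g.\ $w_0=-w^*$ exactly, where any ``stay above $\kappa$'' statement is vacuous without first proving escape from the antipode; at $w=-w^*$ the dissipativity-type pull vanishes since $1-(w^\top w^*)^2=0$.

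The paper does not prevent $w_t$ from approaching $-w^*$; it proves escape \emph{after} the $a$-band $a_t^\top a^*\ge m_a>0$ is in force (Lemma~\ref{lemma_w_escape}): first, the large perturbation guarantees a gradient of size $\Omega(\rho_w)$ within $O(\log\frac1\delta)$ steps, giving a kick $1+w_t^\top w^*=\Omega(\eta^2\rho_w^2)$; second, one shows the multiplicative recursion $\EE[\,1+w_{t+1}^\top w^*\mid\cF_t]\ge(1+C\eta/p)(1+w_t^\top w^*)$, which uses $a_t^\top a^*\ge m_a>0$, the uniform bound $\pi-\EE_\xi\phi_\xi\ge\pi/4$ from Lemma~\ref{lem_gphi_1}, and the sign control of the cross term in \eqref{ineq2}. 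Growing from $\Omega(\eta^2\rho_w^2)$ to a constant is what produces the $\log\frac1\eta$ factor in $T_1$ — a factor your time accounting reproduces but does not explain, since a constant-gap-plus-linear-contraction argument of the kind you sketch would only give $\tilde O(\frac{p}{\eta}\log\frac1\delta)$. To close the gap you would need either to prove your maintenance claim (which appears false as stated) or to add the kick-plus-multiplicative-growth step; the rest of your proposal (the $a$-band self-maintenance via the one-dimensional $\one^\top a_t$ recursion and sub/supermartingale arguments, and the final descent of $\phi_t$ inside $\cK_{C_4,m_a,M_a}$) is essentially the paper's route.
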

Theorem \ref{stage1_w} shows that Phase I of our perturbed GD algorithm only needs polynomial time to ensure  the output solution to be sufficiently distant from the spurious local optimum with high probability. Due to the large injected noise, Phase I cannot output a very accurate solution. 

The proof of Theorem \ref{stage1_w} is very technical and highly involved. Here, we only provide a proof sketch. See more details and the proof of all technical lemmas in Appendix \ref{pf_phase I}.

\begin{proof}[\textbf{\textit{Proof Sketch}}]
	The next lemma shows that that our initialization  $(w_0,a_0)$ is guaranteed to fall in a superset of $\cA_{C_2,C_3}.$
	\begin{lemma}\label{lem_initial}
		Given $a_0 \in  \mathbb{B}_0\left(\frac{|\one^\top{a^*}|}{\sqrt{k}}\right)$ and $w_0\in\SSS_0(1)$, we have for any constant $C_3>0,$ $(w_0,a_0)\in\cA_{C_3},$ where \begin{align*}
			\cA_{C_3}=\big\{(w,a)~\big|~-4(\one^\top{a}^*)^2\leq &\one^\top {a}^*\one^\top a - (\one^\top {a}^*)^2\leq  \frac{C_{3}}{p}\norm{{a}^*}_2^2,~ w\in\SSS_0(1)\big\}.
	\end{align*}\end{lemma}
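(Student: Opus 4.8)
The plan is to read off the membership $(w_0,a_0)\in\cA_{C_3}$ directly from the initialization constraint; it reduces to a single application of Cauchy--Schwarz. The set $\cA_{C_3}$ imposes two requirements: (i) $w\in\SSS_0(1)$, and (ii) the scalar $q(a):=\one^\top a^*\,\one^\top a-(\one^\top a^*)^2$ lies in $\big[-4(\one^\top a^*)^2,\ \tfrac{C_3}{p}\norm{a^*}_2^2\big]$. Requirement (i) holds by hypothesis, since the initialization already satisfies $w_0\in\SSS_0(1)$; so the entire content of the lemma is to verify (ii) at $a=a_0$.

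First I would control $|\one^\top a_0|$: by Cauchy--Schwarz together with the initialization bound $\norm{a_0}_2\le |\one^\top a^*|/\sqrt{k}$,
\begin{align*}
	|\one^\top a_0|\ \le\ \norm{\one}_2\,\norm{a_0}_2\ =\ \sqrt{k}\,\norm{a_0}_2\ \le\ \sqrt{k}\cdot\frac{|\one^\top a^*|}{\sqrt{k}}\ =\ |\one^\top a^*|.
\end{align*}
Hence $-(\one^\top a^*)^2\le -|\one^\top a^*|\,|\one^\top a_0|\le \one^\top a^*\,\one^\top a_0\le |\one^\top a^*|\,|\one^\top a_0|\le (\one^\top a^*)^2$.

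Substituting these two inequalities into $q(a_0)$ gives $-2(\one^\top a^*)^2\le q(a_0)\le 0$. The upper bound is $\le 0\le \tfrac{C_3}{p}\norm{a^*}_2^2$ for \emph{every} constant $C_3>0$ — this is the source of the ``for any $C_3$'' in the statement: at initialization the relevant quantity is in fact nonpositive — and the lower bound obeys $-2(\one^\top a^*)^2\ge -4(\one^\top a^*)^2$. Both defining inequalities of $\cA_{C_3}$ therefore hold for $(w_0,a_0)$, completing the argument. There is no genuine obstacle here: the lemma is a warm-up whose only purpose is to certify that an arbitrary (or random) initialization lands in the superset $\cA_{C_3}$ of the partial dissipative region $\cA_{C_2,C_3}$ of Theorem~\ref{thm_SC}, so that the Phase~I analysis of the $a$-dynamics can be bootstrapped; the only point worth checking carefully is the sign bookkeeping that makes the upper bound hold uniformly in $C_3$ (and the degenerate case $\one^\top a^*=0$, where Cauchy--Schwarz forces $a_0=0$ and all quantities vanish).
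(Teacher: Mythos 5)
Your proof is correct and essentially identical to the paper's: the paper bounds $|\one^\top a_0|\leq\norm{a_0}_1\leq\sqrt{k}\norm{a_0}_2\leq|\one^\top a^*|$ and concludes $-2(\one^\top a^*)^2\leq \one^\top a^*\one^\top a_0-(\one^\top a^*)^2\leq 0$, which is exactly your Cauchy--Schwarz chain and sign bookkeeping. Nothing is missing.
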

	
	Our subsequent analysis considers two cases: Case (1) $(w_0,a_0)\in\cA_{C_2,C_3}$ and Case (2) $(w_0,a_0)\in\cA_{C_3}\backslash\cA_{C_2,C_3}$. Specifically, we first start with Case (1), and then show the algorithm will be able to escape from $\cA_{C_2,C_3}$ in polynomial time and enter $\cA_{C_3}\backslash\cA_{C_2,C_3}$. Then we only need to proceed with Case (2).
	
	Note that for $\cA_{C_2,C_3}$, the dissipativity holds only for the perturbed gradient with respect to $a.$ Though the dissipativity does not necessarily hold for $w$,  we can show that the noise injection procedure guarantees a sufficiently accurate $w$ for making progress in $a$, as shown in the next lemma.
	\begin{lemma}\label{lem_gphi_1}
		Suppose $w,{w}^*\in\mathds{S}_0(1)$ and $\xi\sim\mathrm{unif}(\mathbb{B}_0(\rho_w))\in\mathbb{R}^p$. Define $\phi_\xi\triangleq\angle(w+\xi,{w}^*)\in [0,\pi]$,  $\phi\triangleq\angle(w,{w}^*)$ and $g(\phi)=(\pi-\phi)cos(\phi)+sin(\phi).$ When $\rho_w\geq C_6p^2$ for some constant $C_6$,  there exists some constant  $C_8$ such that 
		\begin{align*}
			1+\frac{C_8}{p}\leq \mathbb{E}_\xi g(\phi_\xi) \leq \pi\quad\mathrm{and}\quad\mathbb{E}_\xi\phi_\xi\leq \frac{3\pi}{4}	
		\end{align*}
		for all $\phi\in[0,\pi].$
	\end{lemma}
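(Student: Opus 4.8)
The plan is to exploit that the noise radius $\rho_w\geq C_6p^2$ dwarfs $\norm{w}_2=1$, so that $w+\xi$ is, up to a perturbation of size $O(\rho_w^{-1})$, a \emph{uniformly random direction}; the angle of such a direction against the fixed $w^*$ concentrates at $\pi/2$ where $g(\pi/2)=1$, and the excess $C_8/p$ will come entirely from the $\Theta(1/\sqrt p)$ fluctuation of that angle. Concretely I would write $\xi=r\Theta$ with $r=\norm{\xi}_2$ and $\Theta=\xi/\norm{\xi}_2$; for $\xi\sim\mathrm{unif}(\mathbb{B}_0(\rho_w))$ these are independent, $\Theta\sim\mathrm{unif}(\mathbb{S}_0(1))$, and $\mathbb{P}(r<\rho_w/2)=2^{-p}$. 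Picking an orthonormal basis whose first vector is $w^*$, write $w=\cos\phi\,e_1+\sin\phi\,e_2$ and $\Theta_1=\langle\Theta,w^*\rangle$, so
\begin{align*}
\cos\phi_\xi=\frac{\cos\phi+r\Theta_1}{\sqrt{1+2r\langle w,\Theta\rangle+r^2}}.
\end{align*}
On $\{r\geq\rho_w/2\}$, expanding $(1+s)^{-1/2}$ in $s=2\langle w,\Theta\rangle/r+r^{-2}$ (valid since $|s|\leq 3/r\leq 6/\rho_w\leq\tfrac12$) gives $\cos\phi_\xi=\Theta_1+E$ with $|E|\leq 6/\rho_w$. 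Since $0\leq g\leq\pi$ on $[0,\pi]$, the event $\{r<\rho_w/2\}$ changes $\mathbb{E}g(\phi_\xi)$ by at most $\pi 2^{-p}$, so it suffices to analyze $g(\arccos(\Theta_1+E))$ on $\{r\geq\rho_w/2\}$.

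\textbf{Step 2 (the unperturbed expectation).} Reflection through the hyperplane orthogonal to $w^*$ shows $\Theta_1\stackrel{d}{=}-\Theta_1$, hence $X:=\arcsin\Theta_1$ is symmetric about $0$ and $\arccos\Theta_1=\pi/2-X$ is symmetric about $\pi/2$. Using the elementary identity $g(\pi/2-x)+g(\pi/2+x)=2x\sin x+2\cos x$ together with this symmetry,
\begin{align*}
\mathbb{E}\,g(\arccos\Theta_1)=\mathbb{E}\big[\psi(|X|)\big],\qquad \psi(t):=t\sin t+\cos t .
\end{align*}
Jordan's inequality $\sin t\geq\tfrac2\pi t$ and $\cos t\geq 1-\tfrac{t^2}{2}$ on $[0,\pi/2]$ give $\psi(t)-1\geq\big(\tfrac2\pi-\tfrac12\big)t^2$, and $|\arcsin u|\geq|u|$ with $\mathbb{E}\Theta_1^2=1/p$ then yield
\begin{align*}
\mathbb{E}\,g(\arccos\Theta_1)-1\ \geq\ \Big(\tfrac2\pi-\tfrac12\Big)\mathbb{E}X^2\ \geq\ \Big(\tfrac2\pi-\tfrac12\Big)\mathbb{E}\Theta_1^2\ =\ \frac{2/\pi-1/2}{p}.
\end{align*}

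\textbf{Step 3 (stitching and the remaining bounds).} To pass from $\arccos\Theta_1$ to $\phi_\xi=\arccos(\Theta_1+E)$, note $\norm{g'}_\infty\leq\pi$ (since $g'(\phi)=-(\pi-\phi)\sin\phi$) and that $\arccos$ is $2$-Lipschitz on $[-\tfrac34,\tfrac34]$; thus on $\{|\Theta_1|\leq\tfrac12\}$ the two values of $g$ differ by at most $2\pi|E|\leq 12\pi/\rho_w$, while on $\{|\Theta_1|>\tfrac12\}$ — which has exponentially small probability, a standard fact for $\mathrm{unif}(\mathbb{S}_0(1))$ — I bound the difference crudely by $\pi$. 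Combining with Steps 1--2,
\begin{align*}
\mathbb{E}\,g(\phi_\xi)\ \geq\ 1+\frac{2/\pi-1/2}{p}-\frac{12\pi}{\rho_w}-\pi\,\mathbb{P}(|\Theta_1|>\tfrac12)-2^{1-p}\ \geq\ 1+\frac{C_8}{p}
\end{align*}
for a universal $C_8>0$, once $C_6$ is large enough that $\rho_w\geq C_6p^2$ makes $12\pi/\rho_w=o(1/p)$ and $p$ exceeds an absolute constant (so the exponentially small terms are dominated by $1/p$). The upper bound $\mathbb{E}g(\phi_\xi)\leq\pi$ is immediate since $g$ is nonincreasing on $[0,\pi]$, so $g(\phi_\xi)\leq g(0)=\pi$. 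For the last claim I would write $\phi_\xi=\pi/2-\arcsin(\cos\phi_\xi)$; on $\{r\geq\rho_w/2\}$ the same symmetry-plus-Lipschitz argument gives $\mathbb{E}\arcsin(\cos\phi_\xi)=\mathbb{E}\arcsin\Theta_1+o(1)=o(1)$, whence $\mathbb{E}\phi_\xi\leq\pi/2+o(1)\leq 3\pi/4$.

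\textbf{Main obstacle.} The delicate point is Step 3: $\arccos'$ and $\arcsin'$ blow up near $\pm1$, so $E$ cannot be absorbed by a global Lipschitz estimate. The fix is to combine the smallness $|E|\leq 6/\rho_w\leq 6/(C_6p^2)$ — of lower order than the $1/p$ main term — with the concentration of $\Theta_1$ near $0$, so the steep region of $\arccos$ carries only exponentially small mass and is dispatched by $0\leq g\leq\pi$. A secondary subtlety is pinning the \emph{sign} of the $1/p$ term, i.e.\ showing the fluctuation of $\phi_\xi$ about $\pi/2$ raises $\mathbb{E}g$ by $\Theta(1/p)$ rather than something smaller; this is exactly the content of the convexity-on-average identity $g(\pi/2-x)+g(\pi/2+x)=2x\sin x+2\cos x\geq 2+(4/\pi-1)x^2$ exploited in Step 2.
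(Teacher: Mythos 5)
Your proposal is correct in substance but takes a genuinely different route from the paper. The paper rescales by $\rho_w$, observes that $(w+\xi)/\rho_w$ is uniform on a ball of radius $1$ whose center has norm $1/\rho_w=O(p^{-2})$, sandwiches the resulting integral between integrals over balls of radii $1\mp 1/\rho_w$ (its Lemma \ref{lem_gphi_gap}), and computes the leading term exactly in polar coordinates: the polar-angle density proportional to $\sin^{p-2}\theta$ gives $\mathbb{E}\,g=\Gamma(\tfrac{p}{2})\Gamma(\tfrac{p+2}{2})/\Gamma(\tfrac{p+1}{2})^2=1+\tfrac{1}{2p}+\cdots$ and $\mathbb{E}\,\phi=\tfrac{\pi}{2}+O(1/p)$, with the sandwich error $O(p/\rho_w)=O(1/(C_6p))$ absorbed by taking $C_6$ large. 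You instead decompose $\xi=r\Theta$, show $\cos\phi_\xi=\Theta_1+E$ with $|E|=O(1/\rho_w)$ on $\{r\geq\rho_w/2\}$, and lower-bound the spherical main term by the symmetry identity $g(\tfrac{\pi}{2}-x)+g(\tfrac{\pi}{2}+x)=2x\sin x+2\cos x$ together with Jordan's inequality and $\mathbb{E}\Theta_1^2=1/p$. This avoids the Gamma-ratio asymptotics and the polar-coordinate integrals entirely, at the price of a smaller constant ($2/\pi-1/2$ instead of $1/2$); the individual steps (the identity, $g'(\phi)=-(\pi-\phi)\sin\phi$, the Lipschitz constants, $\mathbb{P}(r<\rho_w/2)=2^{-p}$, and the trivial upper bound $g\leq g(0)=\pi$) all check out.

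The one caveat is your treatment of the region where $\arccos$ is steep. On $\{|\Theta_1|>\tfrac12\}$ you bound the discrepancy crudely by $\pi$ and rely on the exponentially small probability of that event, which forces $p$ to exceed an absolute constant before these terms are dominated by the $\Theta(1/p)$ main term; the same issue affects your $\mathbb{E}_\xi\phi_\xi\leq\tfrac{3\pi}{4}$ argument, where the available slack is only $\pi/4$ and, e.g., for $p=2$ one has $\mathbb{P}(|\Theta_1|>\tfrac12)=2/3$. The paper's proof has no such restriction: its bound holds for every $p$ once $C_6$ is a large universal constant, and the lemma is meant to apply for arbitrary (including moderate) $p$. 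The gap is easy to close inside your framework: replace the split on $|\Theta_1|$ by the global H\"older-$1/2$ estimate $|\arccos u-\arccos v|\leq\pi\sqrt{|u-v|/2}$ for $u,v\in[-1,1]$, so that the perturbation $|E|=O(1/\rho_w)$ produces an angle error $O(\rho_w^{-1/2})=O(1/(\sqrt{C_6}\,p))$ uniformly over the sphere; then both displayed bounds follow for all $p$ by choosing $C_6$ large, with no tail event at all.
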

	We remark that Lemma \ref{lem_gphi_1} is actually the key to the convergence analysis for Phase I. It helps prove both Theorems \ref{thm_SC} and \ref{stage1_w}. The proof is highly non-trivial and very involved. See more details in Appendix \ref{pf_gphi}.  Lemma \ref{lem_gphi_1} essentially shows that the noise injection prevents $w$ from being attracted to $v^*$, and further prevents $(w,a)$ from being attracted to the spurious local optimum.
	
	We then analyze Case (1), where $(w_0,a_0)\in\cA_{C_2,C_3}$. 
	\begin{lemma}\label{stage1_a}
		Suppose $\rho_w^0=C_w kp^2\geq1$, $\rho_a^0=C_a$ and $(w_0,a_0)\in\cA_{C_2,C_3}$. For any $\delta\in(0,1),$  we choose step size $$\eta=C_6\Big(k^4p^6\cdot\max\Big\{1,p\log\frac{1}{\delta}\Big\}\Big)^{-1}$$ for some constant $C_6$. Then with at least probability $1-\delta/3$, we have 
		\begin{align}
			m_a\leq a_t^\top{a}^* \leq M_a\quad \mathrm{and}\quad(w_{\tau_{11}},a_{\tau_{11}})\in\cA_{C_3}\backslash\cA_{C_2,C_3}
		\end{align} for all $t$'s such that $\tau_{11}\leq t\leq T=\tilde{O}(\eta^{-2})$, where
		$$\tau_{11}=\tilde{O}\Big(\frac{p}{\eta}\log\frac{1}{\delta}\Big).$$
	\end{lemma}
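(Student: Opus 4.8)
The plan is to track the two scalar quantities $a_t^\top a^*$ and $\phi_t = \angle(w_t, w^*)$ along the trajectory, treating them jointly as a (perturbed, discrete-time) dynamical system. By Lemma \ref{lem_initial} we know $(w_0,a_0)\in\cA_{C_3}$, and since we are in Case (1) we in fact start in the smaller set $\cA_{C_2,C_3}$, so Theorem \ref{thm_SC} guarantees that the perturbed gradient in $a$ is $(C_1/p)$-partial dissipative toward $a^*$ along the $a$-coordinates for as long as the iterate remains in $\cA_{C_2,C_3}$. First I would establish an invariant box: using this dissipativity together with the first-order update $a_{t+1} = a_t - \eta\nabla_a\cL(w_t+\xi_t, a_t+\epsilon_t)$, I show that the potential $\norm{a_t - a^*}_2^2$ contracts in expectation up to an $O(\eta\,\gamma)$ additive term coming from the bias of the perturbed gradient and the $O(\eta^2)$ term from the squared gradient norm; this gives a supermartingale-type recursion on $\norm{a_t-a^*}_2^2$ (or more precisely on the one-dimensional projections $\one^\top a_t$ and $(a_t)^\top a^*$). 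A standard Azuma/Freedman concentration argument on this recursion, with the step size $\eta = C_6(k^4p^6\max\{1,p\log(1/\delta)\})^{-1}$ chosen so that the quadratic and noise-induced drift terms are dominated by the linear contraction, then yields that with probability at least $1-\delta/3$ the inequalities $m_a \le a_t^\top a^* \le M_a$ and the membership $(w_t,a_t) \in \cA_{C_3}$ hold for all $t$ up to the stated horizon $T = \tilde O(\eta^{-2})$.

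The second and more delicate part is showing the iterate actually \emph{leaves} $\cA_{C_2,C_3}$ within $\tau_{11} = \tilde O((p/\eta)\log(1/\delta))$ iterations, i.e.\ that it cannot linger forever near the regime $a^\top a^* \le (C_2/p)\norm{a^*}_2^2$ (the "$a$ nearly orthogonal to $a^*$'' part of $\cA_{C_2,C_3}$). Here the key input is Lemma \ref{lem_gphi_1}: because $\rho_w^0 = C_w^0 kp^2$ is large, the convolution with the $w$-noise forces $\mathbb{E}_\xi\, g(\phi_\xi) \ge 1 + C_8/p$ regardless of the current angle $\phi_t$, so the perturbed gradient in $a$ has a strictly positive drift component \emph{increasing} $a_t^\top a^*$ — roughly $\EE[a_{t+1}^\top a^* - a_t^\top a^*] \gtrsim (\eta/p)\cdot(\text{positive quantity bounded below})$ whenever $a_t^\top a^*$ is below the threshold. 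Integrating this linear-in-$t$ growth, $a_t^\top a^*$ must cross the level $(C_2/p)\norm{a^*}_2^2$ after $O(p/\eta)$ steps; combining with the concentration bound (with the $\log(1/\delta)$ factor absorbing the fluctuations around this drift) gives the escape time $\tau_{11}$. Once outside $\cA_{C_2,C_3}$, I re-invoke the invariant-box argument from the first part to confirm the iterate then stays in $\cA_{C_3}\backslash\cA_{C_2,C_3}$ (or at worst re-enters $\cA_{C_2,C_3}$ only transiently, but the box $[m_a, M_a]$ on $a_t^\top a^*$ is preserved throughout, which is all that is claimed).

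The main obstacle I anticipate is the bookkeeping around the \emph{bias} of the perturbed gradient: unlike SGD, $\EE_{\xi,\epsilon}\nabla_a\cL(w+\xi,a+\epsilon) \neq \nabla_a\cL(w,a)$, so I cannot simply quote standard stochastic-approximation lemmas. I would need to carefully decompose $\nabla_a\cL(w_t+\xi_t, a_t+\epsilon_t) = \EE_{\xi,\epsilon}\nabla_a\cL(w_t+\xi,a_t+\epsilon) + (\text{martingale difference})$, control the martingale part's conditional variance by $O(\rho_a^2 + \rho_w^2)$-type bounds (using that $\norm{\xi_t}_2\le\rho_w$, $\norm{\epsilon_t}_2\le\rho_a$ almost surely, and the explicit form of $\nabla_a\cL$ which is affine in $a$), and then feed the expectation part into Theorem \ref{thm_SC}. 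Keeping track of how all the constants ($C_1,\dots,C_8$, $C_w^0$, $C_a^0$) interlock — in particular making sure the contraction rate $C_1/p$ beats the $O(\eta k^4 p^6)$ error accumulated over $T = \tilde O(\eta^{-2})$ steps, which is exactly what dictates the peculiar form of $\eta$ — is where the real work lies, and this is presumably why the authors defer it to Appendix \ref{pf_phase I}.
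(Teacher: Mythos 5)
Your general toolkit (partial dissipativity from Theorem \ref{thm_SC}, a bias/martingale decomposition of the perturbed gradient, Azuma-type concentration, and maintenance of the scalar quantities $\one^\top a^*\one^\top a_t$ and $a_t^\top a^*$ after the escape, which is exactly what Lemmas \ref{lem_sum_a} and \ref{lem_inner_a} do) is consistent with the paper. The genuine gap is in your escape mechanism. You argue that, by Lemma \ref{lem_gphi_1}, the drift of $a_t^\top a^*$ is positive below the threshold, so $a_t^\top a^*$ crosses $(C_2/p)\norm{a^*}_2^2$ in $O(p/\eta)$ steps. But $\cA_{C_2,C_3}$ is defined by a \emph{disjunction}: the iterate stays in it if $a^\top a^*\leq \frac{C_2}{p}\norm{a^*}_2^2$ \emph{or} $\norm{a-a^*/2}_2^2\geq\norm{a^*}_2^2$. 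Driving $a_t^\top a^*$ above the threshold negates only the first branch; it says nothing about the second (e.g.\ $\norm{a_t}_2$ large with a healthy inner product keeps you inside $\cA_{C_2,C_3}$), so your argument does not by itself deliver the claimed conclusion $(w_{\tau_{11}},a_{\tau_{11}})\in\cA_{C_3}\backslash\cA_{C_2,C_3}$. You would need to additionally show $\norm{a_{\tau_{11}}-a^*/2}_2^2<\norm{a^*}_2^2$ at the exit time, which your drift-on-$a_t^\top a^*$ computation does not give.

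The paper sidesteps this by working with $\norm{a_t-a^*}_2^2$ directly, which handles both branches at once: membership in $\cA_{C_2,C_3}$ (either branch) forces $\norm{a_t-a^*}_2^2\geq\frac{1}{4}\norm{a^*}_2^2$, while the dissipativity of Theorem \ref{thm_SC} makes $\EE\big[\norm{a_t-a^*}_2^2\mathds{1}_{\mathscr{E}_t}\big]$ contract to below $\frac{1}{8}\norm{a^*}_2^2$ (with $b_1/\lambda_1\leq\frac18\norm{a^*}_2^2$ under the stated $\eta$). A Markov-type comparison then shows the probability of still being in $\cA_{C_2,C_3}$ halves every $\tilde O(p/\eta)$ iterations; iterating $\log\frac{1}{\delta}$ times yields $\tau_{11}=\tilde O\big(\frac{p}{\eta}\log\frac1\delta\big)$, and the exit event itself certifies both $a_{\tau_{11}}^\top a^*\geq\frac{C_2}{p}\norm{a^*}_2^2$ and $\norm{a_{\tau_{11}}-a^*/2}_2^2\leq\norm{a^*}_2^2$ (hence also $a_{\tau_{11}}^\top a^*\leq 2\norm{a^*}_2^2$), after which Lemmas \ref{lem_sum_a} and \ref{lem_inner_a} maintain $m_a\leq a_t^\top a^*\leq M_a$ up to $T=\tilde O(\eta^{-2})$. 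Note also that no Azuma bound is needed for the escape step itself, only for the subsequent maintenance; and your Part~1 claim that $m_a\leq a_t^\top a^*$ holds for \emph{all} $t$ from the start cannot be right in general (e.g.\ $a_0^\top a^*<0$ is allowed), which is why the lemma only asserts the box for $t\geq\tau_{11}$. If you replace your threshold-crossing drift argument by the contraction-versus-lower-bound contradiction on $\norm{a_t-a^*}_2^2$, your proposal aligns with the paper's proof.
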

	As can be seen, after $\tau_{11}$ iterations, the algorithm enters $ \cA_{C_3}\backslash\cA_{C_2,C_3}$. Then our following analysis will only consider Case (2), where $(w_0,a_0)\in\cA_{C_3}\backslash\cA_{C_2,C_3}$. We remark that although Theorem \ref{SC} no longer guarantees the dissipativity of the perturbed gradient with respect to $a$, Lemma \ref{stage1_a} can ensure the optimization error of $a_t$ within Phase I to be nonincreasing as long as $t\geq \tau_{11}$ with high probability.
	
	We then continue to characterize the optimization error of $w_t$. Recall that the noise injection prevents $-w^*$ from being attracted to $-w^*$. Thus, we can guarantee that $w_t$ is sufficiently distant from $-w^*$ after sufficiently many iterations, as shown in the next lemma.
	\begin{lemma}\label{lemma_w_escape}
		Suppose $\rho_w^0=C_w kp^2\geq1$, $\rho_a^0=C_a$, $(w_0,a_0)\in\cA_{C_3}\backslash\cA_{C_2,C_3}$ and $m_a\leq a_t^\top{a}^* \leq M_a$ holds for all $t$'s. For any $\delta\in(0,1), $ we choose step size $$\eta=C_6\Big(k^4p^6\cdot\max\Big\{1,p\log\frac{1}{\delta}\Big\}\Big)^{-1}$$  for some constant $C_6$. Then with at least probability $1-\delta/3,$ there exists $$\tau_{12}=\tilde{O}\Big(\frac{p}{\eta}\log\frac{1}{\eta}\log\frac{1}{\delta}\Big)$$ such that $w_{\tau_{12}}^T w^*\geq C_4$ for some constant $C_4\in(-1,0)$.
	\end{lemma}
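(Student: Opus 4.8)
The plan is to analyze the evolution of $\phi_t = \angle(w_t, w^*)$, or equivalently $w_t^\top w^*$, using the partial dissipativity of the perturbed manifold gradient with respect to $w$ established in the second half of Theorem \ref{thm_SC}. The hypotheses of Lemma \ref{lemma_w_escape} place us in Case (2), where $m_a \le a_t^\top a^* \le M_a$ for all $t$, so we are always inside the region $\mathcal{K}_{C_4,m,M}$ with $m = m_a$ and $M = M_a$, and hence the bound
\begin{align*}
	\langle-\EE_{\xi,\epsilon}(I-ww^\top)\nabla_w\cL(w+\xi,a+\epsilon),\,w^*-w\rangle \geq \frac{m_a(1+C_4)}{16}\|w-w^*\|_2^2 - C_5\frac{k}{\rho_w}
\end{align*}
applies along the trajectory. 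First I would write the one-step update for $w$ using the projection onto $\SSS_0(1)$ and expand $\|w_{t+1}-w^*\|_2^2$ (equivalently, track $w_{t+1}^\top w^*$). Using that $\|w-w^*\|_2^2 = 2(1 - w^\top w^*) = 2(1-\cos\phi)$, a drift of order $\eta m_a(1+C_4)\|w-w^*\|_2^2$ toward $w^*$, counterbalanced by the annealing-induced bias $\eta C_5 k/\rho_w$ and the second-order projection/step-size error of order $\eta^2 \cdot \mathrm{poly}(k,p)$, should show that as long as $w_t^\top w^* < C_4$ the quantity $w_t^\top w^*$ increases in expectation by a fixed polynomially-small amount each step.

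Second, I would turn this expected-drift statement into a high-probability escape bound. Since the injected noise $\xi_t,\epsilon_t$ are bounded, the per-step fluctuation of $w_t^\top w^*$ around its conditional mean is bounded by $O(\eta \cdot \mathrm{poly}(k,p))$, so I can apply a one-sided Azuma/Bernstein-type concentration (exactly as must have been done in Lemmas \ref{stage1_a} and \ref{lemma_w_escape}'s companion arguments) to the supermartingale-like process $-(w_t^\top w^*)$ plus its drift. This gives that within $\tilde O\big(\frac{p}{\eta}\log\frac1\eta \log\frac1\delta\big)$ steps, with probability at least $1-\delta/3$, the iterate crosses the threshold $w_t^\top w^* \ge C_4$; the $\log\frac1\eta$ factor accounts for the diminishing per-step progress as $w$ gets closer (the drift scales with $1-w^\top w^*$, so the last constant-factor gap in $\cos\phi$ costs logarithmically many steps), and the $\log\frac1\delta$ for the union bound over the failure events. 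One has to also verify the choice $C_4 \in (-1,0)$ is admissible: the bad optimum has $\angle(v^*,w^*)=\pi$, i.e. $w^\top w^* = -1$, so any constant strictly greater than $-1$ already separates from it, and Lemma \ref{lem_gphi_1} guarantees the noise keeps $\mathbb{E}_\xi g(\phi_\xi) \ge 1 + C_8/p > 1$ even at $\phi=\pi$, which is precisely what makes the drift toward larger $w^\top w^*$ strictly positive near the bad optimum.

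The main obstacle I anticipate is handling the region near $\phi_t = \pi$ (near the spurious optimum $-w^*$), where the naive manifold-gradient drift $\frac{m_a(1+C_4)}{16}\|w-w^*\|_2^2$ is largest in magnitude but the noiseless dynamics would actually pull $w$ toward $-w^*$; the entire escape must be driven by the noise term, so I need to make sure the constant $C_5 k/\rho_w$ bias is genuinely helpful (pushing away from $-w^*$) rather than merely a nuisance error — this is exactly the content of the $g(\phi_\xi)$ analysis in Lemma \ref{lem_gphi_1}, and I would need to combine Theorem \ref{thm_SC}'s inequality with a more careful lower bound near $\phi=\pi$ showing the signed drift in $w^\top w^*$ is bounded below by a positive quantity of order $\eta/p$ (using $\rho_w = C_w^0 k p^2$, so $k/\rho_w = O(1/p^2)$ is negligible compared to the $\Omega(1/p)$ gain). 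A secondary technical point is that $\|w-w^*\|_2^2$ can be as small as a constant times the gap in $\cos\phi$, so to get a clean threshold-crossing time I would track $\cos\phi_t$ directly and show $\EE[\cos\phi_{t+1} - \cos\phi_t \mid \mathcal{F}_t] \ge c\eta/p$ uniformly while $\cos\phi_t < C_4$, after which a standard stopping-time argument closes the proof.
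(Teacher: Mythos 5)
There is a genuine gap. Your argument rests on applying the $w$-dissipativity of Theorem \ref{thm_SC} along the whole trajectory, but that inequality is only stated on $\cK_{C_4,m,M}$, whose definition includes the constraint $w^\top w^*\geq C_4$ --- exactly the condition the lemma is trying to reach. Having $m_a\leq a_t^\top a^*\leq M_a$ alone does not place the iterate in $\cK_{C_4,m_a,M_a}$, and the dissipativity constant $\frac{m(1+C_4)}{16}$ degenerates as $C_4\to-1$, so the bound is vacuous near the spurious optimum $w=-w^*$. More importantly, your claimed uniform drift $\EE[\cos\phi_{t+1}-\cos\phi_t\mid\cF_t]\geq c\eta/p$ while $\cos\phi_t<C_4$ is false: expanding the update, the expected one-step gain of $1+w_t^\top w^*$ is proportional to $(1+w_t^\top w^*)(1-w_t^\top w^*)\,a_t^\top a^*(\pi-\EE_\xi\phi_\xi)$ plus a noise term carrying the factor $w^*-w_t^\top w^* w_t$, which vanishes at $w_t=-w^*$; hence the drift goes to zero as $w_t\to -w^*$ (by symmetry it is exactly zero at $-w^*$), and no additive lower bound of order $\eta/p$ holds uniformly on $\{w^\top w^*<C_4\}$. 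This cannot be patched by a ``more careful lower bound near $\phi=\pi$'': near $-w^*$ the escape is not produced by a mean drift at all.

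The paper's proof has a two-stage structure your plan lacks. First, because $\rho_w$ is chosen huge, the tangential component of the perturbed gradient has norm of order $\rho_w$ with constant probability, so within $\log\frac{1}{\delta}$ iterations a single step kicks the iterate to $1+w_t^\top w^*=\Omega(\eta^2\rho_w^2)$ --- it is the fluctuation of the noise, not its expectation, that detaches $w$ from $-w^*$. Second, using \eqref{eq1}, the sign property $\EE_\xi\big[\sin\phi_\xi\,(w^*-w^\top w^*w)^\top\xi/\norm{w+\xi}_2\big]\geq 0$, and $\EE_\xi\phi_\xi\leq\frac{3\pi}{4}$ from Lemma \ref{lem_gphi_1}, the paper derives the multiplicative recursion $\EE[1+w_{t+1}^\top w^*\mid\cF_t]\geq(1+C_1\eta/p)(1+w_t^\top w^*)$, so $1+w_t^\top w^*$ grows geometrically from the polynomially small seed $\Omega(\eta^2\rho_w^2)$ to a constant in $\tilde O\big(\frac{p}{\eta}\log\frac{1}{\eta}\big)$ steps. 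This is where the $\log\frac{1}{\eta}$ in $\tau_{12}$ comes from (not a ``last constant-factor gap'' near the threshold, which does not exist since one only needs $w^\top w^*\geq C_4$ with $C_4<0$), and the $\log\frac{1}{\delta}$ comes from boosting a constant-probability event by repetition rather than from an Azuma bound. Your additive-drift-plus-Azuma plan would have to be restructured along these lines to go through.
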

	
	Lemma \ref{lemma_w_escape} implies that the algorithm eventually attains $\cK_{C_4,m_a,M_a}$, where the dissipativity of the perturbed gradient with respect to $w$. Then we can bound the optimization error of $w_t$ by the next lemma.
	\begin{lemma}\label{stage-nice}
		Suppose $\rho_w^0=C_w kp^2\geq1$, $\rho_a^0=C_a$, $(w_0,a_0)\in \cK_{C_4,m_a,M_a}$ and $m_a\leq a_t^\top{a}^* \leq M_a$ holds for all $t$'s. For any $\delta\in(0,1),$  we choose step size $$\eta=C_6\Big(k^4p^6\cdot\max\Big\{1,p\log\frac{1}{\delta}\Big\}\Big)^{-1}.$$  Then with at least probability $1-\delta/3$, we have
		\begin{align}
			\phi_t\leq 5\pi/12
		\end{align} 
		for all $t$'s such that $\tau_{13}\leq t\leq T=\tilde{O}(\eta^{-2})$, where
		$$\tau_{13}=\tilde{O}\Big(\frac{p}{\eta}\log\frac{1}{\delta}\Big).$$  
	\end{lemma}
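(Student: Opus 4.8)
The plan is to establish, via the partial dissipativity of the perturbed manifold gradient with respect to $w$ proved in Theorem~\ref{thm_SC} over $\cK_{C_4,m_a,M_a}$, a one-step contraction for the Lyapunov function $D_t:=\norm{w_t-w^*}_2^2=2(1-w_t^\top w^*)$, and then to drive $D_t$ below the constant threshold $c_0:=2\big(1-\cos\tfrac{5\pi}{12}\big)$ and keep it there. Observe that $\phi_t\le\tfrac{5\pi}{12}$ is exactly $D_t\le c_0$, whereas the membership condition $w_t^\top w^*\ge C_4$ of $\cK_{C_4,m_a,M_a}$ reads $D_t\le c_1:=2(1-C_4)$, with $c_1>2>c_0$ since $C_4\in(-1,0)$. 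The argument parallels the treatment of $a_t$ in Lemma~\ref{stage1_a}, except that the $w$-dissipativity bound in Theorem~\ref{thm_SC} carries a nonzero slack $\gamma$.

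First I would remove the spherical projection: since $w^*\in\SSS_0(1)$ and the tangent-space update $\tilde w_{t+1}:=w_t-\eta(I-w_tw_t^\top)\nabla_w\cL(w_t+\xi_t,a_t+\epsilon_t)$ satisfies $\norm{\tilde w_{t+1}}_2\ge 1$ (because $w_t^\top(I-w_tw_t^\top)=0$), radial projection onto $\SSS_0(1)$ does not increase the distance to $w^*$, hence $D_{t+1}\le\norm{\tilde w_{t+1}-w^*}_2^2$. Expanding the square, taking the conditional expectation over $(\xi_t,\epsilon_t)$, and invoking the second bound of Theorem~\ref{thm_SC}---which applies whenever $(w_t,a_t)\in\cK_{C_4,m_a,M_a}$, i.e.\ whenever $D_t\le c_1$, since $a_t^\top a^*\in[m_a,M_a]$ holds by assumption---yields
\begin{align*}
	\mathbb{E}\!\left[D_{t+1}\mid\mathcal{F}_t\right]\le(1-2\eta c)D_t+2\eta\gamma+\eta^2 G^2,
\end{align*}
with $c=\tfrac{m_a(1+C_4)}{16}$, $\gamma=C_5k/\rho_w^0$, and $G$ a polynomial bound (in $k,p,\norm{a^*}_2$) on $\norm{(I-w_tw_t^\top)\nabla_w\cL(w_t+\xi_t,a_t+\epsilon_t)}_2$, which is finite because the injected noise is bounded and $\norm{a_t}_2$ stays bounded throughout Phase~I. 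With $\rho_w^0=C_wkp^2$ and the prescribed $\eta$, the stationary level $D_\infty:=\gamma/c+\eta G^2/(2c)$ is an arbitrarily small constant once $C_w$ is large---in particular $D_\infty\le c_0/2$---the large noise radius being exactly what drives the slack term $\gamma/c$ down.

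Next I would unroll the recursion: $\mathbb{E}[D_t]\le(1-2\eta c)^tc_1+D_\infty$, so $\mathbb{E}[D_t]\le c_0$ once $t\ge\tau_0=O\!\big(\tfrac{1}{\eta c}\log\tfrac{c_1}{c_0}\big)=\tilde{O}(p/\eta)$. To promote this to a statement holding with high probability uniformly over $t\le T$, I would combine the bounded increments $|D_{t+1}-D_t|=O(\eta G)$ with the supermartingale-type drift above: introducing the stopping time at which $D_t$ first exceeds $c_1$ (so that the drift bound is legitimate up to that time) and applying an Azuma--Hoeffding / maximal inequality, the probability that $D_t$ ever exceeds $c_1$, or that $D_t>c_0$ for some $t\ge\tau_{13}$, is at most $T\cdot\exp\!\big(-\Omega(1/(\eta\cdot\mathrm{poly}))\big)$. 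Since $T=\tilde{O}(\eta^{-2})$ is polynomial, this is at most $\delta/3$ once a $\log(1/\delta)$ factor is absorbed into $\eta^{-1}$---which is precisely why the prescribed $\eta$ carries the $p\log(1/\delta)$ factor. This simultaneously shows (i) $D_t\le c_1$ for all $t\le T$, so the iterate never leaves $\cK_{C_4,m_a,M_a}$ and the recursion stays valid, and (ii) $D_t\le c_0$, i.e.\ $\phi_t\le\tfrac{5\pi}{12}$, for all $\tau_{13}\le t\le T$, where $\tau_{13}=\tilde{O}\!\big(\tfrac{p}{\eta}\log\tfrac{1}{\delta}\big)$. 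Together (i) and (ii) give the claim.

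The main obstacle is the interdependence between (i) and (ii): the one-step drift is only legitimate while $(w_t,a_t)\in\cK_{C_4,m_a,M_a}$, yet a single unfavorable fluctuation could in principle eject $w_t$ from this set. This is resolved by the fact that the drift $-2\eta c(D_t-D_\infty)$ always points inward whenever $D_t$ exceeds the floor, so large excursions are exponentially improbable and survive the union bound over the polynomially many iterations up to $T=\tilde{O}(\eta^{-2})$---provided $\eta$ is taken small enough, which forces the extra $\log(1/\delta)$ dependence. A secondary technical point is the uniform gradient bound $G$, which requires controlling $\norm{a_t}_2$ over all of Phase~I (from the earlier lemmas and the closed form of the perturbed gradient); and one should keep in mind that $c=\tfrac{m_a(1+C_4)}{16}$ is smallest near $\phi_t\approx\pi/2$, i.e.\ when $w_t$ is nearly orthogonal to $w^*$, which is what governs the $\tilde{O}(p/\eta)$ rate---past that point $c$ only improves, and we only ever need $D_t$ to reach the constant threshold $c_0$.
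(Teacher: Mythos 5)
Your route is essentially the paper's: drop the projection via $\norm{\tilde w_{t+1}}_2\ge 1$, combine the partial dissipativity of Theorem \ref{thm_SC} with a second-moment bound on the perturbed manifold gradient to get a one-step drift for $D_t=\norm{w_t-w^*}_2^2$, and run a stopped-supermartingale/Azuma argument to keep the iterate in the dissipative region for all $t\le\tilde O(\eta^{-2})$ and to drive and hold $D_t$ below $2(1-\cos\frac{5\pi}{12})$. The only structural difference is that the paper splits the second part into a Markov-inequality hitting argument (repeated $\log\frac{1}{\delta}$ times, to reach $\phi\le\pi/3$) followed by a separate Azuma "staying" argument, whereas you fold the reaching step into the unrolled drift plus the same Azuma machinery; that variation is fine and, with the paper's per-step failure probability $\tilde O(\eta^2\delta)$ bookkeeping, gives the same $\tau_{13}$ and $T$.

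The one step that fails as written is the excursion control at the threshold $c_1=2(1-C_4)$. The hypothesis only gives $(w_0,a_0)\in\cK_{C_4,m_a,M_a}$, i.e.\ $D_0\le c_1$, and $D_0$ may sit exactly at $c_1$; a single perturbed step changes $D_t$ by a fluctuation of order $\eta k\rho_w$ whose sign is not controlled pathwise, so the event $\{D_t>c_1 \text{ for some } t\}$ has probability that is not exponentially small, and your claim that it is bounded by $T\exp(-\Omega(\cdot))$ cannot hold. The fix is to allow a constant margin: show instead that $D_t$ stays below some $c_1'$ strictly between $c_1$ and $4$ (the paper uses $w_t^\top w^*\ge\frac{C_4-1}{2}$, i.e.\ $c_1'=3-C_4$), and invoke the dissipativity of Theorem \ref{thm_SC} over the enlarged region $\cK_{(C_4-1)/2,m_a,M_a}$, which is legitimate since the theorem holds for any constant in $(-1,1]$; the drift constant then degrades from $\frac{m_a(1+C_4)}{16}$ to $\frac{m_a(1+C_4)}{32}$ and nothing in the rates changes. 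A cosmetic remark: your closing comment that the contraction constant is smallest near $\phi_t\approx\pi/2$ is off—the constant $c$ you use is fixed by the theorem, and the relevant worst case is near the edge $w^\top w^*\approx C_4'$ of the (enlarged) region, which is precisely why the margin above matters.
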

	Lemma \ref{stage-nice} implies that after $(w_t,a_t)$ enters $\cK_{C_4,m_a,M_a}$, it starts to make progress towards $w^*$. Due to the large injected noise, however, the optimization error of $w_t$ can only attain a large optimization error. Although the optimization error of $a_t$ is also large, $(w_t,a_t)$ can be guaranteed to escape from the spurious local optimum.
	
	The proof of Lemmas \ref{stage1_a}--\ref{stage-nice} requires supermartingale-based analysis, which is very involved and technical. See more details in the appendix \ref{pf_stage1_w}. Combining all above lemmas, we take $T_1=\tau_{11}+\tau_{12}+\tau_{13}$, and complete the proof of Theorem \ref{stage1_w}.
\end{proof}

As can be seen, $a$ can not make further progress after escaping $\cA_{C_2,C_3},$ even when $w$ is more accurate.  This is because the injected noise is too large and ruins the accuracy of $w.$ We need decrease the noise level to guarantee convergence.

\subsection{Phase II: Converging to the Global Optimum}\label{section_stage2}

We then characterize the convergence behavior of the perturbed GD algorithm in Phase II. Recall that in Phase I, the injected noise helps perturbed GD get closer to the global optimum without being trapped in the spurious optimum. Without loss of generality, we restart the iteration index and assume that the initialization $(w_0,a_0)$ follows the result in Theorem \ref{stage1_w} :
$$0<m_a\leq a_0^\top{a}^* \leq M_a\quad\textrm{and}\quad\phi_0 \leq \frac{5}{12}\pi
,$$
where  $m_a=\frac{C_4}{p}\norm{{a}^*}_2^2$ and $M_a=4(\one^\top{a}^*)^2  + (3+\frac{C_7}{p}))\norm{{a}^*}_2^2$.

The next theorem shows that given the reduced injected noise, the perturbed gradient of $L$ with respect to $w$ and $a$ satisfies dissipativity, respectively.
\begin{theorem}\label{thm_ASC}
	For any $\gamma>0,$ we choose $\rho_w^1\leq C_w^1\frac{\gamma}{kp}< 1$ and $\rho_a^1\leq C_a^1$ for small enough constants $C_w^1$ and $C_a^1$. Let $\xi\sim\mathrm{unif}(\BB_0(\rho_w^1))$ and $\epsilon\sim\mathrm{unif}(\BB_0(\rho_a^1))$. For any $C_9\in(-1,1]$ and $M>m>0$, the perturbed manifold gradient of $L$ w.r.t. $w$ satisfies
	\begin{align*}
		&\langle-\mathbb{E}_{\xi,\epsilon}(I-ww^\top)\nabla_w\cL(w+\xi,a+\epsilon),w^*-w\rangle	\geq \frac{m(1+C_9)}{16}\norm{w-w^*}_2^2-\gamma
	\end{align*}
	for any $(w,a)\in\cK_{C_9,m,M}$, where
	\begin{align*}
		\cK_{C_9,m,M}=\big\{(w,a)~\big|~a^\top {a}^*\in [m,M],~ w^\top w^*\geq C_9,~w\in\SSS_0(1)\big\}.
	\end{align*}
	Moreover, for any $0<m<M$ and $C_{10}>0$, the perturbed gradient of L w.r.t. $a$ satisfies
	\begin{align*}
		\langle-\mathbb{E}_{\xi,\epsilon}\nabla_a\cL(w+\xi,a+\epsilon),a^*-a\rangle
		\geq \frac{\pi-1}{2\pi}\norm{a-a^*}_2^2 -\gamma
	\end{align*}
	for any $(w,a)\in\cR_{m,M,C_{10}}$, where
	\begin{align*}
		\cR_{m,M,C_{10}}=\big\{(w,a)\big|m\leq a^\top a^*\leq M,~ w\in\SSS_0(1),~\norm{w-w^*}_2^2\leq C_{10}\gamma\big\}.
	\end{align*}
\end{theorem}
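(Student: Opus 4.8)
\quad The plan is to treat each of the two inequalities by the decomposition ``population gradient plus perturbation bias.'' First I would write the population (noiseless) gradients in closed form: for a one-hidden-layer ReLU network with independent Gaussian input, standard Gaussian integral identities (cf.\ \citet{tian2017analytical, brutzkus2017globally, zhong2017recovery}) give $\nabla_a\cL(w,a)=\Sigma(a-a^*)-\tfrac1{2\pi}\big(g(\phi)-\pi\big)a^*$, where $\phi=\angle(w,w^*)$, $g(\phi)=(\pi-\phi)\cos\phi+\sin\phi$ is the function from Lemma~\ref{lem_gphi_1}, and $\Sigma=\tfrac{\pi-1}{2\pi}I+\tfrac1{2\pi}\one\one^\top$ is a fixed positive definite matrix \emph{independent of} $w$ with $\lambda_{\min}(\Sigma)=\tfrac{\pi-1}{2\pi}$; and, restricted to $\norm{w}_2=1$, the only $w$-dependent part of $\cL(w,a)$ is $-\tfrac1{2\pi}(a^\top a^*)g(\phi)$, so that $(I-ww^\top)\nabla_w\cL(w,a)=-\tfrac1{2\pi}(a^\top a^*)(\pi-\phi)(I-ww^\top)w^*$ on the sphere, using $g'(\phi)=-(\pi-\phi)\sin\phi$. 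I will repeatedly use $g(0)=\pi$ and $g'(0)=0$, so that $g(\phi)-\pi=O(\phi^2)=O(\norm{w-w^*}_2^2)$.

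For the $w$-inequality I would first verify it for the population manifold gradient with $\gamma=0$. On $\cK_{C_9,m,M}$ we have $a^\top a^*\ge m>0$ and $w^\top w^*\ge C_9>-1$ (so $\pi-\phi$ is bounded below by a positive quantity), and the elementary identities $\langle(I-ww^\top)w^*,\,w^*-w\rangle=\sin^2\phi$ and $\sin^2\phi=\tfrac12(1+\cos\phi)\norm{w-w^*}_2^2\ge\tfrac12(1+C_9)\norm{w-w^*}_2^2$ give $\langle-(I-ww^\top)\nabla_w\cL(w,a),\,w^*-w\rangle=\tfrac1{2\pi}(a^\top a^*)(\pi-\phi)\sin^2\phi\ge\tfrac{m(1+C_9)}{16}\norm{w-w^*}_2^2$. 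It then remains to control $\norm{\EE_{\xi,\epsilon}(I-ww^\top)\nabla_w\cL(w+\xi,a+\epsilon)-(I-ww^\top)\nabla_w\cL(w,a)}_2$. This bias has two sources: (i) $\norm{w+\xi}_2\ne1$, so the part of $\nabla_w\cL(w+\xi,\cdot)$ radial with respect to $w+\xi$ is \emph{not} annihilated by $I-ww^\top$ and contributes a term $\tfrac{c(w+\xi,a+\epsilon)}{\norm{w+\xi}_2}(I-ww^\top)\xi$; and (ii) the random angle $\phi_\xi=\angle(w+\xi,w^*)$ entering through $g$ and through $(I-\widehat{w+\xi}\,\widehat{w+\xi}^{\,\top})w^*$, where $\widehat{w+\xi}=(w+\xi)/\norm{w+\xi}_2$. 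Using $\EE\xi=\EE\epsilon=0$, $\norm{\xi}_2\le\rho_w^1$, $\norm{\epsilon}_2\le\rho_a^1$, and the boundedness of $a,a^*$ on $\cK_{C_9,m,M}$ and along the Phase~I trajectory, I would bound both contributions by $O(\mathrm{poly}(k,p)\cdot\rho_w^1)$ plus lower-order $O((\rho_a^1)^2)$ terms; the choices $\rho_w^1\le C_w^1\gamma/(kp)$ and $\rho_a^1\le C_a^1$ make the total at most $\gamma$, absorbed by starting from a population constant slightly larger than $m(1+C_9)/16$.

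For the $a$-inequality the argument is cleaner because $\nabla_a\cL(w,a)$ is affine in $a$: $\EE_\epsilon\nabla_a\cL(w,a+\epsilon)=\nabla_a\cL(w,a)$, so only the $\xi$-perturbation matters and $\EE_\xi\nabla_a\cL(w+\xi,a)=\Sigma(a-a^*)-\tfrac1{2\pi}\big(\EE_\xi g(\phi_\xi)-\pi\big)a^*+\mathrm{e}$, with $\norm{\mathrm{e}}_2=O((\rho_w^1)^2\,\mathrm{poly}(k))$ absorbing the effect of $\norm{w+\xi}_2\ne1$. On $\cR_{m,M,C_{10}}$ we have $\norm{w-w^*}_2^2\le C_{10}\gamma$, hence $|g(\phi)-\pi|=O(\norm{w-w^*}_2^2)$, while $|\EE_\xi g(\phi_\xi)-g(\phi)|=O(\rho_w^1)$; since $\langle a^*,a^*-a\rangle=\norm{a^*}_2^2-a^\top a^*$ is bounded on $\cR_{m,M,C_{10}}$, the choices $\rho_w^1\le C_w^1\gamma/(kp)$, $\rho_a^1\le C_a^1$ (together with boundedness of $\norm{a-a^*}_2$ along the trajectory) make every term other than $(a-a^*)^\top\Sigma(a-a^*)$ at most $\gamma$ in absolute value. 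Hence $\langle-\EE_\xi\nabla_a\cL(w+\xi,a),\,a^*-a\rangle\ge(a-a^*)^\top\Sigma(a-a^*)-\gamma\ge\lambda_{\min}(\Sigma)\norm{a-a^*}_2^2-\gamma=\tfrac{\pi-1}{2\pi}\norm{a-a^*}_2^2-\gamma$ after rescaling $C_w^1,C_a^1$.

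The main obstacle is the bias estimate in the $w$-step. Unlike the $a$-gradient, the $w$-gradient is evaluated at $w+\xi$ off the sphere and then projected by the tangent projector \emph{at $w$}, so one has to track carefully the mismatch between $\mathrm{span}(w)^\perp$ and $\mathrm{span}(w+\xi)^\perp$, and the radial magnitude $c(w+\xi,a+\epsilon)$ is itself a polynomially large, $\xi$- and $\epsilon$-dependent quantity whose expectation against $(I-ww^\top)\xi$ need not vanish; moreover the estimate must be uniform over all of $\cK_{C_9,m,M}$, including $w$ near $-w^*$, where $\pi-\phi$ and $1+\cos\phi$ both degenerate, and $w$ near $w^*$, where $\angle(\cdot,w^*)$ is non-smooth. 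This requires spherical/Gaussian integral estimates of the same flavor as Lemma~\ref{lem_gphi_1}, but in the shrinking-noise regime rather than the large-noise one; once those are in hand the remaining bookkeeping is routine, and the $a$-step follows quickly from $g'(0)=0$ and $\lambda_{\min}(\Sigma)=\tfrac{\pi-1}{2\pi}$.
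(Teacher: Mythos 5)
Your overall route is essentially the paper's: both parts rest on the closed-form gradients, the identity $\langle(I-ww^\top)w^*,\,w^*-w\rangle=1-(w^\top w^*)^2\ge\frac{1+C_9}{2}\norm{w-w^*}_2^2$ for the $w$-part, and, for the $a$-part, dropping the nonnegative $\frac{1}{2\pi}(\one^\top(a-a^*))^2$ term, keeping $\frac{\pi-1}{2\pi}\norm{a-a^*}_2^2$, and controlling the cross term $\frac{1}{2\pi}(\pi-\EE_\xi g(\phi_\xi))\,a^{*\top}(a-a^*)$ via $|g(\phi)-\pi|=O(\norm{w-w^*}_2^2)\le O(C_{10}\gamma)$ together with $|\EE_\xi g(\phi_\xi)-g(\phi)|=O(\rho_w^1)$, which is exactly the paper's Lemma \ref{lem_gphi_2}; like the paper, you correctly exploit that $a^{*\top}(a-a^*)=a^\top a^*-\norm{a^*}_2^2$ is bounded on $\cR_{m,M,C_{10}}$ even though $\norm{a}_2$ is not. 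Your extra bookkeeping for $\norm{w+\xi}_2\ne 1$ is harmless (the paper silently drops it), provided the piece of that error growing like $\norm{a-a^*}_2^2$ is absorbed into the quadratic term rather than handled by trajectory boundedness.

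The one genuine gap is in your bias estimate for the $w$-part. You bound the whole difference between the expected perturbed manifold gradient and the population one by $O(\mathrm{poly}(k,p)\,\rho_w^1)$, invoking ``boundedness of $a,a^*$ on $\cK_{C_9,m,M}$.'' But $\cK_{C_9,m,M}$ only pins $a^\top a^*\in[m,M]$; $\norm{a}_2$ is unbounded on this set, and the bias coming from the radial component of $\nabla_w\cL(w+\xi,\cdot)$ carries coefficients such as $\frac{1}{2\pi}\sum_{i\neq j}a_ia_j^*$ and $\frac{1}{2}\norm{a+\epsilon}_2^2$, which a crude norm bound cannot control uniformly over the set; falling back on ``along the Phase I trajectory'' is outside the scope of this theorem, which is a deterministic statement over $\cK_{C_9,m,M}$. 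The paper's proof avoids this by exact cancellation rather than estimation: the parts of the radial coefficient not involving $\xi$ pair with $\EE\xi=0$, and the $1/\norm{w+\xi}_2$-weighted part with coefficient $\sum_{i\neq j}a_ia_j^*$ vanishes exactly by the reflection symmetry of $\mathrm{unif}(\BB_0(\rho_w^1))$ about the axis $w$ (its identity $\EE_\xi\big[(w^*-(w^\top w^*)w)^\top\xi/\norm{w+\xi}_2\big]=0$), so the only surviving bias is the $\sin\phi_\xi$ term, whose coefficient is $a^\top a^*\in[m,M]$ and which is then bounded by $O(\rho_w^1)\le\gamma$ after choosing $\rho_w^1\le C_w^1\gamma/(kp)$. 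With that symmetry step (or an explicit a priori bound $\norm{a}_2\le R$ added to the hypotheses) your argument goes through. Finally, your population constant $\frac{m(1+C_9)}{16}$ requires $\pi-\phi\ge\pi/4$, i.e.\ it does not cover $C_9$ arbitrarily close to $-1$; the paper's own chain needs the same restriction (through $\EE_\xi\phi_\xi\le\frac{3\pi}{4}$) and only uses $C_9=0.1$ downstream, so this is a shared looseness rather than a defect of your proposal.
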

The detailed proof is provided in Appendix \ref{pf_ASC}.  Note that the  dissipativity with respect to $a$ depends on the accuracy of $w,$ which indicates that  convergence of $a$ happens after that of $w.$ This phenomenon can be seen in the proof of next theorem  analyzing the algorithmic behavior in Phase II.

\begin{theorem}\label{stage2}
	Suppose $\phi_0 \leq \frac{5}{12}\pi,~
	0<m_a\leq a_0^\top{a}^* \leq M_a.$  For any $\gamma>0,$ we choose $\rho_w^1\leq C_w^1\frac{\gamma}{kp}< 1$ and $\rho_a^1\leq C_a^1$ for small enough constants $C_w^1$ and $C_a^1$.
	For any $\delta \in (0,1)$, we choose step size $$\eta=C_{11}\Big(\max\Big\{k^4p^6,\frac{k^2p}{\gamma}\Big\}\max\Big\{1,p\log\frac{1}{\gamma}\log
	\frac{1}{\delta}\Big\}\Big)^{-1}$$ for some constant $C_{11}$.
	Then with at least probability $1-\delta$, we have
	$$\norm{w_t-w^*}_2^2\leq C_{12}\gamma\quad\textrm{and}\quad\norm{a_t-a^*}_2^2\leq \gamma$$ for any $t$'s such that $T_2\leq t\leq T=\tilde{O}(\eta^{-2})$, where
	$C_{12}$ is a constant and $$T_2=\tilde{O}\Big(\frac{p}{\eta}\log\frac{1}{\gamma}\log
	\frac{1}{\delta}\Big).$$
\end{theorem}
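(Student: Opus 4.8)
The plan is to mirror the structure of the Phase~I analysis but to exploit that, with the reduced noise level, the partial dissipativity of Theorem~\ref{thm_ASC} now forces genuine convergence rather than mere escape. I would split the argument into three stages: (i) show $a_t^\top a^*$ stays trapped in a bounded window $[m,M]$ throughout the epoch; (ii) given (i), show $w_t$ contracts toward $w^*$ until $\|w_t-w^*\|_2^2\le C_{10}\gamma$; (iii) once $w_t$ is this accurate, show $a_t$ contracts toward $a^*$ until $\|a_t-a^*\|_2^2\le\gamma$, and that both accuracies persist for all $t$ up to $T=\tilde O(\eta^{-2})$. The epoch length $T_2$ is then the sum of the three stage lengths, each of order $\tilde O\big(\tfrac p\eta\log\tfrac1\gamma\log\tfrac1\delta\big)$, and the three failure probabilities are combined by a union bound.

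For stage~(i), I would use that for fixed $w$ the map $a\mapsto\cL(w,a)$ is quadratic with a positive definite, well-conditioned Hessian $\EE[\sigma(Z^\top w)\sigma(Z^\top w)^\top]$, so $\nabla_a\cL(w+\xi,a+\epsilon)$ is affine in $a$ with bounded linear part; together with $\|w_t\|_2=1$ and $a_0^\top a^*\in[m_a,M_a]$, a one-step drift estimate shows $\EE[\cdot\mid\cF_t]$ pulls $a_t^\top a^*$ back toward a fixed window whenever it nears the boundary. A Freedman/Azuma supermartingale concentration --- the per-step increment being $O(\eta\cdot\mathrm{poly}(k,p))$ and the predictable quadratic variation $O(\eta^2\cdot\mathrm{poly})$ --- plus a union bound over the $\tilde O(\eta^{-2})$ iterations would give $m\le a_t^\top a^*\le M$ for all $t$ with probability $1-\delta/3$. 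Since $\phi_0\le\tfrac{5}{12}\pi$ forces $w_0^\top w^*=\cos\phi_0>C_9$, this already places $(w_0,a_0)\in\cK_{C_9,m,M}$.

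For stage~(ii), I would write the projected update as $w_{t+1}=(w_t-\eta g_t)/\|w_t-\eta g_t\|_2$ with $g_t=(I-w_tw_t^\top)\nabla_w\cL(w_t+\xi_t,a_t+\epsilon_t)$, use $\langle g_t,w_t\rangle=0$ to expand $u_t:=1-w_t^\top w^*=\tfrac12\|w_t-w^*\|_2^2$ as $u_{t+1}=u_t+\eta\langle g_t,w^*-w_t\rangle+O(\eta^2\|g_t\|_2^2)$, and then take the conditional expectation and invoke the $w$-dissipativity of Theorem~\ref{thm_ASC} over $\cK_{C_9,m,M}$ to get
\begin{align*}
\EE[u_{t+1}\mid\cF_t]\le\Big(1-\tfrac{\eta m(1+C_9)}{8}\Big)u_t+\eta\gamma+O\big(\eta^2\,\mathrm{poly}(k,p)\big),
\end{align*}
a geometric contraction toward a fixed point of size $O(\gamma)$. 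Iterating this bound gives $\EE[u_t]=O(\gamma)$ after $\tilde O\big(\tfrac p\eta\log\tfrac1\gamma\log\tfrac1\delta\big)$ steps; a supermartingale argument on $u_t$ (using that $u_t$ cannot jump by more than $O(\eta\,\mathrm{poly})$ and that the drift is strictly negative while $u_t\gg\gamma$) upgrades this to a high-probability statement and simultaneously certifies $w_t^\top w^*\ge C_9$ throughout, so the iterate never leaves $\cK_{C_9,m,M}$; this costs a further $\delta/3$.

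For stage~(iii), once $\|w_t-w^*\|_2^2\le C_{10}\gamma$ and $a_t^\top a^*\in[m,M]$ the iterate lies in $\cR_{m,M,C_{10}}$, where Theorem~\ref{thm_ASC} gives $\langle-\EE_{\xi,\epsilon}\nabla_a\cL(w+\xi,a+\epsilon),a^*-a\rangle\ge\tfrac{\pi-1}{2\pi}\|a-a^*\|_2^2-\gamma$; since the $a$-update has no projection, $\|a_{t+1}-a^*\|_2^2=\|a_t-a^*\|_2^2-2\eta\langle\nabla_a\cL,a_t-a^*\rangle+\eta^2\|\nabla_a\cL\|_2^2$, and the same contraction-plus-concentration scheme drives $\|a_t-a^*\|_2^2$ below $\gamma$ within $\tilde O\big(\tfrac1\eta\log\tfrac1\gamma\log\tfrac1\delta\big)$ further steps --- the prescribed $\eta=\tilde O\big(\max\{k^4p^6,k^2p/\gamma\}^{-1}\cdot\cdots\big)$ being exactly what renders the $O(\eta^2)$ discretization and noise-variance terms negligible relative to $\gamma$. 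The step I expect to be the main obstacle is the interlocked conditioning: the $w$-drift needs $a_t^\top a^*\in[m,M]$ while the $a$-drift needs $\|w_t-w^*\|_2^2\le C_{10}\gamma$, so the three supermartingales (for $a_t^\top a^*$, $\phi_t$, and $\|a_t-a^*\|_2^2$) must be run on nested good events and stitched together without circularity while keeping the accumulated failure probability below $\delta$ over a horizon of $\tilde O(\eta^{-2})$ steps with $\eta$ as small as $\tilde O(k^2p/\gamma)^{-1}$; controlling the predictable quadratic variation sharply enough to survive that union bound is the delicate part, exactly as for Lemmas~\ref{stage1_a}--\ref{stage-nice} in Phase~I.
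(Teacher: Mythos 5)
Your proposal follows essentially the same route as the paper's proof: maintain the Phase-I conditions (the $a_t^\top a^*$ window and the angle bound) over the whole epoch by drift-plus-Azuma supermartingale arguments (the paper's Lemma \ref{lem_keep}, built on Lemmas \ref{lem_gphi_2}, \ref{lem_sum_a} and \ref{lem_inner_a}), then contract $w_t$ on $\cK_{C_9,m,M}$ via the $w$-dissipativity of Theorem \ref{thm_ASC} (Lemma \ref{stage2_w}), then contract $a_t$ on $\cR_{m,M,C_{10}}$ via the $a$-dissipativity (Lemma \ref{stage2_a}), and finally take $T_2$ as the sum of the stage lengths with a union bound over the three failure events. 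The only cosmetic differences are your treatment of the spherical projection by direct expansion using $\langle g_t,w_t\rangle=0$ (the paper instead compares with the unprojected iterate via $\norm{\tilde w_{t+1}}_2\geq 1$) and your framing of the $a$-window maintenance through the quadratic structure in $a$ alone — note that the positive drift keeping $a_t^\top a^*\geq m$ really comes from the $\mathbb{E}_\xi g(\phi_\xi)>1$ term, so the angle bound must be carried along in the nested good events, exactly as in the paper's Lemma \ref{lem_keep}.
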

Theorem \ref{stage2} shows that Phase II of our proposed algorithm only needs polynomial time to ensure  the convergence to the global optimum with high probability, when the noise is small enough.  Due to space limit, here we only provide a proof sketch of Theorem \ref{stage2}. See more details and the proof of all technical lemmas in Appendix \ref{pf_phase2}.

\begin{proof}[\textit{\textbf{Proof Sketch}}]
	The perturbed GD is already in the solution set of Phase I, which is actually in the dissipative region $\cK_{C_9,m,M}.$ The first lemma shows that even if the noise is reduced,  our proposed algorithm never escape this set.
	\begin{lemma}\label{lem_keep}
		Define $\phi_t(\xi)=\angle(x_t+\xi,x^*).$ Assume there exists some constant  $C_8$ such that $1+C_8/p\leq \EE_\xi g(\phi_t(\xi)) \leq \pi$  and $\EE_\xi\phi_t(\xi)\leq \frac{3\pi}{4}$ for all $t.$ Suppose $$0<m_a\leq a_0^\top{a}^* \leq M_a\quad\textrm{and}\quad\phi_0 \leq \frac{5}{12}\pi.$$
		For any $\delta \in (0,1)$, we choose step size$$\eta=C_{11}\Big(\max\Big\{k^4p^6,\frac{k^2p}{\gamma}\Big\}\max\Big\{1,p\log\frac{1}{\gamma}\log\frac{1}{\delta}\Big\}\Big)^{-1}$$ for some constant $C_{11}.$ Then with at least probability at lease $1-\delta/3$, we have for all $t\leq T=\tilde{O}(\eta^{-2})$,
		$$0<m_a^\prime\leq a_t^\top{a}^* \leq M_a^\prime\quad\textrm{and}\quad\phi_t\leq \frac{11}{24}\pi
		,$$
		where $m_a^\prime=m_a/2$, $M_a^\prime=3M_a$. 
	\end{lemma}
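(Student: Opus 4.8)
The plan is to establish an invariant region argument via supermartingale concentration, showing that once the iterate starts in the slightly shrunk region (with $m_a, M_a, 5\pi/12$) it never leaves the slightly enlarged region (with $m_a/2$, $3M_a$, $11\pi/24$) throughout Phase II. First I would track the three scalar quantities $a_t^\top a^*$ (lower side), $a_t^\top a^*$ (upper side), and $\phi_t = \angle(w_t, w^*)$ separately, since the hypothesis $1 + C_8/p \le \EE_\xi g(\phi_t(\xi)) \le \pi$ together with $\EE_\xi \phi_t(\xi) \le 3\pi/4$ is exactly the Phase-I noise-injection guarantee (Lemma \ref{lem_gphi_1}) that prevents $w$ from drifting toward $v^* = -w^*$; I would carry this over wholesale. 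The one-step drift of each scalar decomposes into (i) a deterministic mean-drift term controlled by the partial dissipativity of the perturbed gradient — here I would invoke Theorem \ref{thm_ASC} for the $w$-component on $\cK_{C_9,m,M}$ and an analogous bound for the $a$-component, noting that the reduced noise $\rho_w^1 \le C_w^1 \gamma/(kp)$ makes the residual term $\gamma$ small — and (ii) a bounded-difference martingale term coming from the fresh noise $\xi_{t}, \epsilon_{t}$ and from the gap between the perturbed gradient and its conditional expectation.

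The key steps, in order: (1) write $\phi_{t+1} \le \phi_t - \eta(\text{mean drift}) + \eta(\text{martingale increment})$ using that the manifold-gradient step on the sphere contracts the angle toward $w^*$ when $(w_t,a_t) \in \cK_{C_9,m_a',M_a'}$, plus a $\tilde O(\eta^2)$ second-order term from the projection $\mathrm{Proj}_{\SSS_0(1)}$; bound $|\phi_{t+1} - \phi_t| \le \eta \cdot \tilde O(kp)$ almost surely so that the increments are $\tilde O(\eta k p)$-bounded. (2) Do the same for $a_t^\top a^*$: the update $a_{t+1} = a_t - \eta \nabla_a \cL(w_t + \xi_t, a_t + \epsilon_t)$ gives $a_{t+1}^\top a^* = a_t^\top a^* - \eta \langle \nabla_a \cL(\cdots), a^*\rangle$, and the perturbed gradient is linear in $a$ up to the ReLU-pattern terms, so $|a_{t+1}^\top a^* - a_t^\top a^*| = \eta \cdot \tilde O(k)$ a.s. and the conditional mean keeps $a_t^\top a^*$ inside $[m_a, M_a]$ up to $\tilde O(\gamma)$ slack (this is where I'd use the $a$-part of Theorem \ref{thm_ASC} or the relevant lemma from Phase I). (3) Apply Azuma–Hoeffding (or a one-sided supermartingale maximal inequality) to each of the three processes over horizon $T = \tilde O(\eta^{-2})$: the accumulated martingale fluctuation is $\tilde O(\sqrt{T \cdot (\eta k p)^2 \log(1/\delta)}) = \tilde O(\eta k p \sqrt{T \log(1/\delta)})$, and with the prescribed $\eta = C_{11}(\max\{k^4 p^6, k^2 p/\gamma\}\max\{1, p\log(1/\gamma)\log(1/\delta)\})^{-1}$ this fluctuation is dominated by the half-gap $\min\{m_a/2, \frac{1}{24}\pi, \ldots\}$ between the inner and outer thresholds. (4) A union bound over the three events (each failing with probability $\le \delta/9$, say) and over the stopping time gives failure probability $\le \delta/3$; conditioned on success, the iterate stays in $\cK_{C_9, m_a', M_a'}$ for all $t \le T$, which is the claim.

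The main obstacle I expect is step (1)–(3) for $\phi_t$ when the iterate is near the \emph{boundary} $w_t^\top w^* = C_9$ (equivalently $\phi_t$ near the outer threshold $11\pi/24$), because there the dissipativity constant $m(1+C_9)/16$ in Theorem \ref{thm_ASC} can be small if $C_9$ is close to $-1$, so the inward mean-drift may not strictly dominate the martingale noise unless $C_9$ is chosen as a fixed constant bounded away from $-1$ (consistent with $\phi_0 \le 5\pi/12$ giving $w_0^\top w^* \ge \cos(5\pi/12) > 0$, so one can take $C_9 = \cos(11\pi/24) > 0$). A secondary subtlety is that the three scalar processes are coupled — the $\phi_t$-drift bound from Theorem \ref{thm_ASC} is only valid while $a_t^\top a^* \in [m_a', M_a']$, and the $a_t^\top a^*$-drift bound is only valid while $\phi_t$ is small — so the clean way to handle this is a simultaneous stopping-time argument: let $\tau$ be the first time any of the three invariants is violated, run the supermartingale analysis on the stopped processes (which are legitimately drift-controlled for $t < \tau$), and show $\Pr(\tau \le T) \le \delta/3$, closing the loop.
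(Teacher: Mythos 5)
Your overall architecture (scalar drift processes, bounded-difference martingale increments, Azuma, a stopping-time/invariant-region argument, and a union bound) matches the paper's, which proves the lemma by combining Lemma \ref{lem_sum_a}, Lemma \ref{lem_inner_a}, and Step 2 of the proof of Lemma \ref{stage-nice}. However, your treatment of the $a$-component has a genuine gap. The conditional drift of $a_t^\top a^*$ is
\[
\EE[a_{t+1}^\top a^*\mid \cF_t]=\Big(1-\frac{\eta(\pi-1)}{2\pi}\Big)a_t^\top a^*+\frac{\eta\,(\EE_\xi g(\phi_\xi)-1)}{2\pi}\norm{a^*}_2^2+\frac{\eta}{2\pi}\big((\one^\top a^*)^2-\one^\top a^*\one^\top a_t\big),
\]
so it is coupled to a fourth scalar process, $\one^\top a^*\one^\top a_t$, which you never track; without first bounding it (this is exactly the paper's Lemma \ref{lem_sum_a}, itself a supermartingale/Azuma argument), your claim that ``the conditional mean keeps $a_t^\top a^*$ inside $[m_a,M_a]$ up to $\tilde{O}(\gamma)$ slack'' is unjustified, since the last term can drive $a_t^\top a^*$ out of range. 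Moreover, the specific tool you name for this step, the $a$-part of Theorem \ref{thm_ASC}, is not available here: its dissipative region $\cR_{m,M,C_{10}}$ requires $\norm{w-w^*}_2^2\le C_{10}\gamma$, which in Phase II is only established later by Lemma \ref{stage2_w}, and Lemma \ref{stage2_w} in turn presupposes the conclusion of the present lemma, so invoking it would be circular; the Phase-I $a$-dissipativity (Theorem \ref{thm_SC}) also does not apply, since its region $\cA_{C_2,C_3}$ excludes iterates with $a^\top a^*$ of the size guaranteed at the start of Phase II. The correct route---and the paper's---is to control $\one^\top a^*\one^\top a_t$ and then $a_t^\top a^*$ directly from the explicit linear recursion, using only the assumed bounds $1+C_8/p\le\EE_\xi g(\phi_t(\xi))\le\pi$ (Lemmas \ref{lem_sum_a} and \ref{lem_inner_a}), and then run the $\phi_t$ invariance exactly as in Step 2 of the proof of Lemma \ref{stage-nice}, which is essentially what your steps (1) and (3) describe for the $w$-component.

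A smaller remark: you describe the assumed expectation bounds as the Phase-I guarantee of Lemma \ref{lem_gphi_1}, but in Phase II the noise level is $\rho_w^1\le C_w^1\gamma/(kp)<1$, far below the $\rho_w\gtrsim p^2$ required there; the paper verifies these conditions via the small-noise Lemma \ref{lem_gphi_2}. Since the lemma statement takes them as hypotheses, this does not invalidate your argument for the lemma itself, but it matters when the lemma is applied within Theorem \ref{stage2}.
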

	Lemma \ref{lem_keep} shows that throughout sufficiently many iterations of Phase II, $(w_t, a_t)$'s are at least as accurate as the initial solution with high probability. Thus, we can guarantee that the perturbed GD algorithm stays away from the spurious local optimum, and the benign optimization landscape in Theorem \ref{thm_ASC} holds. 
	
	The next lemma characterizes the convergence properties of the perturbed GD algorithm for $w$.
	\begin{lemma}\label{stage2_w}
		Suppose $\phi_t \leq \frac{11}{24}\pi$ and $
		0<m_a^\prime\leq a_t^\top{a}^* \leq M_a^\prime$ hold for all $t.$  For any $\gamma>0,$ we choose $\rho_w^1\leq C_w^1\frac{\gamma}{kp}< 1$ and $\rho_a\leq C_a^1$ for small enough constant $C_w^1$ and $C_a^1$. For any $\delta \in (0,1)$, we choose step size $$\eta=C_{11}\Big(\max\Big\{k^4p^6,\frac{k^2p}{\gamma}\Big\}\max\Big\{1,p\log\frac{1}{\gamma}\log\frac{1}{\delta}\Big\}\Big)^{-1}$$ for some constant $C_{11}$. Then with at least probability at least $1-\delta/3$, we have
		$$\norm{w_t-w^*}_2^2\leq C_{12}\gamma$$ for all t's such that $\tau_{21}\leq t\leq\tilde{O}(\eta^{-2})$, where
		$C_{12}$ is a constant and $$\tau_{21}=\tilde{O}\Big(\frac{p}{\eta}\log\frac{1}{\gamma}\log
		\frac{1}{\delta}\Big).$$
	\end{lemma}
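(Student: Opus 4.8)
The plan is a supermartingale argument on the potential $P_t=\norm{w_t-w^*}_2^2$. Since $w_t,w^*\in\SSS_0(1)$ we have $P_t=2(1-w_t^\top w^*)$, and the manifold update direction $(I-w_tw_t^\top)\nabla_w\cL(w_t+\xi_t,a_t+\epsilon_t)$ is orthogonal to $w_t$, so the pre-projection point $u_t=w_t-\eta(I-w_tw_t^\top)\nabla_w\cL(w_t+\xi_t,a_t+\epsilon_t)$ satisfies $\norm{u_t}_2\geq 1$; projecting a point of norm $\geq 1$ onto $\SSS_0(1)$ cannot increase its distance to the unit vector $w^*$, hence $P_{t+1}\leq\norm{u_t-w^*}_2^2$. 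Expanding the square and taking the conditional expectation $\EE_t$ over $(\xi_t,\epsilon_t)$ gives
\begin{align*}
\EE_t P_{t+1}\leq P_t-2\eta\,\EE_t\big\langle(I-w_tw_t^\top)\nabla_w\cL(w_t+\xi_t,a_t+\epsilon_t),\,w_t-w^*\big\rangle+\eta^2\,\EE_t\norm{(I-w_tw_t^\top)\nabla_w\cL(w_t+\xi_t,a_t+\epsilon_t)}_2^2 .
\end{align*}
Under the standing hypotheses $\phi_t\leq\frac{11}{24}\pi$ and $0<m_a'\leq a_t^\top a^*\leq M_a'$, the pair $(w_t,a_t)$ lies in $\cK_{C_9,m_a',M_a'}$ with $C_9=\cos\frac{11}{24}\pi>0$, so Theorem \ref{thm_ASC} lower-bounds the cross term by $c\,P_t-\gamma$ with $c=\Theta(m_a')=\Theta(\norm{a^*}_2^2/p)>0$; and since $\xi_t,\epsilon_t$ are bounded and the iterate stays in the region where the analysis is valid (with $\norm{a_t}_2$ polynomially bounded, maintained by the accompanying analysis of the $a$-iterates), the last term is at most $\eta^2 B$ for some $B$ polynomial in the problem dimensions. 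This yields the mean-reverting recursion
\begin{align*}
\EE_t P_{t+1}\leq(1-\eta c)P_t+2\eta\gamma+\eta^2 B .
\end{align*}

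The step size $\eta=C_{11}\big(\max\{k^4p^6,\,k^2p/\gamma\}\max\{1,p\log\tfrac1\gamma\log\tfrac1\delta\}\big)^{-1}$ is chosen small enough (this is where the $k^2p/\gamma$ factor enters) that $2\eta\gamma+\eta^2 B\leq\tfrac12\,\eta c\,C_{12}\gamma$ for a suitable constant $C_{12}$. Then whenever $P_t>C_{12}\gamma$ we get the strict contraction $\EE_t[P_{t+1}-C_{12}\gamma]\leq(1-\eta c)(P_t-C_{12}\gamma)$, so the nonnegative quantity $(P_t-C_{12}\gamma)_+$ shrinks geometrically in expectation until $P_t$ first reaches the target region $\{P\leq C_{12}\gamma\}$. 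Iterating from $P_0=\norm{w_0-w^*}_2^2=O(1)$ gives $\EE(P_t-C_{12}\gamma)_+\leq(1-\eta c)^tP_0$, which is below $\gamma$ once $t\geq\tau_{21}=\tilde O\!\big(\tfrac1{\eta c}\log\tfrac1\gamma\big)=\tilde O\!\big(\tfrac p\eta\log\tfrac1\gamma\log\tfrac1\delta\big)$, the extra $\log\tfrac1\delta$ being reserved for the concentration step.

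To upgrade this in-expectation statement to the claimed high-probability, for-all-$t$ conclusion I would combine a hitting-time bound with a confinement bound. Letting $\tau$ be the first time $P_t\leq C_{12}\gamma$, the process $(1-\eta c)^{-(t\wedge\tau)}(P_{t\wedge\tau}-C_{12}\gamma)_+$ is a nonnegative supermartingale, so a maximal inequality together with its geometric decay shows $\tau>\tau_{21}$ with probability at most $\delta/6$. On the complementary event, I would show $P_t$ never leaves the slightly inflated region $\{P\leq C_{12}\gamma\}$ for $\tau\leq t\leq T=\tilde O(\eta^{-2})$: at the boundary the drift is $-\Omega(\eta c\gamma)$, while the centered increment $P_{t+1}-\EE_tP_{t+1}$ is bounded and its conditional variance scales like $\eta^2 B\cdot P_t$ (it is essentially the martingale part of $-2\eta\langle\nabla_w\cL(w_t+\xi_t,a_t+\epsilon_t)-\EE_t\nabla_w\cL(w_t+\xi_t,a_t+\epsilon_t),\,w_t-w^*\rangle$), so a Freedman-type, variance-aware martingale concentration, unioned over the $T$ iterations, controls the supremum with failure probability $\leq\delta/6$; here the $\log\tfrac1\delta$ inside $\eta$ is precisely what makes the union over polynomially many steps affordable. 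A union bound over the two events gives probability $\geq1-\delta/3$, and resetting $C_{12}$ to absorb the inflation completes the proof.

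The main obstacle is the confinement step over the long horizon $T=\tilde O(\eta^{-2})$: the per-iteration fluctuation of $P_t$ is of order $\eta\sqrt{B}$, which can dwarf the target radius $\sqrt{\gamma}$, so a naive Azuma bound does not suffice. One must exploit that the martingale increments are proportional to $\sqrt{P_t}$, which makes the stationary variance of the recursion $O(\eta B\gamma/c)=O(\gamma^2)$ rather than $O(\eta B)$, and then run a careful stopped-process argument to rule out accumulation of the fluctuations. This is the ``very involved and technical'' supermartingale analysis deferred to Appendix \ref{pf_phase2}; everything else reduces to the descent inequality above together with bookkeeping of the constants in $\eta$, $\tau_{21}$, and $C_{12}$.
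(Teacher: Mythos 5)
Your proposal follows essentially the same route as the paper's proof: the dissipativity bound of Theorem \ref{thm_ASC} plus the observation that projection onto $\SSS_0(1)$ from a point of norm at least one cannot increase the distance to $w^*$ gives the contraction recursion, a hitting-time bound yields $\tau_{21}$, and confinement over $\tilde{O}(\eta^{-2})$ steps is obtained from a supermartingale argument whose increments are of order $\eta k\sqrt{\gamma}$ precisely because they scale with $\sqrt{P_t}$ on the confinement event (the paper implements this with Azuma applied to the event-stopped supermartingale $G_t$ rather than a Freedman-type bound, and uses Markov's inequality with $\log\frac{1}{\delta}$ restarts rather than a maximal inequality for the hitting time, which are cosmetic differences). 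One bookkeeping caveat: charging the full bias $\gamma$ from Theorem \ref{thm_ASC} against a contraction rate of order $\eta m_a'=\Theta(\eta\norm{a^*}_2^2/p)$ places the fixed point at $\Theta(p\gamma)$, so to get a dimension-free constant $C_{12}$ you must, as the paper does, track the actual bias $M_a\rho_w=O(\gamma\norm{a^*}_2^2/p)$ against $\lambda_3=\Theta(\eta\norm{a^*}_2^2/p)$ (equivalently, invoke the theorem at level $\gamma/p$ by shrinking $\rho_w$), which is permitted by the lemma's hypotheses.
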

	Lemma \ref{stage2_w} shows that at $\tau_{21}$ iterations, the perturbed GD algorithm enters $\cR_{m_a^\prime,M_a^\prime,C_{12}}$. Then we can characterize its convergence properties for $a$, as shown in the next lemma.
	\begin{lemma}\label{stage2_a}
		Suppose $(w_t,a_t)\in\cR_{m_a^\prime,M_a^\prime,C_{12}}$ holds for all t.
		For any $\gamma>0,$ we choose $\rho_w^1\leq C_w^1\frac{\gamma}{kp}< 1$ and $\rho_a\leq C_a^1$ for small enough constant $C_w^1$ and $C_a^1$. For any $\delta \in (0,1)$, we choose step size$$\eta=C_{11}\Big(\max\Big\{k^4p^6,\frac{k^2p}{\gamma}\Big\}\max\Big\{1,p\log\frac{1}{\gamma}\log
		\frac{1}{\delta}\Big\}\Big)^{-1}$$ for some constant $C_{11}$. Then with at least probability $1-\delta/3$, we have
		$$\norm{a_t-{a}^*}_2^2\leq \gamma$$ for all $t$'s such that $\tau_{22}\leq t\leq\tilde{O}(\eta^{-2})$, where $$\tau_{22}=\tilde{O}\Big(\frac{p}{\eta}\log\frac{1}{\gamma}\log
		\frac{1}{\delta}\Big).$$
	\end{lemma}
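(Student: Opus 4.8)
The plan is to mirror the proof of Lemma~\ref{stage2_w}, but now monitor the squared optimization error of $a$, namely $V_t=\norm{a_t-a^*}_2^2$. Concretely: (i) convert the $a$-dissipativity of Theorem~\ref{thm_ASC} into a one-step drift inequality for $V_t$; (ii) iterate it to drive $V_t$ below $\gamma$ within $\tau_{22}$ iterations; (iii) run an exponential-supermartingale concentration argument on a suitably stopped process to show $V_t$ never climbs back above $\gamma$ for the rest of the horizon $\tilde{O}(\eta^{-2})$, with the total failure probability budgeted to $\delta/3$ ($\delta/6$ for hitting, $\delta/6$ for staying).

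For the drift, let $\cF_t$ denote the history $(w_1,a_1),\dots,(w_t,a_t)$. Expanding $a_{t+1}=a_t-\eta\nabla_a\cL(w_t+\xi_t,a_t+\epsilon_t)$ and taking $\mathbb{E}[\,\cdot\mid\cF_t]$,
\begin{align*}
\mathbb{E}[V_{t+1}\mid\cF_t]=V_t-2\eta\big\langle\mathbb{E}_{\xi_t,\epsilon_t}\nabla_a\cL(w_t+\xi_t,a_t+\epsilon_t),\,a_t-a^*\big\rangle+\eta^2\,\mathbb{E}_{\xi_t,\epsilon_t}\norm{\nabla_a\cL(w_t+\xi_t,a_t+\epsilon_t)}_2^2.
\end{align*}
Since $(w_t,a_t)\in\cR_{m_a',M_a',C_{12}}$ by assumption, I would invoke Theorem~\ref{thm_ASC} --- applied with target additive error a small constant multiple of $\gamma$, which only tightens the constant in $\rho_w^1\le C_w^1\gamma/(kp)$ --- to lower bound the inner product by $\tfrac{\pi-1}{2\pi}V_t-\tfrac{\pi-1}{8\pi}\gamma$. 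For the quadratic term I would use a uniform polynomial bound $G=\mathrm{poly}(k,p)$ on $\norm{\nabla_a\cL(w_t+\xi_t,a_t+\epsilon_t)}_2$ over the region (the relevant quantities $a_t^\top a^*$, $\norm{a_t}_2$, $\norm{w_t}_2$ and the perturbation radii $\rho_w^1,\rho_a^1$ are all bounded there), noting the step size is chosen precisely so that $\eta G^2=O(\gamma)$ up to polylog factors. Collecting terms gives the contraction
\begin{align*}
\mathbb{E}[V_{t+1}\mid\cF_t]\le(1-c_0\eta)V_t+c_1\eta\gamma,\qquad c_0=\tfrac{\pi-1}{\pi},
\end{align*}
where $c_1$ is an arbitrarily small constant obtained by shrinking $C_w^1$ and $C_{11}$; in particular the fixed point $V^\star=c_1\gamma/c_0$ can be forced below $\gamma/2$.

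Next I would iterate the contraction over $\tau_{22}=\tilde{O}\big(\tfrac{p}{\eta}\log\tfrac1\gamma\log\tfrac1\delta\big)$ steps, taken a sufficiently large multiple of $\tfrac{p}{\eta}\log\tfrac1\gamma\log\tfrac1\delta$: the deterministic decay obeys $(1-c_0\eta)^{\tau_{22}}\le\exp(-c_0\eta\tau_{22})=\exp\big(-\tilde{\Omega}(p\log\tfrac1\gamma\log\tfrac1\delta)\big)\ll\gamma$, so in expectation $V_{\tau_{22}}\le V^\star+o(\gamma)\le\tfrac23\gamma$, and an analogous concentration argument upgrades this to $V_{\tau_{22}}\le\gamma$ with probability $1-\delta/6$. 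For the staying part, I would observe that whenever $V_t\in(\gamma/2,\gamma]$ the conditional drift is at most $-c_0\eta(V_t-V^\star)\le-c_0\eta\gamma/4<0$, while the one-step change is controlled, $|V_{t+1}-V_t|\le 2\eta G\sqrt{V_t}+\eta^2G^2=O(\eta G\sqrt\gamma)\ll\gamma$. Stopping the process the first time it would exceed $\gamma$ (so the negative-drift and increment bounds stay valid up to that time), $e^{\lambda V_t}$ with $\lambda=\Theta(1/(\eta G^2))$ is a supermartingale, and Doob's maximal inequality (together with a union over the at most $\tilde{O}(\eta^{-2})$ up-crossings, whose cost is only polylogarithmic) yields
\begin{align*}
\Pr\Big(\sup_{\tau_{22}\le t\le\tilde{O}(\eta^{-2})}V_t\ge\gamma\ \Big|\ V_{\tau_{22}}\le\tfrac{\gamma}{2}\Big)\le\exp\big(-\Omega(\gamma/(\eta G^2))\big)\le\exp\big(-\tilde{\Omega}(p\log\tfrac1\gamma\log\tfrac1\delta)\big)\le\tfrac{\delta}{6}.
\end{align*}
A union bound over the hitting and staying events then gives $\norm{a_t-a^*}_2^2\le\gamma$ for all $\tau_{22}\le t\le\tilde{O}(\eta^{-2})$ with probability $1-\delta/3$.

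The hard part will be step (iii): building the exponential supermartingale and carrying out the optional-stopping/excursion argument so that the uniform-in-$t$ tail bound holds over the entire $\tilde{O}(\eta^{-2})$-iteration horizon. This is delicate because $V_t$ has negative drift only above $V^\star$, so excursions below $\gamma/2$ must be handled without destroying the supermartingale property --- using that increments are far smaller than $\gamma/2$, so the chain cannot jump from below $\gamma/2$ past $\gamma$ in one step --- and the calibration of $\eta$ and $\rho_w^1$ must simultaneously keep $V^\star$ safely under $\gamma/2$, make $\eta G^2$ small enough that the tail exponent $\gamma/(\eta G^2)$ dominates $p\log\tfrac1\gamma\log\tfrac1\delta$, and --- via Lemmas~\ref{lem_keep} and~\ref{stage2_w}, before the iteration index is restarted --- keep $(w_t,a_t)$ inside $\cR_{m_a',M_a',C_{12}}$ so that Theorem~\ref{thm_ASC} applies at every step. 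Once these calibrations are in place, the remaining computations (the gradient-norm bound and the bookkeeping of polylog factors) are routine.
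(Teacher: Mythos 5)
Your proposal is correct in its overall architecture and, for the drift and hitting parts, essentially coincides with the paper's proof: the paper likewise combines the $a$-dissipativity of Theorem \ref{thm_ASC} over $\cR_{m_a^\prime,M_a^\prime,C_{12}}$ with the bound $\mathbb{E}_{\xi,\epsilon}\norm{\nabla_a\cL(w+\xi,a+\epsilon)}_2^2\leq \frac{(k+\pi-1)^2}{\pi^2}(\norm{a-a^*}_2^2+\rho_a^2)+\gamma$ to get an affine recursion $\mathbb{E}[V_{t+1}\mid\cF_t]\leq(1-\lambda_4)V_t+O(\eta\gamma)$, and then drives $V_t$ into an $O(\gamma)$-ball by iterating the recursion and applying Markov's inequality, restarted $\log\frac{1}{\delta}$ times (your ``analogous concentration argument''). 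Where you genuinely diverge is the staying step. The paper does not build an exponential supermartingale: it works with the stopped \emph{linear} process $G_t=(1-\lambda_4)^{-t}(V_t-5\gamma)$ multiplied by the indicator of the staying event, bounds its one-step increments by $\tilde{O}(\eta\sqrt{\gamma}k)$ after the geometric rescaling, applies Azuma's inequality at each iteration with failure probability $\tilde{O}(\eta^2\delta)$, and takes a union bound over the $\tilde{O}(\eta^{-2})$ horizon. This sidesteps exactly the delicate points you flag in your step (iii): there is no excursion decomposition, no optional stopping, and no issue with the drift changing sign below the fixed point, because the affine recursion (hence the supermartingale property of the indicator-stopped $G_t$) is valid for all values of $V_t$, not only above $V^\star$. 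Your route with $e^{\lambda V_t}$, $\lambda=\Theta(1/(\eta G^2))$, should also close given the step-size calibration (the exponent $\gamma/(\eta G^2)=\tilde{\Omega}(p^2\log\frac{1}{\gamma}\log\frac{1}{\delta})$ indeed dominates the union-bound cost), but it is strictly heavier machinery than the problem requires.

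One calibration claim in your drift step is not quite right as stated: you cannot make the additive error in the $a$-dissipativity an arbitrarily small multiple of $\gamma$ ``by shrinking $C_w^1$'' alone, because part of that error comes from the $w$-inaccuracy $\norm{w_t-w^*}_2^2\leq C_{12}\gamma$ entering through $\pi-\mathbb{E}_\xi g(\phi_\xi)$ (Lemma \ref{lem_gphi_2}), and the constant $C_{12}$ is inherited from Lemma \ref{stage2_w}, not at your disposal. Consequently the fixed point $V^\star$ is only guaranteed to be $O(\gamma)$ with an uncontrolled constant, not below $\gamma/2$. The paper handles this by proving the conclusion with $5\gamma$, $6\gamma$, $12\gamma$ thresholds and then \emph{rescaling} $\gamma$ by a constant at the end (equivalently, running the whole argument with $\gamma^\prime=\gamma/C$, which only changes the constants $C_w^1$, $C_{11}$, $C_{10}$); you should adopt the same rescaling rather than rely on shrinking $C_w^1$. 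With that fix, and with the subsequent thresholds in your excursion argument ($\gamma/2$ versus $\gamma$) replaced by the rescaled ones, your proof goes through.
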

	Similar to Lemmas \ref{stage1_a}--\ref{stage-nice}, the proof of Lemmas \ref{lem_keep}--\ref{stage2_a} also requires supermartingale-based analysis. See more details in Appendix \ref{pf_phase2}.
	
	Combining the above lemmas together, we take $T_2=\tau_{21}+\tau_{22}$, and complete the proof of Theorem \ref{stage2}.
\end{proof}

\section{Extension to Perturbed SGD}\label{extension}

Our analysis can be further extended to the perturbed mini-batch stochastic gradient descent (perturbed SGD) algorithm. Specifically, we solve
\begin{align}\label{opt-recall-bounded}
	\min_{w,a} \cL(w,a)\quad\text{subject to}\quad w^\top w=1,~\norm{a}_2\leq R,
\end{align}
where $R$ is some tuning parameter. At the $t$-the iteration, we independently sample Gaussian random matrices $Z^{(1)},...,Z^{(m)}$,  where $m$ is the batch size, and obtains the stochastic approximation of $\nabla\cL(w,a)$ by
\begin{align*}
	\nabla\hat{\cL}(w,a) = \nabla\frac{1}{m}\sum_{i=1}^m\ell(w,a,Z^{(i)})= \frac{1}{m}\sum_{i=1}^m\nabla\ell(w,a,Z^{(i)}),
\end{align*}
where $\nabla_w\ell(w,a,Z)$ and $\nabla_a\ell(w,a,Z)$ take the form as follows,
\begin{subequations}\label{stoc_approx}
	\begin{align}
		\nabla_w\ell(w,a,Z) &= \left(\sum_{j=1}^ka_ja_j^*Z_jZ_j^\top\mathds{1}(Z_j^\top w\geq 0, Z_j^\top w^*\geq0)+\sum_{j\neq i}a_ia_j^*Z_iZ_j^\top\mathds{1}(Z_j^\top w\geq 0, Z_j^\top w^*\geq0)\right)w^*,\\
		\nabla_a\ell(w,a,Z) &= \sigma(Zw)\sigma(Zw)^\top a-\sigma(Zw)\sigma(Zw^*)^\top a^*.
	\end{align}
\end{subequations}
The perturbed SGD algorithm then takes
\begin{align*}
	a_{t+1}& = \Pi_{\BB_{0}(R)}\big(a_{t} - \eta \nabla_a\hat \cL(w_t + \xi_t,a_t + \epsilon_t)\big),\\
	w_{t+1} &= \mathrm{Proj}_{\mathbb{S}_0(1)} \big(w_t-\eta (I-w_tw_t^\top)\nabla_w\hat \cL(w_t + \xi_t,a_t + \epsilon_t)\big),
\end{align*}
where $\eta$ is the learning rate, and $\Pi_{\BB_{0}(R)}(\cdot)$ denotes the projection operator to $\BB_{0}(R)$.


Since $Z$ is a Gaussian random matrices with independent entries and $w$ is on the unit sphere, $\sigma(Zw)$ follows a half-normal distribution with variance $(1-\pi/2)$. Therefore, one can verify that all entries of $Z_jZ_j^\top\mathds{1}(Z_j^\top w\geq 0, Z_j^\top w^*\geq0),$ $Z_iZ_j^\top\mathds{1}(Z_j^\top w\geq 0, Z_j^\top w^*\geq0),$ $\sigma(Zw)\sigma(Zw)^\top$ and $\sigma(Zw)\sigma(Zw^*)^\top$ are sub-exponential  random variable with $O(1)$ mean and variance proxy. 
We then can characterize the estimation error of the stochastic gradient as follows.
\begin{lemma}\label{stoc_approx_concentration}
	Suppose that for any $\delta,\epsilon>0,$ $w\in\SSS_0(1)$ and $a\in\BB_{0}(R),$ given a mini-batch size $$m=\mathrm{poly}\left(p,k,R,\frac{1}{\epsilon},\log\frac{1}{\delta}\right),$$ with at least probability $1-\delta$, we have 
	\begin{align*}
		\norm{\nabla_w\hat{\cL}(w,a)-\nabla_w\cL(w,a)}_2^2\leq\epsilon\quad\textrm{and}\quad\norm{\nabla_a\hat{\cL}(w,a)-\nabla_a\cL(w,a)}_2^2\leq\epsilon.
	\end{align*}
\end{lemma}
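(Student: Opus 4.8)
The plan is to recognize $\nabla_w\hat{\cL}(w,a)$ and $\nabla_a\hat{\cL}(w,a)$ as empirical averages of $m$ i.i.d.\ copies of the per-sample gradients $\nabla_w\ell(w,a,Z)$ and $\nabla_a\ell(w,a,Z)$ in \eqref{stoc_approx}, to show that these per-sample gradients are \emph{unbiased} estimators of the population gradients $\nabla_w\cL(w,a)$ and $\nabla_a\cL(w,a)$, to bound the sub-exponential norm of each coordinate of the per-sample gradients by a polynomial in $k$ and $R$, and finally to apply a coordinatewise Bernstein inequality together with a union bound over the $p+k$ coordinates.

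First I would verify unbiasedness, i.e.\ $\EE_Z[\nabla_a\ell(w,a,Z)]=\nabla_a\cL(w,a)$ and $\EE_Z[\nabla_w\ell(w,a,Z)]=\nabla_w\cL(w,a)$. For $a$ this is immediate, since $\ell(w,a,Z)$ is quadratic in $a$ and $\EE_Z$ commutes with $\nabla_a$ by dominated convergence (the integrand and its $a$-gradient are dominated by a fixed polynomial in the integrable sub-exponential matrices $\sigma(Zw)\sigma(Zw)^\top$ and $\sigma(Zw)\sigma(Zw^*)^\top$). For $w$, the ReLU is non-differentiable only on $\bigcup_j\{Z_j^\top w=0\}$, a Lebesgue-null event that has probability zero under the Gaussian law of $Z$; off this event $\ell(w,a,Z)$ is differentiable in $w$ with the derivative stated in \eqref{stoc_approx}, and the same domination lets us exchange $\nabla_w$ and $\EE_Z$, matching the closed form of $\nabla_w\cL$. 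Next I would control the per-coordinate sub-exponential norms. As noted just before the lemma, every entry of $Z_jZ_j^\top\mathds{1}(\cdot)$, $Z_iZ_j^\top\mathds{1}(\cdot)$, $\sigma(Zw)\sigma(Zw)^\top$ and $\sigma(Zw)\sigma(Zw^*)^\top$ is sub-exponential with $O(1)$ norm — an indicator truncation only shrinks tails, and a product of two standard normals (correlated or not) is sub-exponential. Since each coordinate of $\nabla_w\ell$ is a linear combination of such entries with coefficients $a_ia_j^*$, the triangle inequality for the $\psi_1$-norm gives $\norm{[\nabla_w\ell(w,a,Z)]_\ell}_{\psi_1}\le C\norm{a}_1\norm{a^*}_1$, which is $\mathrm{poly}(k,R)$ since $\norm{a}_1\le\sqrt{k}\,\norm{a}_2\le\sqrt{k}\,R$ and $a^*$ is fixed; each coordinate of $\nabla_a\ell$ is likewise linear in the entries of $\sigma(Zw)\sigma(Zw)^\top$ and $\sigma(Zw)\sigma(Zw^*)^\top$ with coefficients bounded by $\max\{\norm{a}_1,\norm{a^*}_1\}$, so $\norm{[\nabla_a\ell(w,a,Z)]_j}_{\psi_1}=\mathrm{poly}(k,R)$ as well. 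Write $K=\mathrm{poly}(k,R)$ for a common upper bound.

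Then I would apply Bernstein's inequality for sub-exponential variables to each coordinate of the empirical average: $\Pr\big(|[\nabla_w\hat{\cL}(w,a)]_\ell-[\nabla_w\cL(w,a)]_\ell|>t\big)\le 2\exp\big(-c\,m\min\{t^2/K^2,\ t/K\}\big)$. Choosing $t=\sqrt{\epsilon/p}$ and demanding this probability be at most $\delta/(2p)$ forces $m\ge c'\max\{K^2p/\epsilon,\ K\sqrt{p/\epsilon}\}\log(4p/\delta)$, which is $\mathrm{poly}(p,k,R,1/\epsilon,\log(1/\delta))$; repeating with $t=\sqrt{\epsilon/k}$ and failure probability $\delta/(2k)$ handles the $k$ coordinates of $\nabla_a\ell$. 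A union bound over all $p+k$ coordinates then gives, with probability at least $1-\delta$, $\norm{\nabla_w\hat{\cL}(w,a)-\nabla_w\cL(w,a)}_2^2\le p\cdot(\epsilon/p)=\epsilon$ and $\norm{\nabla_a\hat{\cL}(w,a)-\nabla_a\cL(w,a)}_2^2\le k\cdot(\epsilon/k)=\epsilon$, which is the claim at the stated mini-batch size.

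\textbf{The main obstacle} is twofold. First, the rigorous interchange of $\nabla_w$ and $\EE_Z$ through the non-smooth ReLU: one must argue that the relevant subdifferential is almost surely a singleton and exhibit an integrable dominating envelope, so that $\nabla_w\cL$ is genuinely a gradient and equals $\EE_Z\nabla_w\ell$. Second, pinning down the precise polynomial dependence of $K$ on $k$ — each coordinate of $\nabla_w\ell$ is a sum over the $k^2$ cross terms $Z_iZ_j^\top$, and because these share the same Gaussian matrix $Z$ one is forced to use the cruder $\psi_1$-triangle-inequality bound rather than an independence-based variance computation — and on $R$ through $\norm{a}_\infty\le\norm{a}_2\le R$, and then threading these estimates through Bernstein so that the resulting $m$ is genuinely polynomial, not exponential, in $k,p,R,1/\epsilon$ and $\log(1/\delta)$.
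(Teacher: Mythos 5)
Your proposal is correct and follows essentially the same route the paper indicates: the paper omits the proof, stating only that it follows from the concentration properties of sub-exponential random variables (as noted just before the lemma for the entries of $Z_jZ_j^\top\mathds{1}(\cdot)$, $Z_iZ_j^\top\mathds{1}(\cdot)$, $\sigma(Zw)\sigma(Zw)^\top$, $\sigma(Zw)\sigma(Zw^*)^\top$) together with a simple union bound, which is precisely your coordinatewise Bernstein-plus-union-bound argument. Your additional care about unbiasedness, the a.s.\ differentiability of the ReLU, and the $\mathrm{poly}(k,R)$ dependence of the $\psi_1$-norms simply fills in details the paper leaves implicit.
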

The proof of Lemma \ref{stoc_approx_concentration} is straightforward (by simple union bound and the concentration properties of sub-exponential random variable), and therefore omitted. Lemma \ref{stoc_approx_concentration} implies that as long as the batch size is sufficiently large, we can show the mini-batch stochastic gradient is sufficiently accurate with high probability. Then we can adapt the convergence analysis in Section \ref{sec_theory}, and show that P-SGD can avoid spurious local optimum with high probability in Phase I.
\begin{theorem}[P-SGD escapes the spurious local optimum]
	Suppose $\norm{a^*}_2\leq R$, $\rho_w^0=C_w^0 kp^2\geq 1, ~\rho_a^0=C_a^0, ~  a_0\in\BB_0\Big(\frac{|\one^\top{a^*}|}{\sqrt{k}}\Big)$ and $w_0\in \SSS_0(1).$ For any $\delta\in(0,1),$ we choose a small enough step size $$\eta=\left(\mathrm{poly}\left(p,k,R,\log\frac{1}{\delta}\right)\right)^{-1}$$ and a large enough mini batch-size $$m=\mathrm{poly}\left(p,k,R,\log\frac{1}{\delta}\right),$$ then with at least probability $1-\delta$, we have 
	\begin{align*}
		m_a\leq a_t^\top{a}^* \leq M_a\quad\text{and}\quad\phi_t\leq \frac{5}{12}\pi
	\end{align*}
	for all $t$'s such that $\hat T_1\leq t\leq \tilde{O}(\eta^{-2})$, where $m_a$ and $M_a$ are some constants and $$
	\hat T_1=\mathrm{poly}\Big(p,k,\log\frac{1}{\delta}\Big).$$
\end{theorem}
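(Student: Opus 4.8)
The plan is to reduce the claim to the perturbed-GD analysis of Phase~I (Theorem~\ref{stage1_w} together with its supporting Lemmas~\ref{lem_initial}--\ref{stage-nice}), treating the mini-batch stochastic gradient as the population perturbed gradient plus a small, uniformly bounded error. Writing $e^a_t=\nabla_a\hat\cL(w_t+\xi_t,a_t+\epsilon_t)-\nabla_a\cL(w_t+\xi_t,a_t+\epsilon_t)$ and $e^w_t$ analogously for the $w$-update, Lemma~\ref{stoc_approx_concentration} gives $\norm{e^a_t}_2^2\le\epsilon$ and $\norm{e^w_t}_2^2\le\epsilon$ at any fixed iterate with probability $1-\delta'$ once $m=\mathrm{poly}(p,k,R,1/\epsilon,\log(1/\delta'))$; here $\norm{a_t}_2\le R$ is enforced by the projection $\Pi_{\BB_0(R)}$ and $w_t\in\SSS_0(1)$ always, so the hypotheses of Lemma~\ref{stoc_approx_concentration} are met (and $a^*$ is feasible since $\norm{a^*}_2\le R$). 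A union bound over the $\tilde O(\eta^{-2})$ iterations of Phase~I — which only inflates the batch size by a $\log(\tilde O(\eta^{-2})/\delta)=\mathrm{poly}(p,k,\log(1/\delta))$ factor — produces a single good event $\mathcal{E}$ of probability at least $1-\delta/2$ on which every stochastic gradient used in Phase~I is $\sqrt\epsilon$-accurate. Choosing $\epsilon$ polynomially small (e.g.\ $\epsilon\le 1/(kp^3)$) keeps $m$ polynomial.

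On $\mathcal{E}$, perturbed SGD is exactly the perturbed-GD update of Algorithm~\ref{NMGD} with each population perturbed gradient replaced by itself plus a vector of norm at most $\sqrt\epsilon$. I would first revisit Theorem~\ref{thm_SC}: since $\langle-e^a_t,a^*-a\rangle\ge-\sqrt\epsilon\,\norm{a-a^*}_2\ge-\tfrac{C_1}{2p}\norm{a-a^*}_2^2-\tfrac{p\epsilon}{2C_1}$ by Young's inequality, the stochastic perturbed gradient w.r.t.\ $a$ remains partial dissipative over $\cA_{C_2,C_3}$ with the same rate $\Omega(1/p)$ and an additional bias $O(p\epsilon)$; the same reasoning keeps the $w$-component partial dissipative over $\cK_{C_4,m,M}$ with an additional bias $O(\sqrt\epsilon)$ on top of the existing $C_5 k/\rho_w$. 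For $\epsilon$ polynomially small these extra biases are of the same (or smaller) order as the errors already absorbed in Phase~I, so the qualitative targets of Theorem~\ref{stage1_w} — $\phi_t\le 5\pi/12$ and $m_a\le a_t^\top a^*\le M_a$ with constants $m_a,M_a$ of the same order — are unchanged. I would then re-run the supermartingale arguments of Lemmas~\ref{lem_initial}--\ref{stage-nice} with this modification: Lemma~\ref{lem_initial} is untouched, Lemma~\ref{lem_gphi_1} depends only on the injected noise and is untouched, and in Lemmas~\ref{stage1_a}, \ref{lemma_w_escape}, \ref{stage-nice} each one-step drift inequality picks up an extra $O(\eta\sqrt\epsilon)$ term that is absorbed into the drift, while the per-step displacement bounds $\norm{a_{t+1}-a_t}_2,\norm{w_{t+1}-w_t}_2=O(\eta\cdot\mathrm{poly}(p,k,R))$ persist because $\sigma(Zw),Z_jZ_j^\top,\ldots$ are $O(1)$-sub-exponential and $\norm{a}_2\le R$; hence membership in $\cA_{C_2,C_3}$ and $\cK_{C_4,m,M}$ is preserved exactly as in the GD proof. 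Taking $\hat T_1=\tau_{11}+\tau_{12}+\tau_{13}=\mathrm{poly}(p,k,\log(1/\delta))$ and adding the failure probabilities ($\delta/2$ for $\mathcal{E}$ and $\delta/6$ for each of the three supermartingale lemmas) yields the claim with probability $1-\delta$.

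The main obstacle is the clean interaction of the two independent sources of randomness — the injected perturbations $\xi_t,\epsilon_t$ and the mini-batch $Z^{(1)},\dots,Z^{(m)}$ — inside the supermartingale analysis, since conditioning on $\mathcal{E}$ (which depends on the mini-batch) could disturb the martingale structure in $\xi_t,\epsilon_t$. I would handle this with a stopping-time device: run each supermartingale only up to the first iteration where $\norm{e^a_t}_2^2>\epsilon$ or $\norm{e^w_t}_2^2>\epsilon$, and note that by the union bound this stopping time exceeds $\tilde O(\eta^{-2})$ with probability $1-\delta/2$, so the stopped process is a genuine supermartingale in $(\xi_t,\epsilon_t)$ with the modified bias terms. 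A secondary, purely bookkeeping, point is propagating the projection $\Pi_{\BB_0(R)}$ through the drift computations; since projection onto a convex set is nonexpansive and $a^*$ is feasible, it can only decrease $\norm{a_t-a^*}_2$ and therefore never weakens any drift inequality. Beyond these two points the argument is a routine, if tedious, perturbation of the Phase~I analysis, which is why the theorem only asserts polynomial — rather than sharp — dependence on $p,k,R$.
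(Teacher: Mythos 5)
Your proposal is correct and follows essentially the same route the paper intends: the paper omits this proof entirely, asserting only that Lemma \ref{stoc_approx_concentration} makes the mini-batch gradient sufficiently accurate with a large enough batch size so that the Phase I analysis (Theorem \ref{stage1_w} and Lemmas \ref{lem_initial}--\ref{stage-nice}) can be adapted with an additive gradient error. Your union bound over the $\tilde{O}(\eta^{-2})$ iterations, the Young-inequality absorption of the $O(\sqrt{\epsilon})$ error into the partial dissipativity bias, the stopping-time device preserving the supermartingale structure, and the nonexpansiveness of $\Pi_{\BB_0(R)}$ are exactly the details the paper leaves implicit.
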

Similarly, for Phase II, we can show that P-SGD converges to the global optimum with high probability.
\begin{theorem}[P-SGD converges to the global optimum]
	Suppose $\norm{a^*}_2\leq R$,  $\phi_0 \leq \frac{5}{12}\pi,~
	0<m_a\leq a_0^\top{a}^* \leq M_a.$  For any $\gamma>0,$ we choose $\rho_w^1\leq C_w^1\frac{\gamma}{\sqrt{kp}}\leq 1$ and $\rho_a\leq M_a$ for some constant  $C_w^1.$
	For any $\delta \in(0,1)$, we choose a small enough step size $$\eta=\left(\mathrm{poly}\left(p,k,R,\frac{1}{\gamma},\log\frac{1}{\gamma},\log\frac{1}{\delta}\right)\right)^{-1},$$ and a large enough batch size $$m=\mathrm{poly}\left(p,k,R,\frac{1}{\gamma},\log\frac{1}{\delta}\right),$$
	then with at least probability $1-\delta$, we have
	$$\norm{w_t-w^*}_2^2\leq C_{13}\gamma\quad\textrm{and}\quad\norm{a_t-a^*}_2^2\leq \gamma$$ for all $t$'s such that $\hat T_2\leq t\leq T=\tilde{O}(\eta^{-2})$, where
	$C_{13}$ is a constant and $$\hat T_2=\mathrm{poly}\Big(p,k,\frac{1}{\gamma},\log\frac{1}{\delta}\Big).$$
\end{theorem}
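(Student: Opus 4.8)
The plan is to mirror the proof of Theorem~\ref{stage2} for perturbed GD, treating the mini-batch error as a bounded deterministic perturbation that is absorbed into the dissipativity slack $\gamma$ of Theorem~\ref{thm_ASC}. Starting from the Phase-I output guaranteed by the previous theorem (so $\phi_0\le\frac{5}{12}\pi$ and $m_a\le a_0^\top a^*\le M_a$), I would first fix a target stochastic-gradient accuracy $\epsilon=c\gamma^2$ for a small universal constant $c$, and invoke Lemma~\ref{stoc_approx_concentration} --- extended in the obvious way to $w$ lying in a thin annulus around $\SSS_0(1)$ and to $a\in\BB_0(R+\rho_a)$, which changes none of the $O(1)$ sub-exponential constants since $\sigma(Zw)$ still has $O(1)$ variance proxy when $\norm{w}_2=O(1)$ --- with confidence parameter $\delta'=\delta/(3T)$, where $T=\tilde{O}(\eta^{-2})$ is the horizon. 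A union bound over the $T$ iterations shows that, with probability at least $1-\delta/3$, the event $\cE=\bigcap_{t\le T}\{\norm{\nabla_w\hat\cL-\nabla_w\cL}_2^2\le\epsilon,\ \norm{\nabla_a\hat\cL-\nabla_a\cL}_2^2\le\epsilon\}$ holds at every perturbed iterate along the trajectory; the required batch size is $m=\mathrm{poly}(p,k,R,1/\epsilon,\log(T/\delta))=\mathrm{poly}(p,k,R,1/\gamma,\log(1/\delta))$, since $\log T$ is polylogarithmic in all the other parameters.

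On $\cE$, each P-SGD update equals the corresponding perturbed-GD update plus an error vector of norm at most $\sqrt\epsilon$, so the inner-product estimates of Theorem~\ref{thm_ASC} degrade only by an additive $2\sqrt\epsilon\le\gamma$ term (using $\norm{w-w^*}_2\le 2$ on the sphere and the boundedness of $\norm{a-a^*}_2$ on $\cR_{m,M,C_{10}}$), and the projection $\Pi_{\BB_{0}(R)}$ for $a$ is non-expansive toward $a^*$ because $\norm{a^*}_2\le R$, hence only helps the $a$-descent. Therefore, conditionally on $\cE$ (with the injected-noise radii set as prescribed in the theorem so that Theorem~\ref{thm_ASC} applies), the entire Phase-II analysis goes through essentially verbatim with $\gamma$ replaced by $2\gamma$: the analogue of Lemma~\ref{lem_keep} keeps the iterate inside $\cK_{C_9,m_a^\prime,M_a^\prime}$, the analogue of Lemma~\ref{stage2_w} contracts $\norm{w_t-w^*}_2^2$ below $O(\gamma)$ after $\tau_{21}=\mathrm{poly}(p,k,1/\gamma,\log(1/\delta))$ steps, and the analogue of Lemma~\ref{stage2_a} then contracts $\norm{a_t-a^*}_2^2$ below $\gamma$ after a further $\tau_{22}$ steps, so one takes $\hat T_2=\tau_{21}+\tau_{22}$ exactly as in the proof sketch of Theorem~\ref{stage2}.

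The one genuinely delicate point --- and the \textbf{main obstacle} --- is that conditioning on the global event $\cE$, which depends on the fresh mini-batch samples at all times and hence on the whole random trajectory, could destroy the injected-noise supermartingale structure that the Phase-II lemmas rely on. The clean resolution is not to condition but to run the drift argument in the joint filtration $\cF_t=\sigma(\xi_{\le t},\epsilon_{\le t},\text{samples}_{\le t})$, stopped at the first time $\tau$ at which either some per-step accuracy event fails or the iterate leaves the good region: on $\{t<\tau\}$ the one-step inequality $\mathbb{E}[V_{t+1}-V_t\mid\cF_t]\le -2\eta\,c_\cM V_t+O(\eta\gamma)+O(\eta^2)$ holds deterministically (the mini-batch error is bounded there and the injected noise is still centered), so the stopped Lyapunov process $V_t$ --- taken to be $1-w_t^\top w^*$ in the $w$-stage and $\norm{a_t-a^*}_2^2$ in the $a$-stage --- is a genuine supermartingale, and a Freedman/Azuma-type concentration bound together with the union-bound estimate $\Pr(\tau<T\text{ because accuracy fails})\le\delta/3$ closes the argument with total failure probability at most $\delta$. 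What remains is routine bookkeeping: checking that the constraint $\norm{a}_2\le R$ inflates the gradient magnitudes, and therefore the polynomial factors in $\eta^{-1}$ and $m$, only by $\mathrm{poly}(R)$, and that the horizon $\hat T_2$ stays polynomial in $p,k,1/\gamma,\log(1/\delta)$.
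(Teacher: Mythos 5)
Your proposal follows essentially the same route the paper intends: combine Lemma \ref{stoc_approx_concentration} (with a large enough batch so the mini-batch gradient error is absorbed into the dissipativity slack $\gamma$ of Theorem \ref{thm_ASC}) with the Phase-II supermartingale analysis of Theorem \ref{stage2} (Lemmas \ref{lem_keep}--\ref{stage2_a}), which is exactly how the paper justifies this result. In fact you supply more detail than the paper does (union bound over the horizon, the stopping-time fix for conditioning, non-expansiveness of $\Pi_{\BB_0(R)}$), and these additions are correct.
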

The proof of Lemma \ref{stoc_approx_concentration} is straightforward and therefore omitted, as the error of the mimi-batch stochastic gradient has been well controlled by a sufficiently large batch-size. 

%

\section{Numerical Experiment}\label{sec_num}

We present numerical experiments to compare our perturbed GD algorithm with GD and SGD.

We first demonstrate that our perturbed GD algorithm with the noise annealing guarantees global convergence to the global optimum. We consider the training of non-overlapping two-layer convolutional neural network in \eqref{NN-model} with varying $a^*$ and $k$. Specifically, we adopt the same experimental setting as in \citet{du2017gradient}. We set $p=6$ with $k\in\{25,36,49,64,81,100\}$ and $a^*$ satisfying
\begin{align*}
	\frac{\one^\top a^*}{\norm{a^*}_2^2} \in \{0,1,4,9,16,25\}.
\end{align*}
For the perturbed GD algorithm, we perform step size and noise annealing in an epoch-wise fashion: each simulation has $20$ epochs with each epoch consisting of $400$
iterations; The initial learning rate is $0.1$ for both $w$ and $a$, and geometrically decays with a ratio $0.8$; The initial noise levels are given by $(\rho_w, \rho_a) = (36, 1)$ and both geometrically decay with a ratio $0.4$. For GD, the learning rate is $0.1$ for both $w$ and $a$. For SGD, we adopt a batch size of $4$, and perform step size annealing in an epoch-wise fashion: The initial learning rate is $0.1$, and geometrically decays with a ratio $0.4$. For perturbed GD and SGD, we purposely initialize at the spurious local optimum. For GD, we adopt the random initialization, as suggested in \citet{du2017gradient}.

For each combination of $k$ and $a^*$, we repeat $1000$ simulations for all three algorithms, and report the success rate of converging to the global optimum in Table \ref{tab:prob_global}. As can be seen, perturbed GD and SGD are capable of escaping from the spurious local optimum (even if they are initialized there), and converge to the global optimum throughout all $1000$ simulations. However, GD with random initialization can be trapped at the spurious local optimum for up to about $500$ simulations. These results are consistent with our theoretical analysis and \citet{du2017gradient}.

\begin{table*}[htb!]
	\caption{\it Success rates of converging to the global optimum for perturbed GD/GD/SGD with varying $k$ and $a^*$ and $p=6$.}
	\begin{center}
		\footnotesize
		\begin{tabular}{l|cccccc }
			\hline
			$\one^\top a^*/\norm{a^*}_2^2$   & 0 & 1 & 4 & 9 & 16 & 25 \\
			\hline
			$k=25$ & 1.00/{\color{red} 0.50}/1.00 & 1.00/{\color{red}0.55}/1.00 & 1.00/{\color{red}0.73}/1.00 & 1.00/1.00/1.00 & 1.00/1.00/1.00 & 1.00/1.00/1.00 \\
			$k=36$ & 1.00/{\color{red} 0.50}/1.00 & 1.00 /{\color{red}0.53}/1.00 & 1.00/{\color{red}0.66}/1.00 & 1.00/{\color{red}0.89}/1.00 & 1.00/1.00/1.00 & 1.00/1.00/1.00\\
			$k=49$ & 1.00/{\color{red} 0.50}/1.00 & 1.00/{\color{red} 0.53}/1.00 & 1.00/{\color{red} 0.61}/1.00 & 1.00/{\color{red} 0.78}/1.00 & 1.00/1.00/1.00 & 1.00/1.00/1.00 \\
			$k=64$ & 1.00/{\color{red} 0.50}/1.00 & 1.00/{\color{red} 0.51}/1.00 & 1.00/{\color{red} 0.59}/1.00 & 1.00/{\color{red} 0.71}/1.00 & 1.00/{\color{red} 0.89}/1.00 & 1.00/1.00/1.00\\
			$k=81$ & 1.00/{\color{red} 0.50}/1.00 & 1.00/{\color{red} 0.53}/1.00 & 1.00/{\color{red} 0.57}/1.00 & 1.00/{\color{red} 0.66}/1.00 & 1.00/{\color{red} 0.81}/1.00 & 1.00/{\color{red} 0.97}/1.00\\
			$k=100$ & 1.00/{\color{red} 0.50}/1.00 & 1.00/{\color{red} 0.50}/1.00 & 1.00/{\color{red} 0.57}/1.00 & 1.00/{\color{red} 0.63}/100 & 1.00/{\color{red} 0.75}/1.00 & 1.00/{\color{red} 0.90}/1.00\\
			\hline
		\end{tabular}
	\end{center}
	\label{tab:prob_global}
\end{table*}

\begin{figure}[htb!]
	\centering
	\includegraphics[width = 0.9\linewidth]{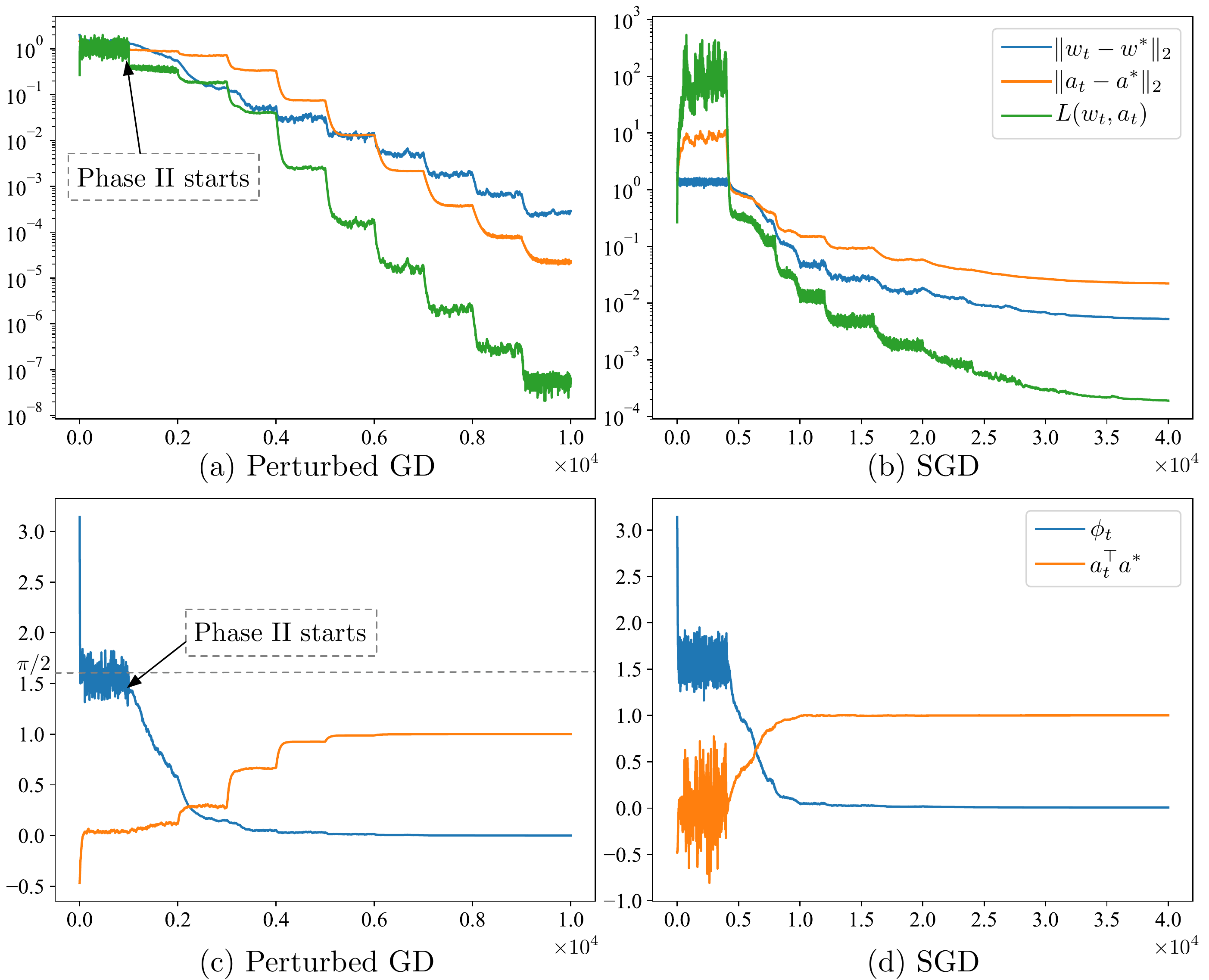}
	\caption{\it Algorithmic Behavior of Perturbed GD and SGD.}
	\label{fig:pgd_sgd}
\end{figure}

We then demonstrate the algorithmic behavior of the perturbed GD algorithm and compare it with SGD. We set $p=6$, $k=100$, $a_j^*=-0.1$ for $j=1,...,50$ and $a_j^*=0.1$ for $j=51,...,100$, and $w$ is randomly generated from the unit sphere. Our selected $a^*$ satisfies $\frac{\mathbf{1}^\top a^*}{\norm{a^*}_2^2} = 0$. As suggested by Table \ref{tab:prob_global}, this is a difficult case, where GD may get stuck at the spurious local optimum with about 0.5 probability. For the perturbed GD algorithm, we perform step size and noise annealing in an epoch-wise fashion: Each simulation has $10$ epochs with each epoch consisting of $1000$
iterations; The initial learning rate is $0.1$ for both $w$ and $a$, and geometrically decays with a ratio $0.8$; The initial noise levels are given by $(\rho_w, \rho_a) = (36, 1)$ and both geometrically decay with a ratio $0.4$.
For SGD, we adopt a batch size of $4$, and perform step size annealing in an epoch-wise fashion:
Each simulation has $10$ epochs with each epoch consisting of $4000$ iterations;
The initial learning rate is $0.1$, and geometrically decays with a ratio $0.4$.
We repeat $10$ simulations for both perturbed GD and SGD, and report their (averaged) trajectories in Figure \ref{fig:pgd_sgd}.

As can be seen, the trajectories of the perturbed GD algorithm have a phase transition by the end of the second epoch. At the first two epochs, the algorithm makes very slow progress in optimizing $a$ and $w$ due to the large injected noise. Starting from the third epoch, we see that $a_t^\top a^*$ becomes positive and gradually increases, and $\phi_t$ further decreases. This implies that the algorithm has escaped from the spurious local optimum. Eventually, at later epochs, we see that the algorithm converges to the global optimum, as the magnitude of the injected noise is reduced. These observations are consistent with our theory.

Moreover, we can see that the trajectories of SGD actually show similar patterns to those of the perturbed GD algorithm. At early epochs, only slow progress is made towards optimizing $w$ and $a$. At later epochs, once SGD escapes from the spurious local optimum, we observe its convergence to the global optimum. Since the noise of SGD comes from the data and has a larger variance than that of the injected noise for the perturbed GD algorithm, we observe more intense oscillation in the trajectories of SGD.

\section{Discussions}\label{sec_discuss}

\noindent {\bf Partial Dissipativity v.s. \citet{kleinberg2018alternative}.} \citet{kleinberg2018alternative} study the convolutional effect of the noise in nonconvex stochastic optimization, and provide new insights on training neural networks using SGD. Their analysis, however, involves an unconventional asumption. Specifically, they consider a general unconstrained minimization problem $\min_{x}\EE_{\xi} f(x,\xi)$, and assume 
\begin{align}\label{actual-condition}
	\langle -\nabla\EE_{\xi} f(x-\eta\nabla f(x,\xi)),~x^*-[x-\eta\nabla \EE_\xi f(x,\xi)]\rangle\geq c\norm{x^*-[x-\eta\nabla \EE_\xi f(x,\xi)]}_2^2,
\end{align}
where $\eta$ is the step size of the SGD algorithm. Note that their assumption is essentially imposed over both the optimization problem and the SGD algorithm\footnote{The conventional analyses usually impose assumptions on the optimization problem, and all properties of the algorithm need to be proved under the assumptions.}. However, they do not provide any theoretical evidence showing that such a complicated assumption holds, when applying SGD to any specific nonconvex optimization problem.

The experimental results in \citet{kleinberg2018alternative} attempt to make some empirical validations of their assumption for training neural networks. Specifically, throughout every iterations of training ResNets and DenseNets, they empirically verify that the following condition holds
\begin{align}\label{empirical-condition}
	\left<-\frac{1}{m}\sum_{i=1}^{m}\EE_{\xi}\nabla f({x}_t+\omega_i,\xi),~ x_t-x^*\right>\geq 0,
\end{align}
where $\omega_i$'s are independently sampled from a uniform distribution over $\BB_0(0.5)$ and $m=100$.
Note that \eqref{empirical-condition} is different from their actual assumption \eqref{actual-condition}.

In contrast, our analysis is dedicated to training two-layer non-overlapping convolutional neural networks in the teacher/student network setting. The partial dissipative condition used in our analysis can been rigorously verified in Theorems \ref{thm_SC} and \ref{thm_ASC}. Moreover, we want to remark that the partial dissipative condition in our analysis is theoretically more challenging, since (1) it does not hold globally; (2) it does not jointly hold over the convolutional weight $w$ and the output weight $a$; (3) we need to handle the additional errors (e.g., $\gamma_a$ and $\gamma_w$).


\noindent{\bf Connections to SGD.} The motivation of this paper is to understand the role of the noise in training neural network, however, due to the technical limit, directly analyzing SGD is very difficult. The noise of SGD comes from the random sampling of the training data, and it may have a very complex distribution. Moreover, the noise of SGD depends on the iterate, and therefore yields very complicated dependence through iterations. These challenging aspects are beyond our theoretical understanding.

The perturbed GD algorithm considered in this paper is essentially imitating SGD, but easier to analyze: The injected noise follows a uniform distribution and independent on the iterates. Though simpler than SGD, the perturbed GD algorithm has often been observed sharing similar algorithmic behavior to SGD. We remark that from a theoretical perspective, the perturbed GD algorithm is still highly non-trivial and challenging.

\noindent{\bf Connection to Step Size Annealing.} The noise annealing approach is actually closely related to the step size annealing, which has been widely used in training neural networks by SGD. The variance of the noise of SGD has an explicit quadratic dependence on the step size. Therefore, a commonly used practical step size annealing is essentially annealing the noise in training neural networks.

However, we remark that varying step size is actually more complicated than varying noise. When the step size is large, it not only enlarges the noise of SGD, but also encourages aggressive overshooting. This is still beyond our theoretical understanding, as our analysis for the perturbed GD algorithm uses small step sizes with large injected noise.

\noindent{\bf Algorithmic Behaviors for Training Different Layers.} Our analysis shows that the perturbed GD algorithm behaves differently for training the convolutional weight $w$ and the output weight $a$ in Phase I: the algorithm first makes progress in training $a$, and then makes progress in training $w$. It is not clear whether this is an artifact of our proof. We believe that some empirical investigations are needed, e.g., examining the training of practical large networks.

\noindent{\bf Multi-epoch Noise Annealing.} Our analysis in Section \ref{sec_theory} can be extended to the multi-epoch setting. For instance, we consider a noise level schedule $\{\rho_w^s\}_{s=1}^S$, $\{\rho_a^s\}_{s=1}^S$. When applying our analysis, we can show that there exists a phase transition along the schedule. For the earlier epochs with $\rho_w^s\geq C_w^0kp^2$ and $\rho_a^s\geq C_a^0$, the algorithm is gradually escaping from the spurious local optimum, which is similar to our analysis for Phase I; For the later epochs with smaller noises, the algorithm is gradually converging to the global optimum, which is similar to our analysis for Phase II.

\noindent{\bf Overparameterized Neural Networks.} Our analysis only considers the regime, where the student network has the same architecture as the {\bf teacher} network. This is different from practical situations, where the {\bf student} network is often overparameterized. We conduct some empirical studies on a simple overparameterized case, where the {\bf student} network has two convolutional filters and the {\bf teacher} network has only one convolutional filter. Our studies suggest that such a simple overparameterization does not necessarily lead to a better optimization landscape. There still exist spurious local optima, which can trap the GD algorithm. Due to the space limit, we present the details in Appendix \ref{add_exp}.

\noindent{\bf Other Related Works.} We briefly discuss several other related works. These works consider different problems, algorithms and assumptions. Therefore, the results are not directly comparable. Specifically, \citet{zhou2017stochastic} study the stochastic mirror descent ({\it different from ours}) under a global variational coherent assumption ({\it does not hold for our target problem}); \citet{li2017convergence} study SGD ({\it different from ours}) for training ResNet-type two-layer neural networks. They assume that the weight of the second layer is known (all one), and prove that the optimization landscape satisfies the one-point convexity over a small neighborhood of the global optimum ({\it does not hold for our target problem}); \citet{jin2018local} show that the perturbed SGD algorithm ({\it different from ours}) for minimizing the empirical risk ({\it we consider the population risk}), and show that the injected noise rules out the spurious local optima of the empirical risk. However, their assumption requires the population risk to have no spurious local optima ({\it our population risk contains a spurious local optimum}).

\newpage
\appendix
\section{Preliminaries}\label{app_pre}

We first present the following proposition, which computes the explicit form of the loss function and the gradient of the loss function with respect to $a$ and $w$.
\begin{proposition}[\citet{du2017gradient}]
	Let  $\phi\in[0,\pi]$ be the angle between $w$ and $w^*$. Then, the loss function $L\left(w,a\right)$ and the gradient w.r.t $\left(w,a\right)$, i.e., $\nabla_aL\left(w,a\right)$ and $\nabla_wL\left(w,a\right)$ have the following analytic forms.
	
	\begin{align*}
		L\left(w,a\right) &=\frac{1}{2}[\frac{\left(\pi-1\right)\norm{w^*}_2^2}{2\pi}\norm{a^*}_2^2
		+\frac{\left(\pi-1\right)}{2\pi}\norm{a}_2^2
		-\frac{\norm{w^*}_2}{\pi}\left(g\left(\phi\right)-1\right)a^\top a^*\\
		&\hspace{2.0in}+\frac{\norm{w^*}_2^2}{2\pi}\left(\one^\top a^*\right)^2
		+\frac{1}{2\pi}\left(\one^\top a\right)^2
		-\frac{\norm{w^*}_2}{\pi}\one^\top a^* a^\top \one],\\
		\nabla_aL\left(w,a\right)&=\frac{1}{2\pi}\left(\one\one^\top+\left(\pi-1\right)I\right)a
		-\frac{1}{2\pi}\left(\one\one^\top+\left(g\left(\phi\right)-1\right)I\right)a^*\norm{w^*}_2,\\
		\nabla_wL\left(w,a\right)
		&=-\frac{a^\top a^* \left(\pi -\phi\right)}{2\pi}w^* 
		+\left(\frac{\norm{a}}{2}+\frac{\sum_{i\neq j}a_ia_j}{2\pi}
		-\frac{a^\top a^* \sin\phi}{2\pi}\frac{\norm{w^*}_2}{\norm{w}_2}
		-\frac{\sum_{i\neq j}a_i a_j^*}{2\pi}\frac{\norm{w^*}_2}{\norm{w}_2}\right)w,
	\end{align*}
	where $
	g\left(\phi\right)=\left(\pi-\phi\right)\cos\phi+\sin\phi.
	$
\end{proposition}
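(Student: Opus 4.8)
Since the statement is classical (it appears in \citet{du2017gradient}), I only outline a self-contained derivation; the plan is to reduce everything to one- and two-dimensional Gaussian integrals. Writing the two network outputs filter-by-filter, $f(Z,w,a)=\sum_{j=1}^{k}a_j\sigma(Z_j^\top w)$ and $f(Z,w^*,a^*)=\sum_{j=1}^{k}a_j^*\sigma(Z_j^\top w^*)$, and expanding the square gives
\begin{align*}
	2\cL(w,a)=\EE\big(f(Z,w,a)\big)^2-2\,\EE\big[f(Z,w,a)f(Z,w^*,a^*)\big]+\EE\big(f(Z,w^*,a^*)\big)^2 .
\end{align*}
Each expectation is a double sum over $(i,j)$ of the expectation of a product of two ReLU terms. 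Since $Z_1,\dots,Z_k$ are i.i.d.\ $N(0,I_p)$, for $i\neq j$ the two factors are independent, whereas for $i=j$ the pair $(Z_j^\top w,\,Z_j^\top w^*)$ is jointly centered Gaussian with variances $\norm{w}_2^2,\norm{w^*}_2^2$ and covariance $w^\top w^*$. Hence only three scalar facts are needed: $\EE[\sigma(Z_1^\top v)]=\norm{v}_2/\sqrt{2\pi}$; $\EE[\sigma(Z_1^\top v)^2]=\norm{v}_2^2/2$; and the arc-cosine kernel identity $\EE[\sigma(Z_1^\top v)\sigma(Z_1^\top v')]=\frac{\norm{v}_2\norm{v'}_2}{2\pi}g(\theta)$ with $\theta=\angle(v,v')$ and $g(\theta)=(\pi-\theta)\cos\theta+\sin\theta$. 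The last one follows by whitening the $2\times 2$ covariance, passing to polar coordinates, and evaluating $\frac{\norm{v}_2\norm{v'}_2}{2\pi}\big(\int_0^\infty r^3e^{-r^2/2}\,dr\big)\big(\int_0^{2\pi}\max(0,\cos\alpha)\max(0,\cos(\alpha-\theta))\,d\alpha\big)$, where the radial integral is $2$ and the angular integral equals $g(\theta)/2$ once restricted to the arc on which both cosines are positive and simplified with a product-to-sum identity.

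Substituting these and collecting diagonal ($i=j$) and off-diagonal ($i\neq j$) terms via $\sum_{i\neq j}x_iy_j=(\one^\top x)(\one^\top y)-x^\top y$ reproduces the stated closed form of $\cL(w,a)$ with $\phi=\angle(w,w^*)$. The gradient in $a$ is then immediate: $\cL$ is quadratic in $a$ with Hessian $\frac{1}{2\pi}(\one\one^\top+(\pi-1)I)$ and an affine part contributed only by the cross term, from which $\nabla_a\cL$ is read off directly.

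For the gradient in $w$, the $a$-dependent coefficients are constant in $w$, so it suffices to differentiate the scalar functions $\norm{w}_2$, $\norm{w}_2^2$ and $\norm{w}_2\,g(\phi)$ appearing in $\cL$. Using $\nabla_w\norm{w}_2=w/\norm{w}_2$, $\nabla_w\cos\phi=\frac{1}{\norm{w}_2}\!\left(\frac{w^*}{\norm{w^*}_2}-\cos\phi\,\frac{w}{\norm{w}_2}\right)$, $\nabla_w\phi=-\frac{1}{\sin\phi}\nabla_w\cos\phi$, and $g'(\phi)=-(\pi-\phi)\sin\phi$, the product rule produces the key cancellation
\begin{align*}
	\nabla_w\big(\norm{w}_2\,g(\phi)\big)=\big(g(\phi)-(\pi-\phi)\cos\phi\big)\frac{w}{\norm{w}_2}+(\pi-\phi)\frac{w^*}{\norm{w^*}_2}=\sin\phi\,\frac{w}{\norm{w}_2}+(\pi-\phi)\frac{w^*}{\norm{w^*}_2},
\end{align*}
since $g(\phi)-(\pi-\phi)\cos\phi=\sin\phi$. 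Separating the $w^*$-component from the $w$-component (and imposing $\norm{w}_2=1$ on the constraint manifold) gives the stated expression for $\nabla_w\cL$.

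I expect the main obstacle to be the arc-cosine kernel identity: evaluating $\EE[\sigma(X)\sigma(Y)]$ for a correlated Gaussian pair requires care in the polar reduction, in particular in identifying the angular region $\alpha\in(\theta-\frac{\pi}{2},\frac{\pi}{2})$ on which both $\max(0,\cos\alpha)$ and $\max(0,\cos(\alpha-\theta))$ are simultaneously active. A secondary nuisance is the bookkeeping for $\nabla_w\cL$: the contributions from $g'(\phi)\nabla_w\phi$ must cancel cleanly against the explicit $\cos\phi$ term, which is where a sign slip would most easily hide. Everything else is routine differentiation and algebra.
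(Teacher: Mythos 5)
Your derivation is correct. Note that the paper itself does not prove this proposition---it is imported from \citet{du2017gradient} and stated without proof in the appendix---so there is no internal argument to compare against; your outline is the standard derivation and matches the cited source. The key steps all check out: expanding the square and using independence across the $k$ blocks reduces everything to $\EE[\sigma(Z_1^\top v)]=\norm{v}_2/\sqrt{2\pi}$, $\EE[\sigma(Z_1^\top v)^2]=\norm{v}_2^2/2$, and the order-one arc-cosine identity $\EE[\sigma(Z_1^\top v)\sigma(Z_1^\top v')]=\frac{\norm{v}_2\norm{v'}_2}{2\pi}g(\theta)$, and your polar evaluation of the latter is right (active arc $\alpha\in(\theta-\tfrac{\pi}{2},\tfrac{\pi}{2})$ of length $\pi-\theta$, radial integral $2$, angular integral $g(\theta)/2$). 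The collection of diagonal and off-diagonal terms via $\sum_{i\neq j}x_iy_j=(\one^\top x)(\one^\top y)-x^\top y$ reproduces the stated $\cL$, the quadratic-in-$a$ structure gives $\nabla_a\cL$ immediately, and your computation of $\nabla_w\big(\norm{w}_2 g(\phi)\big)$, including the cancellation $g(\phi)-(\pi-\phi)\cos\phi=\sin\phi$, is exactly what yields the $(\pi-\phi)$ coefficient on $w^*$ and the $\sin\phi$ coefficient on $w$. One cosmetic remark: your calculation produces the coefficient $\norm{a}_2^2/2$ on $w$, whereas the proposition as printed reads $\norm{a}/2$; this is a typo in the statement (the appendix consistently uses $\norm{a+\epsilon}_2^2/2$ in the corresponding expression), so your form is the correct one.
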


As can be seen, both $\nabla_w L\left(w,a\right)$ and  $\nabla_a L\left(w,a\right)$ depend on $\phi$, which is the angle between $w$ and $w^*$. After injecting noise, we have $$\EE{\phi_\xi}=\EE\left\{\arccos{\frac{\left(w+\xi_t\right)^\top w^*}{\norm{w+\xi_t}_2\norm{w^*}_2}}\right\}\neq\arccos{\frac{w^\top w^*}{\norm{w}_2\norm{w^*}_2}}=\phi.$$

As a direct result, we have
\begin{align*}
	\EE_{\xi_t, \epsilon_t}{\nabla_a L\left(w_t + \xi_t,a_t + \epsilon_t\right)}&\neq \nabla_aL\left(w_t ,a_t\right),\\
	\EE_{\xi_t, \epsilon_t}{\nabla_w L\left(w_t + \xi_t ,a_t + \epsilon_t\right)}&\neq \nabla_wL\left(w_t  ,a_t\right),
\end{align*}
which indicate that the perturbed gradient $\nabla_a\left(w_t + \xi_t,a_t + \epsilon_t\right)$, $\nabla_w\left(w_t + \xi_t ,a_t + \epsilon_t\right)$ are  biased estimates of the gradient (as we mentioned in Section \ref{sec_model}). 

For notational simplicity, we introduce an auxiliary iterate $\tilde{w}_{t+1}$ and rewrite our perturbed GD algorithm as follows.
\begin{align*}
	a_{t+1}& = a_{t} - \eta \nabla_a L\left(w_t + \xi_t,a_t + \epsilon_t\right),\\
	\tilde{w}_{t+1} &= w_t-\eta \left(I-w_tw_t^\top\right)\nabla_w L\left(w_t + \xi_t,a_t + \epsilon_t\right),\\
	w_{t+1}&=\mathrm{Proj}_{\mathbb{S}_0\left(1\right)} \left(\tilde{w}_{t+1}\right).
\end{align*}
In the later proof, we use $\cF_t=\sigma\{(w_\tau, a_\tau)\big|\tau\leq t\}.$ as the sigma algebra generated by previous $t$ iterations and $V(\rho)=\frac{\pi^{\frac{p}{2}}}{\Gamma(\frac{p}{2}+1)}\rho^p$ 
as the volume of $p$-dimensional ball $\BB_{0}(\rho).$

\section{$d$-Dimensional Polar Coordinate and Some Important Lemmas}\label{section_polar_coordinate}
To calculate the expectation in our following analysis, we often need  the $d$-dimension polar coordinate system. Specifically, if we write a vector $\nu$ under Cartesian coordinate as $\nu=(\nu_1,\nu_2,...,\nu_d),$ then under the polar coordinate, $\nu$ can be written as  $\nu=(r,\theta_1,\theta_2,...,\theta_{d-1}),$ where
\begin{align*}
	&\nu_1=r\cos(\theta_1),\\
	&\nu_i=r\Pi_{j=1}^{i-1}\sin(\theta_j)\cos(\theta_i),~i=2,...,d-1,\\
	&\nu_d=r\Pi_{j=1}^{d-1}\sin(\theta_j),
\end{align*}
where $r\geq 0,$ $0\leq \theta_i\in[0,\pi],~i=1,2,...,d-2,$ 
$\theta_{d-1}\in[0,2\pi]$. 

To use polar coordinate to calculate integral, we also need the following Jacobian Matrix.
$$\pdv{(\nu_1,\nu_2,\nu_3...,\nu_d)}{(r,\theta_1,\theta_2,...,\theta_{d-1})}
=r^{d-1}\sin^{d-2}\theta_1\sin^{d-3}\theta_2\cdots\sin\theta_{d-2}.$$

The following important equation is required.
\[I_n\triangleq\int_{0}^{\pi}\sin^n\left(x\right)dx =
\frac{\sqrt{\pi}\Gamma\left(\frac{1+n}{2}\right)}{\Gamma\left(1+\frac{n}{2}\right)}.
\]

Then we have the following useful lemma here.
\begin{lemma}\label{lem_gphi_gap}
	Let  $ f\left(\theta\right)$ be a positive bounded function defined on $[0,\pi],$ that is  there exits a constant $C\geq0$ such that $0\leq f\left(\theta\right)\leq C$, $\forall \theta \in [0, \pi]$. For any $\epsilon>0$ and positive integer $d$, define	
	\[A_d\left(f\right) \triangleq \frac{\Gamma\left(\frac{d}{2}+1\right)}{\pi^{d/2}} \int_{0}^{\pi}\cdots\int_{0}^{\pi}\int_{0}^{2\pi}f\left(\theta_1\right)\sin^{d-2}\theta_1\sin^{d-3}\theta_2\cdots\sin\theta_{d-2}d\theta_1 \cdots d\theta_{d-1},\]
	\[
	M_d\triangleq\int_{0}^{1}r^{d-1} dr, \quad
	L_d\left(\epsilon\right)\triangleq\int_{0}^{1-\epsilon}r^{d-1} dr,\quad
	H_d\left(\epsilon\right)\triangleq\int_{0}^{1+\epsilon}r^{d-1} dr.
	\]
	Then we have
	\begin{equation}
		A_d\left(f\right) L_d\left(\epsilon\right) + O\left(\epsilon d\right) > A_d\left(f\right) M_d > A_d\left(f\right) H_d\left(\epsilon\right) - O\left(\epsilon d\right).
		\label{lower}
	\end{equation}
\end{lemma}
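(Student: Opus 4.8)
The plan is to reduce the claimed chain of inequalities to two elementary estimates on $(1\pm\epsilon)^d$, after pulling out a clean bound on $A_d(f)$. First I would evaluate the three radial integrals exactly: $M_d=\tfrac1d$, $L_d(\epsilon)=\tfrac{(1-\epsilon)^d}{d}$, and $H_d(\epsilon)=\tfrac{(1+\epsilon)^d}{d}$. Next, since $A_d$ is a positive linear functional (an integral of a nonnegative angular density against $f$) and $0\le f\le C$ pointwise, monotonicity gives $0\le A_d(f)\le C\,A_d(1)$. It then remains to show $A_d(1)=d$: the iterated angular integral factors as $2\pi\prod_{n=1}^{d-2}I_n$ with $I_n=\sqrt\pi\,\Gamma(\tfrac{n+1}{2})/\Gamma(\tfrac{n}{2}+1)$, and the product telescopes to $\pi^{(d-2)/2}/\Gamma(d/2)$; multiplying by the normalizing constant $\Gamma(d/2+1)/\pi^{d/2}$ and using $\Gamma(d/2+1)=\tfrac d2\Gamma(d/2)$ yields exactly $d$. (Equivalently, $A_d(1)=d$ because $\Gamma(d/2+1)/\pi^{d/2}$ is the reciprocal of the volume of $\mathbb{B}_0(1)$ and the surface area of $\mathbb{S}_0(1)$ equals $d$ times that volume.) Hence $A_d(f)\le Cd$.

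For the left inequality I would write $A_d(f)M_d-A_d(f)L_d(\epsilon)=A_d(f)\cdot\tfrac{1-(1-\epsilon)^d}{d}$ and invoke Bernoulli's inequality $(1-\epsilon)^d\ge 1-\epsilon d$, so $1-(1-\epsilon)^d\le \epsilon d$; combined with $A_d(f)\le Cd$ this gives $A_d(f)\,(M_d-L_d(\epsilon))\le C\epsilon d$, which is absorbed into the $O(\epsilon d)$ term (with a slightly larger implied constant, or noting the bound is strict for $d\ge2$ and $\epsilon\in(0,1)$). For the right inequality, symmetrically, $A_d(f)H_d(\epsilon)-A_d(f)M_d=A_d(f)\cdot\tfrac{(1+\epsilon)^d-1}{d}$; expanding $(1+\epsilon)^d-1=\sum_{k\ge1}\binom dk\epsilon^k\le\sum_{k\ge1}(\epsilon d)^k\le\tfrac{\epsilon d}{1-\epsilon d}\le 2\epsilon d$ for $\epsilon d\le\tfrac12$ shows $A_d(f)\,(H_d(\epsilon)-M_d)\le 2C\epsilon d=O(\epsilon d)$. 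Chaining these two estimates yields $A_d(f)L_d(\epsilon)+O(\epsilon d)>A_d(f)M_d>A_d(f)H_d(\epsilon)-O(\epsilon d)$, which is the claim.

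I do not expect a serious obstacle: once $A_d(1)=d$ is in hand the rest is bookkeeping. The two points that need a little care are (i) the telescoping evaluation of $A_d(1)$ via the $I_n$ identity recorded just before the lemma, and (ii) the fact that the ``$O(\epsilon d)$'' control of $(1+\epsilon)^d-1$ is meaningful only when $\epsilon d$ is bounded (say $\epsilon d\le\tfrac12$) — which is precisely the regime in which the lemma is later applied, where $\epsilon$ plays the role of $1/\rho_w$ with $\rho_w\gtrsim p^2$ and $d=p$, so that $\epsilon d=\Theta(1/p)\ll1$. I would state this regime explicitly at the outset of the proof so that the $O(\cdot)$ notation is unambiguous.
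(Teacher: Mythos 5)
Your proof is correct and follows essentially the same route as the paper: bound $A_d(f)\le Cd$ via the $I_n$ identities and then control $M_d-L_d(\epsilon)$ and $H_d(\epsilon)-M_d$ by elementary estimates on $(1\mp\epsilon)^d$. Your explicit treatment of the right-hand inequality---and the observation that $(1+\epsilon)^d-1=O(\epsilon d)$ needs $\epsilon d$ bounded, which is exactly the regime where the lemma is used ($\epsilon = 1/\rho_w$, $d=p$)---is a welcome refinement of the paper's ``the right side follows similar lines,'' since there Bernoulli's inequality points in the wrong direction.
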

\begin{proof}
	For simplicity, we only give the proof of the left side. The proof of the right side follows similar lines.
	
	We compute $A_d\left(f\right)$.
	\begin{align*}
		0\leq A_d\left(f\right)
		&=\frac{2\Gamma\left(\frac{d}{2}+1\right)}{\pi^{\frac{d}{2}-1}} I_{d-3}\cdots I_1\int_{0}^{\pi}f\left(\theta_1\right)\sin^{d-2}\theta_1d\theta_1\\
		&\leq \frac{2\Gamma\left(\frac{d}{2}+1\right)}{\pi^{\frac{d}{2}-1}} CI_{d-2}\cdots I_1\leq Cd.\\
	\end{align*}
	We give the lower bound on $L_d\left(\epsilon\right)-M_d$.
	\begin{align*}
		L_d\left(\epsilon\right)-M_d
		&=\int_{0}^{1-\epsilon}r^{d-1} dr - \int_{0}^{1}r^{d-1} dr=\frac{\left(1-\epsilon\right)^d-1}{d}\geq-\epsilon.
	\end{align*}
	Hence, we have
	\begin{equation}\label{upper}
		\left(L_{d}\left(\epsilon\right)-M_{d}\right)A_d\left(f\right)\geq-Cd\epsilon.
	\end{equation}
\end{proof}

\section{Proof for Phase I}\label{pf_phase I}

\subsection{Proof of Theorem \ref{thm_SC}}\label{pf_SC}
\begin{proof}
	We first derive the dissipativity w.r.t $a$ in region
	\begin{align*}
\cA_{C_2,C_3}=\{\left(w,a\right)~|-4\left(\one^\top{a}^*\right)^2&	\leq \one^\top {a}^*\one^\top a - \left(\one^\top {a}^*\right)^2
\leq  \frac{C_{2}}{p}\norm{{a}^*}_2^2,\\
& a^\top a^*\leq \frac{C_3}{p}\norm{a^*}_2^2\text{ or}
\norm{a-a^*/2}_2^2\geq\norm{a^*}_2^2,
\quad w\in\SSS_0\left(1\right)\}.
	\end{align*} 
	
	Assume $\left(w,a\right)\in\cA_{C_2,C_3}$, and $a^\top a^*\leq \frac{C_3}{p}\norm{a^*}_2^2$, we have $$\norm{a-a^*}_2^2\geq\left(1-\frac{2C_3}{p}\right)\norm{a^*}_2^2.$$
	Combining the above inequality with $\mathbb{E}g\left(\phi_\xi\right)\geq 1+\frac{C}{p}$ in Lemma \ref{lem_gphi_1}, we get
	\begin{align*}
		\langle-\mathbb{E}_{\xi,\epsilon}\nabla_aL\left(w+\xi,a+\epsilon\right),a^*-a\rangle
		&= \frac{1}{2\pi}\left(\one^\top a- \one^\top a^*\right)^2
		+\frac{1}{2\pi}\left(\left(\pi-1\right)a-\left(\mathbb{E}_\xi g\left(\phi_\xi\right)-1\right)a^*\right)^\top \left(a-a^*\right)\\
		&=\frac{1}{2\pi}\left(\one^\top a- \one^\top a^*\right)^2
		+\frac{1}{2\pi}\left(\pi-\mathbb{E}_\xi g\left(\phi_\xi\right)\right)a^\top \left(a-a^*\right)+\frac{\mathbb{E}_\xi g\left(\phi_\xi\right)-1}{2\pi}\norm{a-a^*}_2\\
		&\geq -\frac{1}{2\pi}\left(\pi-\mathbb{E}_\xi g\left(\phi_\xi\right)\right)a^\top a^*+\frac{\mathbb{E}_\xi g\left(\phi_\xi\right)-1}{2\pi}\norm{a-a^*}_2^2\\
		&= -\frac{1}{2\pi}\left(\pi-\mathbb{E}_\xi g\left(\phi_\xi\right)\right)a^\top a^*+\frac{\mathbb{E}_\xi g\left(\phi_\xi\right)-1}{4\pi}\norm{a-a^*}_2^2+\frac{\mathbb{E}_\xi g\left(\phi_\xi\right)-1}{4\pi}\norm{a-a^*}_2^2\\	
		&\geq-\frac{C_3}{2p}\norm{a^*}_2^2 +\frac{C}{4\pi p}\left(1-\frac{2C_3}{p}\right)\norm{a^*}_2^2
		+   \frac{C}{4\pi p}\norm{a-a^*}_2^2\geq  \frac{C}{4\pi p}\norm{a-a^*}_2^2\\
	\end{align*}
	for some constant $C_3\leq\frac{C}{4\pi}$.
	
	Moreover, if $\left(w,a\right)\in\cA_{C_2,C_3}$ and $\norm{a-a^*/2}_2^2\geq\norm{a^*}_2^2,$ we have $$a^\top\left(a-a^*\right)>0.$$ Following the similar lines  above, we have the same results. Thus, the dissipativity w.r.t $a$ holds in region $\cA_{C_2,C_3}$
	
	Next, we derive the dissipativity w.r.t $w$ in region
	$$	\mathcal{K}_{C_4,m,M}=\{\left(w,a\right)~|~a^\top {a}^*\in [m,M],\quad w^\top w^*\geq C_4,\quad w\in\SSS_0\left(1\right)\}.$$
	
	Assume $\left(w,a\right)\in\mathcal{K}_{C_4,m,M}$ for some constant $C_4\in(-1,1]$ and $0<m<M$. We could write $w$ as $w=\sum_{i=1}^{p} c_iv_i$, where $\{v_i\}_{i=1}^{p}$ is an orthonormal basis for $\mathbb{R}^p$, $w^*=v_1$, $\norm{w}_2=1$ and $c_1\geq C_4$. Without loss of generality, we assume $w^*=\left(1,0,...,0\right)^\top.$ We have the following equation.
	\begin{align*}
		(I-ww^\top)	(w^*-w)&=w^*-w-ww^\top w^*+w w^\top w=w^*-ww^\top w^*\\&=(1,0,...,0)^\top-(c_1^2,c_1c_2,...,c_1c_p)^\top=(1-c_1^2,-c_1c_2,...,-c_1c_p)^\top,
	\end{align*}
	The norm of this vector is 
	\begin{align*}
		\norm{(I-ww^\top)	(w^*-w)}_2&=\sqrt{(1-c_1^2)^2+c_1^2(c_2^2+...+c_p^2)}\\
		&=\sqrt{(1-c_1^2)^2+c_1^2(1-c_1^2)}=\sqrt{1-c_1^2}
	\end{align*}

	By $\mathbb{E}_\xi\left(\phi_\xi\right)\leq\frac{3\pi}{4}$ in Lemma \ref{lem_gphi_1}, we have
	\begin{align*}
		&\langle-\mathbb{E}_{\xi,\epsilon}\left(I-ww^\top\right)\nabla_wL\left(w+\xi,a+\epsilon\right),w^*-w\rangle
		=\frac{a^\top a^* \left(\pi -\mathbb{E}_\xi\phi_\xi\right)}{2\pi}\left(1-c_1^2\right)\\
		&+\EE_{\xi,\epsilon}\left(w^*-w^\top w^*w\right)^T\Big(\frac{\norm{a+\epsilon}_2^2}{2}+\frac{\sum_{i\neq j}\left(a_i+\epsilon_i\right)\left(a_j+\epsilon_j\right)}{2\pi}
		-\frac{\left(a+\epsilon\right)^\top a^* \sin\phi_\xi}{2\pi}\frac{1}{\norm{w+\xi}_2}\\
		&\hspace{+2in}-\frac{\sum_{i\neq j}\left(a_i+\epsilon_i\right) a_j^*}{2\pi}\frac{1}{\norm{w+\xi}_2}\Big)\xi\\
		&=\frac{a^\top a^* \left(\pi -\mathbb{E}_\xi\phi_\xi\right)}{2\pi}\left(1-c_1^2\right)
		+ \EE_{\xi}\left(w^*-w^\top w^*w\right)^T\left(
		-\frac{a^\top a^* \sin\phi_\xi}{2\pi}\frac{1}{\norm{w+\xi}_2}
		-\frac{\sum_{i\neq j}{a_i a_j^*}}{2\pi}\frac{1}{\norm{w+\xi}_2}\right)\xi
	\end{align*}
	
	We next show that
	\begin{align}\label{eq1}
		\EE_{\xi}
		\frac{ \left(w^*-w^\top w^*w\right)^T\xi}{\norm{w+\xi}_2}=0
	\end{align}
	and
	\begin{align}\label{ineq1}
		\EE_{\xi}
		\frac{ \sin\phi_\xi\left(w^*-w^\top w^*w\right)^T\xi}{\norm{w+\xi}_2}\leq C\frac{\sqrt{1-c_1^2}}{\rho_w}
	\end{align}
	for some constant $C$.
	
	For \eqref{eq1}, recall that $V(\rho_w)$ is the volume of $\BB_{0}(\rho_w).$ Then we have
	\begin{align*}
		\EE_{\xi}\frac{\left(w^*-w^\top w^*w\right)^\top\xi}{\norm{w+\xi}_2}
		&=\int_{\BB_0\left(\rho_w\right)}\frac{1}{V\left(\rho_w\right)}\frac{\left(w^*-w^\top w^*w\right)^\top x}{\norm{w+x}_2} dx\\
		&=\int_{\BB_{w}\left(\rho_w\right)}\frac{1}{V\left(\rho_w\right)}\frac{\left(w^*-w^\top w^*w\right)^\top x-\left(w^*-w^\top w^*w\right)^\top w}{\norm{x}_2}dx\\
		&=\int_{\BB_{w}\left(\rho_w\right)}\frac{1}{V\left(\rho_w\right)}\frac{\left(w^*-w^\top w^*w\right)^\top x}{\norm{x}_2}dx\\
		&=\int_{\BB _{w}\left(\rho_w\right),\left(w^*-w^\top w^*w\right)^\top x>0}\frac{1}{V\left(\rho_w\right)}\frac{\left(w^*-w^\top w^*w\right)^\top x}{\norm{x}_2}\\
		&\quad+\int_{\BB_{w}\left(\rho_w\right),\left(w^*-w^\top w^*w\right)^\top x<0}\frac{1}{V\left(\rho_w\right)}\frac{\left(w^*-w^\top w^*w\right)^\top x}{\norm{x}_2}dx\\
	\end{align*}
	For any $x$ such that $\left(w^*-w^\top w^*w\right)^\top x>0,$ its symmetric point with respect to vector $w$ is $\tilde{x}=2w^\top xw-x.$ We further have 
	$$\left(w^*-w^\top w^*w\right)^\top \tilde{x}
	=\left(w^*-w^\top w^*w\right)^\top (2w^\top xw-x)
	=-\left(w^*-w^\top w^*w\right)^\top x <0.$$
	By this symmetric property with respect to vector $w$, we know
	\begin{align*}
		\EE_{\xi}\frac{\left(w^*-w^\top w^*w\right)^\top\xi}{\norm{w+\xi}_2}&=\int_{\BB_0\left(\rho_w\right)}\frac{1}{V\left(\rho_w\right)}\frac{\left(w^*-w^\top w^*w\right)^\top x}{\norm{w+x}_2} dx\\
		&=\int_{\BB _{w}\left(\rho_w\right),\left(w^*-w^\top w^*w\right)^\top x>0}\frac{1}{V\left(\rho_w\right)}\frac{\left(w^*-w^\top w^*w\right)^\top x}{\norm{x}_2}\\&\quad+\int_{\BB_{w}\left(\rho_w\right),\left(w^*-w^\top w^*w\right)^\top x<0}\frac{1}{V\left(\rho_w\right)}\frac{\left(w^*-w^\top w^*w\right)^\top x}{\norm{x}_2}dx\\
		&=0.
	\end{align*}

	Now we prove \eqref{ineq1}. Denote that $\phi_x=\angle(x,w^*)$. When $\rho_w>1,$ we have
	
	\begin{align*}
		&\quad\EE_{\xi}\frac{\sin\phi_\xi\left(w^*-w^\top w^*w\right)^\top\xi}{\norm{w+\xi}_2}\\
		&=\int_{\BB_{w}\left(\rho_w\right)}\frac{\sin\phi_{x}}{V\left(\rho_w\right)}\frac{\left(w^*-w^\top w^*w\right)^\top x}{\norm{x}_2}dx\\
		&=\int_{\BB _{w}\left(\rho_w\right),\left(w^*-w^\top w^*w\right)^\top x>0}\frac{\sin\phi_{x}}{V\left(\rho_w\right)}\frac{\left(w^*-w^\top w^*w\right)^\top x}{\norm{x}_2}
		\\&~~~+ \int_{\BB_{w}\left(\rho_w\right),\left(w^*-w^\top w^*w\right)^\top x<0}\frac{\sin\phi_{x}}{V\left(\rho_w\right)}\frac{\left(w^*-w^\top w^*w\right)^\top x}{\norm{x}_2}dx\\
		&\leq \int_{\BB_{0}\left(\rho_w+1\right),\left(w^*-w^\top w^*w\right)^\top x>0}\frac{\sin\phi_x}{V\left(\rho_w\right)}\frac{\left(w^*-w^\top w^*w\right)^\top x}{\norm{x}_2}dx\\
		&~~~+ \int_{\BB_{0}\left(\rho_w-1\right),\left(w^*-w^\top w^*w\right)^\top x<0}\frac{\sin\phi_x}{V\left(\rho_w\right)}\frac{\left(w^*-w^\top w^*w\right)^\top x}{\norm{x}_2}dx\\
		&\leq \int_{\BB_{0}\left(\rho_w+1\right)\backslash\BB_{0}\left(\rho_w-1\right),\left(w^*-w^\top w^*w\right)^\top x>0}\frac{1}{V\left(\rho_w\right)}\frac{\left(w^*-w^\top w^*w\right)^\top x}{\norm{x}_2}dx\\
		&=\frac{\Gamma\left(\frac{p}{2}+1\right)}{\rho_w^p\pi^{p/2}}\left(\int_{\rho_w-1}^{\rho_w+1} r^{p-1} dr  \int_{0}^{\pi/2}\sqrt{1-c_1^2}\cos(\theta_1)\sin^{p-2}\left(\theta_1\right)d\theta_1\right)\\
		&~\cdot\left( \int_{0}^{\pi}\cdots\int_{0}^{\pi}\int_{0}^{2\pi}\sin^{p-3}\theta_2\cdots\sin\theta_{p-2}d\theta_1 \cdots d\theta_{p-1}\right)
		\qquad \text{(Convert to polar coordinate)}\\
		&=\frac{\Gamma\left(\frac{p}{2}+1\right)}{\rho_w^p\pi^{p/2}}\frac{\left(\rho_w+1\right)^{p}-\left(\rho_w-1\right)^{p}}{p}\frac{\sqrt{1-c_1^2}}{p-1}I_{p-3}\cdots I_1\\
		&=\frac{\Gamma\left(\frac{p}{2}+1\right)}{\rho_w^p\pi^{p/2}}\frac{\left(\rho_w+1\right)^{p}-\left(\rho_w-1\right)^{p}}{p}\frac{\sqrt{1-c_1^2}}{p-1}\frac{\pi^{\frac{p-3}{2}}}{\Gamma\left(\frac{p-1}{2}\right)}=C\frac{\sqrt{1-c_1^2}}{\rho_w}
	\end{align*}
	
	
	Thus when $\rho_w\geq C\frac{M_a}{\gamma}$, combining \eqref{eq1} and \eqref{ineq1}, we have for any $\gamma>0$
	\begin{align*}
		-\frac{a^\top a^*}{2\pi}\EE_{\xi}
		\frac{ \sin\phi_\xi\left(w^*-w^\top w^*w\right)^T\xi}{\norm{w+\xi}_2}
		-\frac{\sum_{i\neq j}{a_i a_j^*}}{2\pi}\EE_{\xi}\frac{\left(w^*-w^\top w^*w\right)^T\xi}{\norm{w+\xi}_2}\geq -\gamma.
	\end{align*}
	Then we have 
	\begin{align*}
		&\langle-\mathbb{E}_{\xi,\epsilon}\left(I-ww^\top\right)\nabla_wL\left(w+\xi,a+\epsilon\right),w^*-w\rangle\geq \frac{m\left(1+C_4\right)}{16}\norm{w-w^*}_2^2-\gamma.
	\end{align*}
\end{proof}

\subsection{Some Important Lemmas}
These lemmas give proper bounds that we will use in our later proof.
\begin{lemma}[Bound on $\left(\one^\top a^*\right)^2 - \one^\top a^*\one^\top a_t$]\label{lem_sum_a}
	Suppose $-A\leq\one^\top a^*\one^\top a_0 - \left(\one^\top a^*\right)^2\leq \frac{C_1}{p}\norm{a^*}_2^2$ for some constants $C_1,A\geq0$.
	For any $\delta\in\left(0,1\right)$, if we choose $\eta\leq C \left(k^4p^6\max\{1,\log\frac{1}{\delta}\}\right)^{-1},$ 
	then with at least probability $1-\delta$, we have
	$$ -A-2\left(\one^\top a^*\right)^2
	\leq \one^\top a^*\one^\top a_{t} - \left(\one^\top a^*\right)^2
	\leq \frac{C_2}{p}\norm{a^*}_2^2,$$
	for $\forall t\leq T= \tilde{O}\left(\frac{1}{\eta^2}\right)$ and some constant $C_2>C_1.$
\end{lemma}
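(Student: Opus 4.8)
The plan is to collapse the vector iteration for $a_t$ onto the single scalar quantity $r_t := \one^\top a^*\,\one^\top a_t-(\one^\top a^*)^2$, and then treat the resulting scalar recursion as a contracting (Ornstein--Uhlenbeck type) process with a one-sided, bounded perturbation plus a bounded mean-zero martingale noise; a standard supermartingale concentration argument then keeps it in the prescribed band.

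First I would apply $\one^\top$ to the update $a_{t+1}=a_t-\eta\nabla_a\cL(w_t+\xi_t,a_t+\epsilon_t)$ using the closed form $\nabla_a\cL(w,a)=\frac{1}{2\pi}(\one\one^\top+(\pi-1)I)a-\frac{1}{2\pi}(\one\one^\top+(g(\phi)-1)I)a^*$ from the Preliminaries. With $u_t=\one^\top a_t$, $s^*=\one^\top a^*$, $c=k+\pi-1$, $\lambda=1-\frac{\eta c}{2\pi}$, and $\phi_{\xi_t}=\angle(w_t+\xi_t,w^*)$, this gives an exact scalar update for $u_t$, and multiplying by $s^*$ and subtracting $(s^*)^2$ yields
\[ r_{t+1}=\lambda r_t+\tfrac{\eta}{2\pi}\big(g(\phi_{\xi_t})-\pi\big)(s^*)^2-\tfrac{\eta c}{2\pi}\,s^*\,\one^\top\epsilon_t . \]
The key observation is that $g(\phi)=(\pi-\phi)\cos\phi+\sin\phi\in[0,\pi]$ for every $\phi\in[0,\pi]$, so the ``bias'' term $\frac{\eta}{2\pi}(g(\phi_{\xi_t})-\pi)(s^*)^2$ is nonpositive and bounded below by $-\frac{\eta}{2}(s^*)^2$, \emph{uniformly} in $w_t$ and $\xi_t$; hence this lemma requires no control of the $w$-dynamics whatsoever (in particular it does not invoke Lemma~\ref{lem_gphi_1}). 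The term $-\frac{\eta c}{2\pi}s^*\one^\top\epsilon_t$ is conditionally mean zero given $\cF_t$ (since $\epsilon_t$ is symmetric and independent of $\cF_t$) with magnitude at most $\frac{\eta c\sqrt k}{2\pi}|s^*|\rho_a$, and the deviation of the bias term from its conditional mean is a further martingale difference bounded by $\frac{\eta}{2}(s^*)^2$. (If $s^*=0$ then $r_t\equiv0$ and the claim is trivial, so assume $s^*\neq0$.)

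Next I would read off the dynamics. Since $\lambda\in(0,1)$ and $\frac{1}{1-\lambda}=\frac{2\pi}{\eta c}$, unrolling shows the noiseless version of $r_t$ is pulled into the band $[-(s^*)^2,0]$; together with the hypothesis $r_0\in[-A,\frac{C_1}{p}\|a^*\|_2^2]$ this sits strictly inside the target band $[-A-2(s^*)^2,\frac{C_2}{p}\|a^*\|_2^2]$, with margin at least $\frac{C_2-C_1}{p}\|a^*\|_2^2$ above and at least $2(s^*)^2$ below. To make this high-probability I would introduce the stopping time $\tau$ at which $r_t$ first exits the target band and run an Azuma/Freedman argument on the stopped process: on $\{t<\tau\}$ the increments obey $|r_{t+1}-r_t|\le O(\eta\cdot\mathrm{poly}(k,p))$, while near the upper boundary the contraction $\lambda r_t$ gives a genuine negative drift of order $\eta c\cdot\frac{1}{p}\|a^*\|_2^2$ and near the lower boundary a positive drift of order $\eta c\cdot(s^*)^2$. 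An exponential-moment bound then produces a per-step failure probability of order $\exp(-\Omega(1/(\eta\,\mathrm{poly}(k,p))))$; a union bound over the $\tilde{O}(\eta^{-2})$ iterations, under $\eta\le C(k^4p^6\max\{1,\log\frac1\delta\})^{-1}$, keeps the total below $\delta$.

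I expect the main obstacle to be the bookkeeping in this concentration step rather than any conceptual difficulty. The martingale noise accumulates over a horizon as long as $T=\tilde{O}(\eta^{-2})$, and although the geometric weights $\lambda^{t-1-j}$ shrink its effective variance to $\tilde{O}(\eta\cdot\mathrm{poly}(k,p))$, one must carry every $k,p$ factor---entering through $c=k+\pi-1$ in the contraction rate, through $|\one^\top\epsilon_t|\le\sqrt k\,\rho_a$, and through $(\one^\top a^*)^2\le k\|a^*\|_2^2$ versus the ``budget'' $\frac1p\|a^*\|_2^2$---precisely enough that the drift-to-increment ratio beats $\log(T/\delta)$ exactly under the stated $\eta$. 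The scalar recursion itself is routine; the only genuinely delicate modeling choice is to treat $g(\phi_{\xi_t})-\pi$ as a bounded, one-signed random perturbation rather than attempting to evaluate $\EE_{\xi_t}g(\phi_{\xi_t})$.
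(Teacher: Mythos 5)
Your proposal is correct and follows essentially the same route as the paper: both reduce the update to the scalar recursion for $\one^\top a^*\one^\top a_t-(\one^\top a^*)^2$ with contraction factor $1-\tfrac{\eta(k+\pi-1)}{2\pi}$, exploit only the uniform bound $g(\phi)\le\pi$ (so no control of the $w$-dynamics or Lemma~\ref{lem_gphi_1} is needed), and then run an Azuma-type supermartingale concentration with bounded differences followed by a union bound over the $\tilde{O}(\eta^{-2})$ horizon. Your stopping-time/Freedman bookkeeping is just a cosmetic variant of the paper's indicator-truncated, geometrically rescaled supermartingale $G_t\mathds{1}_{\mathscr{E}_{t-1}}$ (and your lower-bound margin is $\min\{A,(\one^\top a^*)^2\}+(\one^\top a^*)^2$ rather than $2(\one^\top a^*)^2$, which does not affect the argument).
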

\begin{proof}
	We only give the prove for the right side, and the left side follows similar lines.
	
	We start with
	\begin{align*}
		\mathbb{E}[\one^\top a^*\one^\top a_{t+1}|\mathcal{F}_t]
		&=\left(1-\frac{\eta\left(k+\pi-1\right)}{2\pi}\right)\one^\top a^*\one^\top a_t
		+ \frac{\eta\left(k+\mathbb{E}g\left(\phi\right)-1\right)}{2\pi}\left(\one^\top a^*\right)^2\\
		&\leq \left(1-\frac{\eta\left(k+\pi-1\right)}{2\pi}\right)\one^\top a^*\one^\top a_t
		+ \frac{\eta\left(k+\pi-1\right)}{2\pi}\left(\one^\top a^*\right)^2.\\
	\end{align*}
	Denote $G_t= \left(1-\frac{\eta\left(k+\pi-1\right)}{2\pi}\right)^{-t}\left(\one^\top a^*\one^\top a_t-\left(\one^\top a^*\right)^2\right)$. Thus, we have
	\begin{align*}
		\mathbb{E}[G_{t+1} |\mathcal{F}_t] \leq G_t.\\
	\end{align*}
	Denote $\mathscr{E}_t=\{\forall\tau\leq t,\one^\top a^*\one^\top a_\tau-\left(\one^\top a^*\right)^2 \leq \frac{C_2}{p}\norm{a^*}_2^2\}\subset\mathcal{F}_t$. We have 
	$$\EE[G_{t+1}\mathds{1}_{\mathscr{E}_t} |\mathcal{F}_t] \leq G_t\mathds{1}_{\mathscr{E}_t}
	\leq G_t\mathds{1}_{\mathscr{E}_{t-1}}.$$
	Thus, $G_t\mathds{1}_{\mathscr{E}_{t-1}}$ is a supermartingale with initial value $G_0$.
	
	We have the following bound.
	\begin{align*}
		d_t 
		&\triangleq
		|G_t\mathds{1}_{\mathscr{E}_{t-1}}-\mathbb{E}[G_t\mathds{1}_{\mathscr{E}_{t-1}}|\mathcal{F}_{t-1}]|\\
		&=\left(1-\frac{\eta\left(k+\pi-1\right)}{2\pi}\right)^{-t} \eta \left|-\frac{k+\pi-1}{2\pi}\one^\top a^*\one^\top\epsilon_{t-1} +\frac{\mathbb{E}g\left(\phi_{\xi_{t-1}}\right) 
			-g\left(\phi_{\xi_{t-1}}\right)}{2\pi}\left(\one^\top a^*\right)^2\right|\\
		&\leq \left(1-\frac{\eta\left(k+\pi-1\right)}{2\pi}\right)^{-t}\eta\left(\frac{(k+\pi-1)k\rho_a}{2\pi}|\one^\top a^*|+\frac{\left(\one^\top a^*\right)^2}{2}\right)\\
		&=\left(1-\lambda\right)^{-t}M,
	\end{align*}
	where $\lambda=\frac{\eta\left(k+\pi-1\right)}{2\pi}$ and $M=\eta\left(\frac{(k+\pi-1)k\rho_a}{2\pi}|\one^\top a^*|+\frac{\left(\one^\top a^*\right)^2}{2}\right)$. Denote $r_t=\sqrt{\sum_{i=1}^{t}d_i^2}$. Then Azuma's Inequality can be applied, and we have
	$$\mathbb{P}\left(G_t\mathds{1}_{\mathscr{E}_{t-1}}-G_0\geq \tilde{O}\left(1\right)r_t\log^{1/2}\left(\frac{1}{\eta\delta}\right)\right)
	=\exp\left(-\frac{\tilde{O}\left(1\right)r_t^2\log\left(\frac{1}{\eta\delta}\right)}{2\sum_{i=1}^{t}d_i^2}\right)=\exp\left(-\frac{\tilde{O}\left(1\right)r_t^2\log\left(\frac{1}{(\eta\delta)^2}\right)}{2\sum_{i=1}^{t}d_i^2}\right)
	=\tilde{O}\left(\eta^2\delta\right).$$
	Therefore, with at least probability $1-\tilde{O}\left(\eta^2\delta\right),$ we have
	$$G_t\mathds{1}_{\mathscr{E}_{t-1}}\leq G_0 + \tilde{O}\left(1\right)r_t\log^{1/2}\left(\frac{1}{\eta\delta}\right).$$
	
	We next prove that conditioning on $\mathscr{E}_{t-1}$, we have $\mathscr{E}_t$ with at least probability $1-\tilde{O}\left(\eta^2\delta\right)$. Thus, $\mathscr{E}_T$ holds with at least probability $1-\delta$, when $T=\tilde{O}\left(\frac{1}{\eta^2}\right)$.
	
	From $G_t\mathds{1}_{\mathscr{E}_{t-1}}\leq G_0 + \tilde{O}\left(1\right)r_t\log^{1/2}\left(\frac{1}{\eta\delta}\right)$, we know
	\begin{align*}
		\one^\top a^*\one^\top a_t-\left(\one^\top a^*\right)^2
		&\leq \left(1-\lambda\right)^{t}\left(\one^\top a^*\one^\top a_0-\left(\one^\top a^*\right)^2+ \tilde{O}\left(1\right)r_t\log^{1/2}\left(\frac{1}{\eta\delta}\right)\right)\\
		&\leq \one^\top a^*\one^\top a_0-\left(\one^\top a^*\right)^2 + \tilde{O}\left(1\right)\left(1-\lambda\right)^{t}r_t\log^{1/2}\left(\frac{1}{\eta\delta}\right).
	\end{align*}
	
	We have
	\begin{equation}
		\begin{aligned}
			\left(1-\lambda\right)^{t}r_t &=\left(1-\lambda\right)^{t}M\sqrt{\sum_{i=1}^{t}\left(1-\lambda\right)^{-2i}}
			=M\sqrt{\sum_{i=0}^{t-1}\left(1-\lambda\right)^{2i}}\\
			&\leq M\sqrt{\frac{1}{1-\left(1-\lambda\right)^2}} 
			\leq \frac{M}{\sqrt{\lambda}}\\
			&=\left(\frac{k+\pi-1}{2\pi}k\rho_a|\one^\top a^*|+\frac{\left(\one^\top a^*\right)^2}{2}\right)\sqrt{\frac{2\pi\eta}{k+\pi-1}}.
		\end{aligned}
		\label{rt}
	\end{equation}

	By carefully choosing $\eta_{\max}=\tilde{O}\left(\frac{1}{k^4p^6}\right)$ and let $\eta=\frac{\eta_{\max}}{\max\{1,\log\frac{1}{\delta}\}}$, we have
	\begin{align*}
		\one^\top a^*\one^\top a_t-\left(\one^\top a^*\right)^2
		&=
		\frac{C_1}{p}\norm{a^*}_2^2 
		+\tilde{O}\left(1\right)\log^{1/2}\left(\frac{1}{\eta\delta}\right)\left(\frac{k+\pi-1}{2\pi}k\rho_a|\one^\top a^*|+\frac{\left(\one^\top a^*\right)^2}{2}\right)\sqrt{\frac{2\pi\eta}{k+\pi-1}}\\
		&\leq \frac{C_2}{p}\norm{a^*}_2^2.
	\end{align*}
\end{proof}

\begin{lemma}[Bound on $a_t^\top a^*$]\label{lem_inner_a} 
	Suppose $\frac{C_1}{p}\norm{a^*}_2^2\leq a_0^\top a^*\leq M_a$, $\mathbb{E}_{\xi_{t-1}}g\left(\phi_t\right)\geq 1+\frac{C_2}{p}$ and
	$$ -A-2\left(\one^\top a^*\right)^2
	\leq \one^\top a^*\one^\top a_{t} - \left(\one^\top a^*\right)^2
	\leq  \frac{C_3}{p}\norm{a^*}_2^2$$
	holds for $\forall t\leq T= \tilde{O}\left(\frac{1}{\eta^2}\right)$ and some positive constants $M_a,C_1,C_2>C_3$. 
	If we take $\eta\leq C \left(k^4p^6\max\{1,\log\frac{1}{\delta}\}\right)^{-1},$ then with at least probability  $1-\delta$, we have
	$$
	\frac{C_4}{p}\norm{a^*}_2^2 \leq
	a_{t}^\top a^*
	\leq A + M_a + \left(1+\frac{C_5}{p}\right)\norm{a^*}_2^2+2\left(\one^\top a^*\right)^2
	$$
	for $\forall t\leq T=\tilde{O}\left(\frac{1}{\eta^2}\right)$ and some positive constants $C_4,C_5$.
\end{lemma}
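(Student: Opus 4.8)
The plan is to control $a_t^\top a^*$ through a one-dimensional recursion, exactly in the spirit of the proof of Lemma~\ref{lem_sum_a}. First I would project the $a$-update onto $a^*$: using the explicit form of $\nabla_a\cL$ from Appendix~\ref{app_pre} together with $\mathbb{E}\epsilon_t=0$, one obtains
\[
\mathbb{E}[a_{t+1}^\top a^*\mid\mathcal{F}_t]=(1-\lambda)\,a_t^\top a^*-\frac{\eta}{2\pi}\big(\one^\top a_t\,\one^\top a^*-(\one^\top a^*)^2\big)+\frac{\eta}{2\pi}\big(\mathbb{E}_{\xi_t}g(\phi_{\xi_t})-1\big)\norm{a^*}_2^2,\qquad \lambda:=\frac{\eta(\pi-1)}{2\pi}.
\]
The key point is that the last two terms are governed entirely by quantities the lemma already controls: $\one^\top a_t\,\one^\top a^*-(\one^\top a^*)^2\in[-A-2(\one^\top a^*)^2,\ \frac{C_3}{p}\norm{a^*}_2^2]$ by hypothesis, and $\mathbb{E}_{\xi_t}g(\phi_{\xi_t})\in[1+\frac{C_2}{p},\ \pi]$ by hypothesis together with the upper bound $\mathbb{E}_\xi g\le\pi$ of Lemma~\ref{lem_gphi_1}. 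Hence the recursion is a contraction at rate $\lambda$ toward a bounded target, and the two sides of the claim fall out of plugging in the two extremes.

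For the upper bound, substituting the lower extreme $-A-2(\one^\top a^*)^2$ and $\mathbb{E}g\le\pi$ gives $\mathbb{E}[a_{t+1}^\top a^*\mid\mathcal{F}_t]\le(1-\lambda)a_t^\top a^*+\lambda\,b_{\mathrm{up}}$ with $b_{\mathrm{up}}=\frac{A+2(\one^\top a^*)^2}{\pi-1}+\norm{a^*}_2^2<A+2(\one^\top a^*)^2+\norm{a^*}_2^2$, using $\pi-1>1$. I would then reuse the supermartingale machinery of Lemma~\ref{lem_sum_a}: set $G_t=(1-\lambda)^{-t}(a_t^\top a^*-b_{\mathrm{up}})$, restrict it to the event that the claimed upper bound has held through step $t-1$ (this restriction preserves the supermartingale property because the offset $a_t^\top a^*-b_{\mathrm{up}}$ is positive on any step where the upper bound is first violated, exactly as in Lemma~\ref{lem_sum_a}), bound the martingale increments by $d_t\le(1-\lambda)^{-t}M$ with $M=\tilde{O}(\eta\cdot\mathrm{poly}(k))$ using $\norm{\epsilon_t}_2\le\rho_a=O(1)$, the boundedness of $g$, and crude $\ell_1/\ell_2$ estimates for the $\one$-vector terms, apply Azuma's inequality, and note that $(1-\lambda)^t r_t\le M/\sqrt{\lambda}=\tilde{O}(\sqrt{\eta}\cdot\mathrm{poly}(k))$, which for the prescribed $\eta\le C\left(k^4p^6\max\{1,\log\frac{1}{\delta}\}\right)^{-1}$ is at most $\frac{C_5}{p}\norm{a^*}_2^2$. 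Unwinding the rescaling yields $a_t^\top a^*\le\max\{a_0^\top a^*,b_{\mathrm{up}}\}+\frac{C_5}{p}\norm{a^*}_2^2$, which with $a_0^\top a^*\le M_a$ is exactly $A+M_a+(1+\frac{C_5}{p})\norm{a^*}_2^2+2(\one^\top a^*)^2$; an induction on $t$ (conditioning on the good event through $t-1$ to propagate it to $t$, as in Lemma~\ref{lem_sum_a}) extends this to all $t\le T=\tilde{O}(\eta^{-2})$.

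The lower bound is the mirror image: substituting $\one^\top a_t\,\one^\top a^*-(\one^\top a^*)^2\le\frac{C_3}{p}\norm{a^*}_2^2$ and $\mathbb{E}g\ge1+\frac{C_2}{p}$ and using $C_2>C_3$, the combined drift is at least $\frac{\eta}{2\pi}\cdot\frac{C_2-C_3}{p}\norm{a^*}_2^2>0$, so the recursion pushes $a_t^\top a^*$ up toward $b_{\mathrm{low}}=\frac{C_2-C_3}{(\pi-1)p}\norm{a^*}_2^2$. Then $(1-\lambda)^{-t}(b_{\mathrm{low}}-a_t^\top a^*)$, restricted to the event that the claimed lower bound has held so far, is a supermartingale, and the same Azuma estimate gives $a_t^\top a^*\ge\min\{a_0^\top a^*,b_{\mathrm{low}}\}-\frac{C}{p}\norm{a^*}_2^2$ for an arbitrarily small constant $C$, hence $a_t^\top a^*\ge\frac{C_4}{p}\norm{a^*}_2^2$ for any $C_4<\min\{C_1,\frac{C_2-C_3}{\pi-1}\}$, using $a_0^\top a^*\ge\frac{C_1}{p}\norm{a^*}_2^2$. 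A union bound over the two one-sided failure events, each of probability at most $\delta/2$, gives the full statement with probability $1-\delta$.

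The obstacle here is bookkeeping rather than any new idea: one must keep the good events for the upper and lower bounds separate, so that on a step where the relevant bound is first violated the rescaled offset ($a_t^\top a^*-b_{\mathrm{up}}$, resp.\ $b_{\mathrm{low}}-a_t^\top a^*$) has the sign required for the supermartingale inequality; one must track the $\mathrm{poly}(k,p)$ dependence through the increment bound $M$ precisely enough that the Azuma fluctuation---which carries an extra $\log^{1/2}(1/(\eta\delta))$ factor---stays below $\frac{1}{p}\norm{a^*}_2^2$ under the stated step size; and the induction must be carried over all $\tilde{O}(\eta^{-2})$ iterations. Each of these steps parallels the proof of Lemma~\ref{lem_sum_a}, so no new technique is required.
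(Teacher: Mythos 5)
Your proposal is correct and follows essentially the same route as the paper: the same one-dimensional conditional-expectation recursion for $a_t^\top a^*$ obtained from the explicit form of $\nabla_a\cL$ with the hypotheses on $\one^\top a^*\one^\top a_t-(\one^\top a^*)^2$ and $\mathbb{E}_\xi g(\phi_\xi)$ plugged in, the same rescaled process restricted to the event that the claimed bound has held so far (sub/supermartingale), and the same Azuma-plus-small-step-size estimate, with matching fixed points ($b_{\mathrm{low}}=\frac{C_2-C_3}{(\pi-1)p}\norm{a^*}_2^2$ is exactly the paper's $\frac{2\pi}{\pi-1}C$). The only difference is that you also write out the upper-bound half, which the paper dismisses as "similar lines," and your sign/first-violation justification for the event restriction is consistent with the paper's argument.
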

\begin{proof}
	We only give the proof for the left side, the right side follows similar lines.
	\begin{align*}
		\mathbb{E}[a_{t+1}^\top a^*|\mathcal{F}_t]
		&= \left(1-\frac{\eta\left(\pi-1\right)}{2\pi}\right)a_t^\top a^* + \frac{\eta\left(\mathbb{E}g\left(\phi_t\right)-1\right)}{2\pi}\norm{a^*}_2^2
		+\frac{\eta}{2\pi}\left(\left(\one^\top a^*\right)^2 -\one^\top a^*\one^\top a_t\right)\\
		&\geq \left(1-\frac{\eta\left(\pi-1\right)}{2\pi}\right)a_t^\top a^* + 
		\eta \frac{C_2-C_3}{2\pi p}\norm{a^*}_2^2.
	\end{align*}
	
	Denote $C = \frac{C_2-C_3}{2\pi p}\norm{a^*}_2^2$ and $G_t = \left(1-\frac{\eta\left(\pi-1\right)}{2\pi}\right)^{-t}\left(a_t^\top a^*-\frac{2\pi}{\pi-1}C\right)$. The above inequality changes to
	$$\mathbb{E}[G_{t+1}|\mathcal{F}_t]\geq G_t.$$
	
	
	Denote $\mathscr{E}_t=\{\forall \tau\leq t, a_\tau^\top a^*\geq\frac{C_4}{p}\norm{a^*}_2^2\}$ for some constant $C_4=\min\{\frac{C_1}{2},\frac{C_2-C_3}{2(\pi-1)}\}$. Then, for all $t$, $G_{t+1}\mathds{1}_{\mathscr{E}_t}$ satisfies
	$$\mathbb{E}[G_{t+1}\mathds{1}_{\mathscr{E}_t}|\mathcal{F}_t]\geq G_t\mathds{1}_{\mathscr{E}_t}
	\geq G_t\mathds{1}_{\mathscr{E}_{t-1}}.$$
	Thus, $\{G_{t+1}\mathds{1}_{\mathscr{E}_t}\}$ is submartingale.
	
	We have the following bound of the difference between $G_{t+1}\mathds{1}_{\mathscr{E}_t}$ and $\mathbb{E}[G_{t+1}\mathds{1}_{\mathscr{E}_t}]$.
	\begin{align*}
		d_{t+1}
		&\triangleq
		|G_{t+1}\mathds{1}_{\mathscr{E}_t}-\mathbb{E}[G_{t+1}\mathds{1}_{\mathscr{E}_t}|\mathcal{F}_t]|\\
		&=\eta\left(1-\frac{\eta\left(\pi-1\right)}{2\pi}\right)^{-t-1}
		\left|-\frac{\pi-1}{2\pi}\epsilon_t^\top a^*+\frac{\mathbb{E}g\left(\phi_{\xi_t}\right)-g\left(\phi_{\xi_t}\right)}{2\pi}\norm{a^*}_2^2-\frac{\one^\top a^*\one^\top\epsilon_t}{2\pi}\right|\\
		&\leq\eta\left(1-\frac{\eta\left(\pi-1\right)}{2\pi}\right)^{-t-1}\left(\frac{\pi-1}{2\pi}\rho_a\norm{a^*}_2 +\frac{\norm{a^*}_2^2}{2\pi}+\frac{|\one^\top a^*|k\rho_a}{2\pi}\right)\\
		&=\left(1-\lambda\right)^{-t-1}M,
	\end{align*}
	where $\lambda=\frac{\eta\left(\pi-1\right)}{2\pi}$ and $M=\eta\left(\frac{\pi-1}{2\pi}\rho_a\norm{a^*}_2 +\frac{\norm{a^*}_2^2}{2\pi}+\frac{|\one^\top a^*|k\rho_a}{2\pi}\right)$.
	
	Denote $r_t=\sqrt{\sum_{i=0}^{t}d_i^2}$. By Azuma's Inequality again, we have 
	$$\mathbb{P}\left(G_t\mathds{1}_{\mathscr{E}_{t-1}}-G_0\leq-\tilde{O}\left(1\right)r_t\log^{\frac{1}{2}}\left(\frac{1}{\eta\delta}\right)\right)\leq\exp\left(-\frac{\tilde{O}\left(1\right)r_t^2\log\left(\frac{1}{\eta\delta}\right)}{2\sum_{i=0}^{t}d_i^2}\right)=\tilde{O}\left(\eta^2\delta\right).$$
	
	Therefore, with at least probability $1-\tilde{O}\left(\eta^2\delta\right)$, we have 
	$$G_t\mathds{1}_{\mathscr{E}_{t-1}}
	\geq G_0 -\tilde{O}\left(1\right)r_t\log^{\frac{1}{2}}\left(\frac{1}{\eta\delta}\right).$$
	This means that when $\mathscr{E}_{t-1}$ holds, with at least probability $1-\tilde{O}\left(\eta^2\delta\right)$,
	\begin{align*}
		a_t^\top a^*
		&\geq \left(1-\lambda\right)^{t}\left(a_0^\top a^*-\frac{2\pi}{\pi-1}C
		-\tilde{O}\left(1\right)r_t\log^{\frac{1}{2}}\left(\frac{1}{\eta\delta}\right)\right)
		+\frac{2\pi}{\pi-1}C\geq C_6
		-\tilde{O}\left(1\right)\left(1-\lambda\right)^{t}r_t\log^{\frac{1}{2}}\left(\frac{1}{\eta\delta}\right),\\
	\end{align*}	
	where $C_6=\min\{\frac{C_1}{p}\norm{a^*}^2_2,\frac{2\pi}{\pi-1}C\}$.
	Following similar lines to \eqref{rt}, we have
	\begin{align*}
		\left(1-\lambda\right)^{t}r_t\leq \frac{M}{\sqrt{\lambda}}
		=\left(\frac{\pi-1}{2\pi}\rho_a\norm{a^*}_2 +\frac{\norm{a^*}_2^2}{2\pi}+\frac{|\one^\top a^*|k\rho_a}{2\pi}\right)
		\sqrt{\frac{2\pi\eta}{\pi-1}}.
	\end{align*}	
	With a proper step size $\eta\leq \left(k^4p^6\max\{1,\log\frac{1}{\delta}\}\right)^{-1},$ we then have 
	\begin{align*}
		a_t^\top a^*
		&\geq C_6
		-\tilde{O}\left(1\right)\log^{\frac{1}{2}}\left(\frac{1}{\eta\delta}\right)\left(\frac{\pi-1}{2\pi}\rho_a\norm{a^*}_2 +\frac{\norm{a^*}_2^2}{2\pi}+\frac{|\one^\top a^*|k\rho_a}{2\pi}\right)
		\sqrt{\frac{2\pi\eta}{\pi-1}}\geq \frac{C_6}{2}=\frac{C_4}{p}\norm{a^*}_2^2.
	\end{align*}
\end{proof}

\subsection{Detailed Proof of Theorem \ref{stage1_w}}\label{pf_stage1_w}
\subsubsection{Proof of Lemma \ref{lem_initial}}
\begin{proof}
	For any $a\in\BB_0\left(\frac{|\one^\top a^*|}{\sqrt{k}}\right),$ we have
	\begin{align*}
		&~\one^\top a\leq\norm{a}_1\leq\sqrt{k}\norm{a}_2\leq|\one^\top a^*|,\\
		&\one^\top a\geq-\norm{a}_1\geq-\sqrt{k}\norm{a}_2\geq-|\one^\top a^*|.
	\end{align*}
	Thus, we have 
	\begin{align*}
		|\one^\top a|\leq|\one^\top a^*|,
	\end{align*}
	which is equivalent to the following inequality.
	$$-2\left(\one^\top{a}^*\right)^2\leq \one^\top {a}^*\one^\top a - \left(\one^\top {a}^*\right)^2\leq 0.$$
	By this inequality, we prove that $a\in \cA_{C_3}.$
\end{proof}

\subsubsection{Proof of Lemma \ref{lem_gphi_1}}\label{pf_gphi}
\begin{proof}
	For calculation simplicity, we rescale $w$ and $\xi$ by $\rho_w.$ Specifically,
	For any $w\in \SSS_{0}(1),$	define $\nu=\rho_w^{-1}w$ and $r_\nu\triangleq\norm{\nu}_2=\norm{w}_2/\rho_w=1/\rho_w,$ where $\rho_w=\Omega\left(p^2\right)$ and $r_\nu = O\left(p^{-2}\right)$. Moreover, let $\zeta\triangleq\xi/\rho_w\sim \text{unif}(\BB_0(1)).$ Then we have $\angle(v+\zeta,w^*)=\angle(w+\zeta,w^*).$\\
	Without loss of generality, we assume $w^*=\left(1,0,...,0\right)^\top.$  To calculate the expectation, we need to rewrite $\nu$ in the $p$-dimension polar coordinate system as discussed in Section \ref{section_polar_coordinate}. Specifically, $\nu$ can be written as  $\nu=(r,\theta_1,\theta_2,...,\theta_{p-1}),$ where $r\geq 0,$ $\theta_i\in[0,\pi],~i=1,2,...,p-2,$ $\theta_{p-1}\in[0,2\pi]$ and $\theta_1=\arccos(\nu_1/\norm{\nu})=\angle(w,w^*).$ Moreover, under the polar coordinate,  $\nu+\zeta$ is expressed as $(r^\zeta,\theta_1^\zeta,\theta_2^\zeta,...,\theta_{p-1}^\zeta).$ We then have  
	\[ \theta_1^\zeta
	=\arccos\frac{\nu_1+\zeta_1}{\norm{\nu+\zeta}_2}
	=\angle(v+\zeta,w^*)
	=\phi_\xi,\]
	where $\zeta = \left(\zeta_1, \zeta_2,\cdots, \zeta_p\right)$.\\
	Therefore, for sufficiently large $\rho_w$ we have
	\begin{align*}
		&\mathbb{E}_\xi \left(\pi-\phi_\xi\right)\cos\phi_\xi+\sin\phi_\xi=\mathbb{E}_\zeta \left(\pi-\theta_1^\zeta\right)\cos\theta_1^\zeta+\sin\theta_1^\zeta\\
		&=\int_{\mathbb{B}_0\left(1\right)}
		\left[\left(\pi-\arccos\frac{\nu_1+x_1}{\norm{\nu+x}_2}\right)\frac{\nu_1+x_1}{\norm{\nu+x}_2}
		+\sin\arccos\frac{\nu_1+x_1}{\norm{\nu+x}_2}\right] \frac{1}{V(1)}dx\\
		&=\int_{\mathbb{B}_\nu\left(1\right)}
		\left[\left(\pi-\arccos\frac{x_1}{\norm{x}_2}\right)\frac{x_1}{\norm{x}_2}
		+	\sin\arccos\frac{x_1}{\norm{x}_2}\right] \frac{\Gamma\left(\frac{p}{2}+1\right)}{\pi^{p/2}}dx\\
		&\geq\int_{\mathbb{B}_0\left(1-r_\nu\right)}
		\left[\left(\pi-\arccos\frac{x_1}{\norm{x}_2}\right)\frac{x_1}{\norm{x}_2}
		+	\sin\arccos\frac{x_1}{\norm{x}_2}\right] \frac{\Gamma\left(\frac{p}{2}+1\right)}{\pi^{p/2}}dx\\
		&=\int_{0}^{1-r_\nu} r^{p-1} dr \frac{\Gamma\left(\frac{p}{2}+1\right)}{\pi^{p/2}} \int_{0}^{\pi}\left(\left(\pi-\theta_1\right)\cos\theta_1+\sin\theta_1\right)\sin^{p-2}\theta_1d\theta_1\\
		&\quad \int_{0}^{\pi}\cdots\int_{0}^{\pi}\int_{0}^{2\pi}\sin^{p-3}\theta_2\cdots\sin\theta_{p-2}d\theta_2 \cdots d\theta_{p-1}\\
	\end{align*}
	We then apply Lemma \ref{lem_gphi_gap} by taking $f(\theta)=\left(\pi-\theta\right)\cos\theta+\sin\theta$ and get the following result.
	\begin{align*}
		&\mathbb{E}_\xi \left(\pi-\phi_\xi\right)\cos\phi_\xi+\sin\phi_\xi\\
		&= L_{p}\left(r_\nu\right)A_p\left(f\right)= M_{p}A_p\left(f\right) + \left(L_{p}\left(r_\nu\right)-M_{p}\right)A_p\left(f\right)\\
		&=\dfrac{\Gamma\left(\dfrac{p}{2}\right)\Gamma\left(\dfrac{p+2}{2}\right)} {\Gamma\left(\dfrac{p+1}{2}\right)^2} - O\left(\frac{1}{p}\right) =1+\frac{1}{2p}+\frac{1}{8p^2}+\cdots-O\left(\frac{1}{p}\right)= 1+\Omega\left(\frac{1}{p}\right),
	\end{align*}
	where the last equality is due to the Taylor expansion of first term at $p=+\infty$, i.e.
	\[\dfrac{\Gamma\left(\dfrac{p}{2}\right)\Gamma\left(\dfrac{p+2}{2}\right)} {\Gamma\left(\dfrac{p+1}{2}\right)^2}=1+\dfrac{1}{2p}+\dfrac{1}{8p^2}+o\left(\dfrac{1}{p^2}\right).\]
	
	Similarly, we have
	\begin{align*}
		\mathbb{E}_\xi\phi_\xi=\EE_\zeta \theta_1^\zeta
		&=\int_{\mathbb{B}_\nu\left(1\right)}\left(\arccos\frac{x_1}{\norm{x}_2}\right) \frac{\Gamma\left(\frac{p}{2}+1\right)}{\pi^{p/2}}dx
		\leq \int_{\mathbb{B}_0\left(1+r_\nu\right)}\left(\arccos\frac{x_1}{\norm{x}_2}\right) \frac{\Gamma\left(\frac{p}{2}+1\right)}{\pi^{p/2}}dx\\
		&=\left(\int_{0}^{1+r_\nu} r^{p-1} dr \frac{\Gamma\left(\frac{p}{2}+1\right)}{\pi^{p/2}} \int_{0}^{\pi}\theta_1\sin^{p-2}\left(\theta_1\right)d\theta_1\right)\\
		&~~~\cdot\left( \int_{0}^{\pi}\cdots\int_{0}^{\pi}\int_{0}^{2\pi}\sin^{d-3}\theta_2\cdots\sin\theta_{d-2}d\theta_1 \cdots d\theta_{d-1}\right).\\
	\end{align*}	
	We then apply Lemma \ref{lem_gphi_gap} by taking $g(\theta)=\theta$ and get the following result.
	\begin{align*}
		\mathbb{E}_\xi\phi_\xi
		&= H_{p}\left(r_\nu\right)A_p\left(g\right)
		= M_{p}A_p\left(g\right) + \left(H_{p}\left(r_\nu\right)-M_{p}\right)A_p\left(g\right)\\
		&=\frac{\pi}{2}+O\left(\frac{1}{p}\right)\leq \frac{3\pi}{4}.
	\end{align*}
\end{proof}

\subsubsection{Proof of Lemma \ref{stage1_a}}
\begin{proof}
	
	Denote $\mathscr{E}_t=\{\forall\tau\leq t, a_\tau\in \cA_{C_2,C_3}\}$. Then, by Theorem \ref{thm_SC}, when $(w,a)\in\cA_{C_2,C_3}$, we have
	\begin{align*}
		\langle-\mathbb{E}_{\xi,\epsilon}\nabla_aL\left(w_t+\xi,a_t+\epsilon\right),a^*-a_t\rangle
		&\geq \frac{C}{p} \norm{a_t-a^*}_2^2,
	\end{align*}
	for some constant $C$.
	
	We next bound the expectation of the norm of the perturbed gradient.
	\begin{align*}
		\mathbb{E}_{\xi,\epsilon}\norm{\nabla_aL\left(w_t+\xi,a_t+\epsilon\right)}_2^2
		&=\mathbb{E}_{\xi,\epsilon}\norm{\nabla_aL\left(w_t+\xi,a_t+\epsilon\right)-\nabla_aL\left(w^*,a^*\right)}_2^2\\
		&=\mathbb{E}_{\xi,\epsilon}
		\norm{\frac{1}{2\pi}\left(\mathds{1}\one^\top+\left(\pi-1\right)I\right)\left(a_t+\epsilon-a^*\right)
			-\frac{g\left(\phi_\xi\right)-\pi}{2\pi}a^*}_2^2\\
		&\leq \frac{1}{2\pi^2}\mathbb{E}_{\xi,\epsilon}\norm{\left(\mathds{1}\one^\top+\left(\pi-1\right)I\right)\left(a_t+\epsilon-a^*\right)}_2^2 
		+\frac{1}{2}\norm{a^*}_2^2\\
		&\leq \frac{\left(k+\pi-1\right)^2}{\pi^2}\left(\norm{a_t-a^*}_2^2+\rho_a^2\right) 
		+\frac{1}{2}\norm{a^*}_2^2.
	\end{align*}
	Therefore,  the expectation $\mathbb{E}\left[\norm{a_{t+1}-a^*}_2^2\mathds{1}_{\mathscr{E}_t}\right]
	$ can be bounded as follows.
	\begin{align*}
		\mathbb{E}\left[\norm{a_{t+1}-a^*}_2^2\mathds{1}_{\mathscr{E}_t}\right]
		&=\mathbb{E}\left[\norm{a_t-a^*}_2^2\mathds{1}_{\mathscr{E}_t}\right]
		-2\mathbb{E}\left[\langle-\eta\mathbb{E}_{\xi,\epsilon}\nabla_aL\left(w_t+\xi,a_t+\epsilon\right),a^*-a_t\rangle\mathds{1}_{\mathscr{E}_t}\right]\\
		&+\mathbb{E}\left[\norm{\eta \nabla_aL\left(w_t+\xi,a_t+\epsilon\right)}_2^2\mathds{1}_{\mathscr{E}_t}\right]\\
		&\leq \left(1-\eta \frac{C}{p} + \eta^2\frac{\left(k+\pi-1\right)^2}{\pi^2}\right)\left[ \mathbb{E}\norm{a_t-a^*}_2^2\mathds{1}_{\mathscr{E}_t}\right]\\
		&+\frac{\eta^2\left(k+\pi-1\right)^2}{\pi^2}\rho_a^2 +\frac{\eta^2}{2}\norm{a^*}_2^2\\
		&\leq\left(1-\lambda_1\right)\mathbb{E}\norm{a_t-a^*}_2^2\mathds{1}_{\mathscr{E}_{t-1}} + b_1,
	\end{align*}
	where $\lambda_1=\eta \frac{C}{p} - \eta^2\frac{\left(k+\pi-1\right)^2}{\pi^2}$ and $b_1=\frac{\eta^2\left(k+\pi-1\right)^2}{\pi^2}\rho_a^2 +\frac{\eta^2}{2}\norm{a^*}_2^2$.
	
	Note that $\mathscr{E}_t$ implies that $\norm{a_t-a^*}_2^2\geq \frac{1}{4}\norm{a^*}_2^2$, then we have
	\begin{equation}
		\begin{aligned}
			\frac{1}{4}\norm{a^*}_2^2\mathbb{P}\left(\mathscr{E}_t\right)
			&\leq \mathbb{E}\left[\norm{a_t-a^*}_2^2\mathds{1}_{\mathscr{E}_t}\right]\leq \mathbb{E}\left[\norm{a_t-a^*}_2^2\mathds{1}_{\mathscr{E}_{t-1}}\right]
			\leq \left(1-\lambda_1\right)^t \norm{a_0-a^*}_2^2 + \frac{b_1}{\lambda_1}.
		\end{aligned}
	\end{equation}
	
	With our choice of small $\eta$, we have $\lambda_1=O\left(\eta/p\right)\in\left(0,1\right)$,$\frac{b_1}{\lambda_1}\leq \frac{1}{8}\norm{a^*}_2^2$. Thus when $t=\tilde{O}\left(\frac{p}{\eta}\right)$, we have $\mathbb{P}\left(\mathscr{E}_t\right) \leq \frac{1}{2}$. We recursively apply the same procedure with $\log\frac{1}{\delta}$ times, and after $T_0=\tilde{O}\left(\frac{p}{\eta}\log\frac{1}{\delta}\right)$, we have $\mathbb{P}\left(\mathscr{E}_{T_0}\right) < \delta$, which implies that with probability at least $1-\delta$, there exists $\tau_{11}\leq T_0$ such that $$\frac{C_2}{p}\norm{a^*}_2^2\leq a_{\tau_{11}}^\top a^*~\text{and } \norm{a_{\tau_{11}}-a^*/2}_2^2\leq\norm{a^*}_2^2,$$ for some constant $C_2$. 
	Moreover, $\norm{a_{\tau_{11}}-a^*/2}_2^2\leq\norm{a^*}_2^2$ further implies $a_{\tau_{11}}^\top a^*\leq2\norm{a^*}_2^2.$
	Thus, we have $$\frac{C_2}{p}\norm{a^*}_2^2\leq a_{\tau_{11}}^\top a^*\leq2\norm{a^*}_2^2.$$	
	Then by Lemma \ref{lem_sum_a} and Lemma \ref{lem_inner_a}, we get the desired result.
\end{proof}

\subsubsection{Proof of Lemma \ref{lemma_w_escape}}

\begin{proof}
	
	When $\rho_w$ is sufficiently large, with probability $1-\delta$, the norm of perturbed gradient w.r.t. $w$, i.e., $\norm{\nabla_w L(w,a)}$, is at least $O\left(\rho_w\right)$ once in $\log\frac{1}{\delta}$ steps. Thus,  with  at least probability  $1-\delta,$ there exists $t\leq\log\frac{1}{\delta}$ such that $$w_t^\top w^*=-1+O\left(\eta^2\rho_w^2\right).$$ We take this point as $w_0$ in the later proof.
	Recall that $\tilde{w}_t=w_{t-1}-\eta\left(I-w_{t-1}w_{t-1}^\top\right)\nabla_wL$ and $w_t=\mathrm{Proj}_{\SSS_0\left(1\right)}\left(\tilde{w}_t\right).$	Without loss of generality, we assume $\tilde{w}_{t+1}^\top w^*\leq0$, otherwise we already have $1+w_{t+1}^\top w^*\geq1.$
	Notice that $\norm{\tilde{w}_{t+1}}_2\geq 1$, we then have
	\begin{align*}
		1+{w}_{t+1}^\top w^*
		=&1+\frac{\tilde{w}_{t+1}^\top w^*}{\norm{\tilde{w}_{t+1}}_2}\geq 1+{\tilde{w}_{t+1}^\top w^*}\\
		=& 1+w_{t}^\top w^*-\eta {w^*}^\top\left(I-w_{t}w_{t}^\top\right)\nabla_wL(w_t+\xi,a_t+\epsilon)\\
		=& 1+w_{t}^\top w^*+\frac{\eta}{2\pi}(1+w_{t}^\top w^*)(1-w_t^\top w^*)(a_t+\epsilon)^\top a^* \left(\pi-\phi_\xi\right)-\eta {w^*}^\top\left(I-w_{t}w_{t}^\top\right)\Big(\frac{\norm{a_t+\epsilon}_2^2}{2}\\
		&+\frac{\sum_{i\neq j}\left(a_{t,i}+\epsilon_i\right)\left(a_{t,j}+\epsilon_j\right)}{2\pi}
		-\frac{\left(a_t+\epsilon\right)^\top a^* \sin\phi_\xi}{2\pi}\frac{1}{\norm{w_t+\xi}_2}
		-\frac{\sum_{i\neq j}\left(a_{t,i}+\epsilon_i\right) a_j^*}{2\pi}\frac{1}{\norm{w_t+\xi}_2}\Big)\xi.
	\end{align*}
	Thus, we have 
	\begin{align*}
		\EE\left(1+{w}_{t+1}^\top w^*|\mathcal{F}_t\right)
		\geq&(1+w_{t}^\top w^*)\left(1+\frac{\eta}{2\pi}(1-w_t^\top w^*)a_t^\top a^* \left(\pi-\EE_\xi\phi_\xi\right)\right)\\
		&+\eta\EE_\xi\left(w^*-w_t^\top w^*w_t\right)^T\left(
		\frac{a_t^\top a^* \sin\phi_\xi}{2\pi}\frac{1}{\norm{w_t+\xi}_2}
		+\frac{\sum_{i\neq j}{a_{t,i} a_j^*}}{2\pi}\frac{1}{\norm{w_t+\xi}_2}\right)\xi\\
		=&(1+w_{t}^\top w^*)\left(1+\frac{\eta}{2\pi}(1-w_t^\top w^*)a_t^\top a^* \left(\pi-\EE_\xi\phi_\xi\right)\right)\\
		&+\frac{\eta}{2\pi}{a_t^\top a^*}\EE_{\xi}
		\frac{ \sin\phi_\xi\left(w^*-w_t^\top w^*w_t\right)^T\xi}{\norm{w_t+\xi}_2},
	\end{align*}
	where the last line is due to \eqref{eq1}.
	
	We next show that
	\begin{align}\label{ineq2}
		\EE_{\xi}
		\frac{ \sin\phi_\xi\left(w^*-w^\top w^*w\right)^T\xi}{\norm{w+\xi}_2}\geq 0
	\end{align}
	
	Let $\phi_x=\angle(x,w^*).$ 
	\begin{align}
		\EE_{\xi}
		\frac{ \sin\phi_\xi\left(w^*-w^\top w^*w\right)^T\xi}{\norm{w+\xi}_2}
		=&\int_{\BB_{w}\left(\rho_w\right)}\frac{1}{V\left(\rho_w\right)}\frac{\sin(\phi_x)\left(w^*-w^\top w^*w\right)^\top x}{\norm{x}_2}dx.\\
		=&\int_{\BB_{w}\left(\rho_w\right)\cap\BB_{-w}\left(\rho_w\right)}\frac{1}{V\left(\rho_w\right)}\frac{\sin(\phi_x)\left(w^*-w^\top w^*w\right)^\top x}{\norm{x}_2}dx\\
		&+\int_{\BB_{w}\left(\rho_w\right)\backslash\BB_{-w}\left(\rho_w\right)}\frac{1}{V\left(\rho_w\right)}\frac{\sin(\phi_x)\left(w^*-w^\top w^*w\right)^\top x}{\norm{x}_2}dx.
	\end{align}
	Let's calculate these two integrals separately. For the first integral,
	\begin{align*}
		&\int_{\BB_{w}\left(\rho_w\right)\cap\BB_{-w}\left(\rho_w\right)}\frac{1}{V\left(\rho_w\right)}\frac{\sin(\phi_x)\left(w^*-w^\top w^*w\right)^\top x}{\norm{x}_2}dx\\
		=&
		\int_{\BB_{w}\left(\rho_w\right)\cap\BB_{-w}\left(\rho_w\right),\left(w^*-w^\top w^*w\right)^\top x>0}\frac{1}{V\left(\rho_w\right)}\frac{\sin(\phi_x)\left(w^*-w^\top w^*w\right)^\top x}{\norm{x}_2}dx\\
		&+\int_{\BB_{w}\left(\rho_w\right)\cap\BB_{-w}\left(\rho_w\right),\left(w^*-w^\top w^*w\right)^\top x<0}\frac{1}{V\left(\rho_w\right)}\frac{\sin(\phi_x)\left(w^*-w^\top w^*w\right)^\top x}{\norm{x}_2}dx
	\end{align*}
	By the symmetric property with respect to the origin, we have
	\begin{align*}
		&\int_{\BB_{w}\left(\rho_w\right)\cap\BB_{-w}\left(\rho_w\right)}\frac{1}{V\left(\rho_w\right)}\frac{\sin(\phi_x)\left(w^*-w^\top w^*w\right)^\top x}{\norm{x}_2}dx=0.
	\end{align*}
	For the second integral, let's consider the the symmetric point of $x$ with respect to the vector $w$, i.e., $\tilde{x}=2w^\top xw-x.$ We have the following properties:
	$$\norm{\tilde{x}}_2=\norm{x}_2,~\norm{w+x}_2=\norm{w+\tilde{x}}_2,~\text{and }\left(w^*-w^\top w^*w\right)^\top \tilde{x}=-\left(w^*-w^\top w^*w\right)^\top x.$$
	We further have
	\begin{align*}
		\sin\phi_{\tilde{x}}=\sqrt{1-\cos^2\phi_{\tilde{x}}}=\sqrt{1-\left(\frac{(w^*)^\top\tilde{x}}{\norm{\tilde{x}}_2}\right)^2}
		&=\sqrt{1-\left(\frac{(2w^\top x(w^*)^\top w-(w^*)^\top x)}{\norm{{x}}_2}\right)^2}\\
		&=\sqrt{1-\left(\frac{(w^*)^\top x}{\norm{{x}}_2}\right)^2
			+\left(\frac{(4w^\top x(w^*)^\top w\left[(w^*)^\top x-w^\top x(w^*)^\top w\right]} {\norm{{x}}_2^2}\right)}.
	\end{align*}
	Since $x\in \BB_{w}\left(\rho_w\right)\backslash\BB_{-w}\left(\rho_w\right),$ we have $\norm{x+w}_2^2\geq \rho_w^2\geq \norm{x-w}_2^2,$ which implies  $w^\top x\geq 0.$
	Moreover, when $,\left(w^*-w^\top w^*w\right)^\top x>0,$ we have $(w^*)^\top x-w^\top x(w^*)^\top w\geq 0.$
	Together with $(w^*)^\top w\leq0,$ a we have $$\sin\phi_{\tilde{x}}\leq \sin\phi_{{x}}.$$ 
	Then the second integral can be estimated as follows.
	\begin{align*}
		&\int_{\BB_{w}\left(\rho_w\right)\backslash\BB_{-w}\left(\rho_w\right)}\frac{1}{V\left(\rho_w\right)}\frac{\sin(\phi_x)\left(w^*-w^\top w^*w\right)^\top x}{\norm{x}_2}dx\\
		=&\int_{\BB_{w}\left(\rho_w\right)\backslash\BB_{-w}\left(\rho_w\right),\left(w^*-w^\top w^*w\right)^\top x>0} \frac{1}{V\left(\rho_w\right)}\frac{\sin(\phi_x)\left(w^*-w^\top w^*w\right)^\top x}{\norm{x}_2}dx\\
		&+\int_{\BB_{w}\left(\rho_w\right)\backslash\BB_{-w}\left(\rho_w\right),\left(w^*-w^\top w^*w\right)^\top x<0} \frac{1}{V\left(\rho_w\right)}\frac{\sin(\phi_x)\left(w^*-w^\top w^*w\right)^\top x}{\norm{x}_2}dx\\
		=&\int_{\BB_{w}\left(\rho_w\right)\backslash\BB_{-w}\left(\rho_w\right),\left(w^*-w^\top w^*w\right)^\top x>0} \frac{1}{V\left(\rho_w\right)}\frac{\left(w^*-w^\top w^*w\right)^\top x}{\norm{x}_2}(\sin(\phi_x)-\sin(\phi_{\tilde{x}}))dx\\
		\geq& 0
	\end{align*}
	Thus, combining the above calculations for two integrals, \eqref{ineq2} is proved. Then with the fact that $a_t^\top a^*\geq m_a>0$ for all $t$, we have
	\begin{align*}
		\EE\left(1+{w}_{t+1}^\top w^*|\mathcal{F}_t\right)
		&\geq(1+w_{t}^\top w^*)\left(1+\frac{\eta}{2\pi}(1-w_t^\top w^*)a_t^\top a^* \left(\pi-\EE_\xi\phi_\xi\right)\right)\\
		&\geq(1+C_1\frac{\eta}{p})(1+w_{t}^\top w^*),
	\end{align*}
	for some positive constant $C_1$.
	
	Thus,
	\begin{align*}
		\EE\left(1+{w}_t^\top w^*\right)\geq(1+C_1\frac{\eta}{p})^t(1+w_{0}^\top w^*).
	\end{align*}
	When $t=\tilde{O}(\frac{p}{\eta}\log\frac{1}{\eta}),$ we have $\EE\left(1+{w}_t^\top w^*\right)\geq C_2$ for some constant $C_2\in(-1,0)$. Thus, with constant probability we have $1+{w}_t^\top w^*\geq C_2$. And We could have with at least probability $1-\delta,$ $1+w_{\tau_{12}}^\top w^*\geq C_2$ for some $\tau_{12}=\tilde{O}\left(\frac{p}{\eta}\log\frac{1}{\eta}\log \frac{1}{\delta}\right).$
\end{proof}

\subsubsection{Proof of Lemma \ref{stage-nice}}
\begin{proof}	
	Recall that we have $-1<C_4\leq w_0^\top w^*\leq 0$ and $m_a\leq a_t^\top a^*\leq M_a$ for all $t$. 
	
	Our proof has two steps.
	
	\textbf{Step 1:} We show that $w_t^\top w^*$ have a lower bound $\frac{C_4-1}{2}$with probability $1-\delta$ for $\forall t\leq \tilde{O}\left(\frac{1}{\eta^2}\right)$.
	
	Denote $\mathscr{E}_t=\{\forall\tau \leq t, w_\tau^\top w^*\geq \frac{C_4-1}{2}\}\subset\mathcal{F}_t$. Then if $\mathscr{E}_t$ holds, we have $(w_\tau,a_\tau)\in\mathcal{K}_{\left(C_4-1\right)/2,m_a,M_a}$ for $\forall\tau \leq t$. Recall that $\tilde{w}_{t+1}$ is defined as $$\tilde{w}_{t+1}=w_t-\left(I-w_tw_t^\top\right)\nabla_wL\left(w_t+\xi_t,a_t+\epsilon_t\right),$$ and $$w_{t+1}=\text{Proj}_{\SSS_0}\left(\tilde{w}_{t+1}\right).$$
	
	By Theorem \ref{thm_SC}, when $\left(w,a\right)\in\mathcal{K}_{\left(C_4-1\right)/2,m_a,M_a}$, we have
	\begin{align*}
		\langle-\mathbb{E}_{\xi,\epsilon}\left(I-ww^\top\right)\nabla_wL\left(w+\xi,a+\epsilon\right),w^*-w\rangle\geq \frac{m_a\left(1+C_4\right)}{32}\norm{w-w^*}_2^2-\gamma,
	\end{align*}
	where $\gamma=O\left(k/\rho_w\right)$.\\
	Moreover, we have the bound on expectation of the norm of the perturbed (manifold) gradient.
	\begin{align*}
		\mathbb{E}_{\xi,\epsilon}\norm{\left(I-ww^\top\right)\nabla_wL\left(w+\xi,a+\epsilon\right)}_2^2
		=&\frac{\mathbb{E}_\epsilon\left(\left(a+\epsilon\right)^\top a^*\right)^2 \mathbb{E}_{\xi}\left(\pi -\phi\right)^2}{2\pi^2} w^{*\top}\left(I-ww^\top\right)w^*\\
		+2&\EE_{\xi,\epsilon}\Big(\frac{\norm{a+\epsilon}_2^2}{2}+\frac{\sum_{i\neq j}\left(a_i+\epsilon_i\right)\left(a_j+\epsilon_j\right)}{2\pi}
		-\frac{\left(a+\epsilon\right)^\top a^* \sin\phi_\xi}{2\pi}\frac{1}{\norm{w+\xi}_2}\\
		&-\frac{\sum_{i\neq j}\left(a_i+\epsilon_i\right) a_j^*}{2\pi}\frac{1}{\norm{w+\xi}_2}\Big)^2
		\xi^\top\left(I-ww^T\right)\xi\\
		&\leq \frac{M_a^2 + \rho_a^2\norm{a^*}_2^2}{2}+C_1k^2\rho_w^2\rho_a^2,
	\end{align*}
	where $C_1$ is a constant.\\
	Combine the above two inequalities, we get 
	\begin{align*}
		\mathbb{E}[\norm{\tilde{w}_{t+1}-w^*}_2^2\mathds{1}_{\mathscr{E}_t}|\mathcal{F}_t]
		=&\norm{w_t-w^*}_2^2\mathds{1}_{\mathscr{E}_t} - 2\langle-\eta\mathbb{E}_{\xi_t,\epsilon_t}\nabla_wL\left(w_t+\xi_t,a_t+\epsilon_t\right),w^*-w_t\rangle \mathds{1}_{\mathscr{E}_t}\\
		&+\mathbb{E}_{\xi_t,\epsilon_t}\norm{\eta\left(I-w_tw_t^\top\right)\nabla_wL\left(w_t+\xi_t,a_t+\epsilon_t\right)}_2^2 \mathds{1}_{\mathscr{E}_t}\\
		\leq& \left(1-\frac{\eta m_a\left(1+C_4\right)}{16}\right)\norm{w_t-w^*}_2^2 \mathds{1}_{\mathscr{E}_t} 
		+\left(\eta\gamma+\frac{\eta^2\left(M_a^2 + \rho_a^2\norm{a^*}_2^2\right)}{2} +\eta^2C_1k^2\rho_w^2\rho_a^2\right)\mathds{1}_{\mathscr{E}_{t}}\\
		=&\left(1-\lambda_2\right)\norm{w_t-w^*}_2^2\mathds{1}_{\mathscr{E}_{t}} + b_2\mathds{1}_{\mathscr{E}_{t}},
	\end{align*}
	where $\lambda_2=O\left(\eta/p\right)$, $b_2=O\left(\eta k/\rho_w\right)$ and $\frac{b_2}{\lambda_2}\leq\min\{\frac{1+C_4}{2},\frac{1}{4}\}$ by proper choice of small $\eta$ and large $\rho_w$.
	
	We next show that 
	$\norm{w_{t+1}-w^*}_2^2 \leq \norm{\tilde{w}_{t+1}-w^*}_2^2$. We first have the following inequality.
	\begin{align*}
		\norm{\tilde{w}_{t+1}}_2^2=\norm{w_t}_2^2+\norm{\eta\left(I-w_tw_t^\top\right)\nabla_wL\left(w_t+\xi,a_t+\epsilon\right)}_2^2
		\geq 1.
	\end{align*}
	Since we have $w_{t+1}^\top w^*\leq 1$, we obtain
	\begin{align*}
		\norm{\tilde{w}_{t+1}-w^*}_2^2
		&= 1 + \norm{\tilde{w}_{t+1}}_2^2-2\norm{\tilde{w}_{t+1}}_2w_{t+1}^\top w^*\\
		&\geq 1 + 1-2w_{t+1}^\top w^*=\norm{w_{t+1}-w^*}_2^2.
	\end{align*}
	The above inequality comes from $a^2-2ab+1\geq2-2b\Leftrightarrow a+1\geq 2b$ for $a\geq 1$. Therefore, we have$$\mathbb{E}[\norm{w_{t+1}-w^*}_2^2\mathds{1}_{\mathscr{E}_t}|\mathcal{F}_t]
	\leq\left(1-\lambda_2\right)\norm{w_t-w^*}_2^2\mathds{1}_{\mathscr{E}_{t}} + b_2\mathds{1}_{\mathscr{E}_{t}}.$$
	
	Denote $G_t=\left(1-\lambda_2\right)^{-t}\left(\norm{w_t-w^*}_2^2-\frac{b_2}{\lambda_2}\right)$, the above recursive relation becomes	$$\mathbb{E}[G_{t+1}\mathds{1}_{\mathscr{E}_t}|\mathcal{F}_t]
	\leq G_t\mathds{1}_{\mathscr{E}_{t}}
	\leq G_t\mathds{1}_{\mathscr{E}_{t-1}}.$$
	Thus, $\{G_t\mathds{1}_{\mathscr{E}_{t}}\}$ is a supermartingale.
	
	We then have the following bound.
	\begin{align*}
		d_{t}
		&\triangleq	
		|G_{t}\mathds{1}_{\mathscr{E}_{t-1}}-\mathbb{E}[G_{t}\mathds{1}_{\mathscr{E}_{t-1}}|\mathcal{F}_{t-1}]|=\left(1-\lambda_2\right)^{-t}|\norm{w_t-w^*}_2^2-\mathbb{E}[\norm{w_t-w^*}_2^2|\mathcal{F}_{t-1}]|\\
		&\leq \left(1-\lambda_2\right)^{-t}\left(\eta\left(M_a+\rho_a\norm{a^*}_2\right) 
		+C_2\eta k\rho_w\right) \\
		&= \left(1-\lambda_2\right)^{-t}M_2,
	\end{align*}
	where $M_2=O\left(\eta k \rho_w\right)$.
	
	Denote $r_t=\sqrt{\sum_{i=0}^{t}d_i^2}$. By Azuma's Inequality,
	$$\mathbb{P}\left(G_t\mathds{1}_{\mathscr{E}_{t-1}}-G_0 \geq\tilde{O}\left(1\right)r_t\log^{\frac{1}{2}}\left(\frac{1}{\eta^2\delta}\right)\right) 
	\leq \exp\left(-\frac{\tilde{O}\left(1\right)r_t^2\log\left(\frac{1}{\eta^2\delta}\right)}{2\sum_{i=0}^{t}d_i^2}\right)
	=\tilde{O}\left(\eta^2\delta\right)$$
	
	Therefore, with at least probability $1-\tilde{O}\left(\eta^2\delta\right)$, we have
	\begin{align*}
		\norm{w_t-w^*}_2^2
		&\leq \left(1-\lambda_2\right)^t \left(\norm{w_0-w^*}_2^2-\frac{b_2}{\lambda_2}\right) 
		+ \tilde{O}\left(1\right)\left(1-\lambda_2\right)^t r_t\log^{\frac{1}{2}}\left(\frac{1}{\eta^2\delta}\right)
		+ \frac{b_2}{\lambda_2}\\
		&\leq \norm{w_0-w^*}_2^2  
		+ \tilde{O}\left(1\right)\frac{M}{\sqrt{\lambda_2}}\log^{\frac{1}{2}}\left(\frac{1}{\eta^2\delta}\right)
		+ \frac{b_2}{\lambda_2}\\
		&\leq 3-C_4,
	\end{align*}
	where the last line is true by our choice of small $\eta$.
	
	The above inequality shows that $w_{t}^\top w^*\geq\frac{C_4-1}{2}$ holds with at least probability $1-O\left(\eta^{2}\delta\right)$, which implies that $\mathscr{E}_{t}$ holds with at least probability $1-O\left(\eta^{2}\delta\right)$ when $\mathscr{E}_{t-1}$ holds. Hence, with at least probability $1-\delta$, we have $w_t^\top w^*\geq\frac{C_4-1}{2}$ for all $t\leq\tilde{O}\left(\frac{1}{\eta^2}\right)$.\\

	\textbf{Step 2:} We  show that if the result in Step 1 holds, there exists $\tau_{13}=\tilde{O}\left(\frac{p}{\eta}\log\frac{1}{\delta}\right)$ such that $\phi_{\tau_{13}}\leq \frac{\pi}{3}$ and $\phi_t$ stays in the region $\left\{\phi\Big|\phi\leq\frac{5\pi}{12}\right\}$ with probability $1-\delta$ during the later $\tilde{O}\left(\frac{1}{\eta^2}\right)$ steps.
	
	
	Following similar lines to Step 1, we have
	$$\mathbb{E}[G_{t+1}|\mathcal{F}_t]\leq G_t,$$
	where 
	$G_t=\left(1-\lambda_2\right)^{-t}\left(\norm{w_t-w^*}_2^2-\frac{b_2}{\lambda_2}\right)$. 
	
	Thus, recall that $\lambda_2=O(\eta/p)$, $\frac{b_2}{\lambda_2}\leq\frac{1}{4}$, and let $t=\tilde{O}\left(\frac{p}{\eta}\right)$, and we know that
	
	\begin{align*}
		\mathbb{E}[\norm{w_{t}-w^*}_2^2|\mathcal{F}_{t-1}]
		&\leq \left(1-\lambda_2\right)^{t}\norm{w_0-w^*}_2^2+\frac{b_2}{\lambda_2}
		\leq \frac{1}{2}.
	\end{align*}
	
	By Markov Inequality, we know 
	$$\mathbb{P}\left(\norm{w_{t}-w^*}_2^2\geq 1\right)\leq\frac{1}{2}.$$ We recursively apply the above inequality with $\log\frac{1}{\delta}$ times. Then, with at most $\tau_{13}=\tilde{O}\left(\frac{1}{\eta}\log\frac{1}{\delta}\right)$
	,we have 
	$$\mathbb{P}\left(\phi_{\tau_{13}}\leq\frac{\pi}{3}\right)
	=\mathbb{P}\left(\norm{w_{\tau_{13}}-w^*}_2^2\leq 1\right)\geq1-\delta.$$
	
	For notational simplify, we assume $\phi_0\leq\frac{\pi}{3}$ in the later proof. We will show that $\phi_t$ stays in the region $\left\{\phi\Big|\phi\leq\frac{5\pi}{12}\right\}$ with high probability during the later $\tilde{O}\left(\frac{1}{\eta^2}\right)$ steps.
	
	Denote $\mathscr{H}_t=\{\forall\tau\leq t,\phi_\tau\leq\frac{5\pi}{12}\}$. With the similar argument in Step 1, when $\mathscr{H}_{t-1}$ holds, with at least probability $1-\tilde{O}\left(\eta^2\delta\right)$, we have
	\begin{align*}
		\norm{w_t-w^*}_2^2
		&\leq \left(1-\lambda_2\right)^t\left(\norm{w_0-w^*}_2^2-\frac{b_2}{\lambda_2}\right) +\frac{b_2}{\lambda_2}
		+\tilde{O}\left(1\right)\left(1-\lambda_2\right)^t r_t\log^{\frac{1}{2}}\left(\frac{1}{\eta\delta}\right)\\
		&\leq \norm{w_0-w^*}_2^2 + \frac{b_2}{\lambda_2} + \tilde{O}\left(1\right)\frac{M}{\sqrt{\lambda_2}}\log^{\frac{1}{2}}\left(\frac{1}{\eta\delta}\right)\leq 1.4,
	\end{align*}
	which implies that $\phi_t\leq\frac{5\pi}{12}$, i.e., $\mathscr{H}_t$ holds.
	
	Hence, for all $t\leq T=\tilde{O}\left(\frac{1}{\eta^2}\right)$, we have $\phi_t\leq\frac{5\pi}{12}$ with at least probability $1-\delta$.
	
	Combining the above two steps, with probability $1-\delta$, we have $\phi_t\leq 5\pi/12$ for all $t$'s such that $\tau_{13}\leq t\leq T=\tilde{O}(\eta^{-2})$, where
	$\tau_{13}=\tilde{O}\Big(\frac{p}{\eta}\log\frac{1}{\delta}\Big).$ 
	
\end{proof}

\section{Proof for Phase II}\label{pf_phase2}
\subsection{Technical Lemma}
The next lemma shows that perturbed GD imitates the behavior of GD, when the noise is small enough. Thus, it can finally converge to the global optimum.
\begin{lemma} \label{lem_gphi_2}
	Denote $g\left(\phi\right)=\left(\pi-\phi\right)\cos\phi+\sin\phi$, $\xi\sim\mathrm{unif}\left(\mathbb{B}_0\left(1\right)\right)\in\mathbb{R}^d$ and $w\in\mathbb{R}^d$, $\norm{w}_2=1$.
	Define
	$$\phi_\xi=\arccos\dfrac{v^\top\left(w+\rho\xi\right)}{\norm{v}_2 \ \norm{w+\rho\xi}_2},\quad\phi=\arccos\dfrac{v^\top w}{\norm{v}_2 \ \norm{w}_2}.$$
	Suppose $\phi\leq\frac{\pi}{2}$ and $\rho< 1$.
	Then we have
	\begin{align*}
		\mathbb{E}_\xi\phi_\xi\leq U_\phi^{(1)}\left(\rho\right),
		\quad\mathbb{E}_\xi\left(\pi-g\left(\phi_\xi\right)\right)^2 \leq U_\phi^{(2)}\left(\rho\right),
		\quad\mathbb{E}_\xi g\left(\phi_\xi\right) \geq U_\phi^{(3)}\left(\rho\right).
	\end{align*}
	where $\lim\limits_{\rho\to0} U_\phi^{(1)}\left(\rho\right)=\phi$, 
	$\lim\limits_{\rho\to0} U_\phi^{(2)}\left(\rho\right)=\left(\pi-g\left(\phi\right)\right)^2$, 
	$\lim\limits_{\rho\to0} U_\phi^{(3)}\left(\rho\right)=g\left(\phi\right)$ and 
	$U_\phi^{(1)}\left(\rho\right)$, $U_\phi^{(2)}\left(\rho\right)$ is non-decreasing,
	$U_\phi^{(3)}\left(\rho\right)$ is non-increasing.
	Moreover, we have
	$$|\mathbb{E}_\xi\phi_\xi-\phi|= O\left(\rho\right),\quad |\mathbb{E}_\xi\left(\pi-g\left(\phi_\xi\right)\right)^2-\left(\pi-g\left(\phi\right)\right)^2|=O\left(\rho\right),\quad
	|\mathbb{E}_\xi g\left(\phi_\xi\right)-g\left(\phi\right)|=O\left(\rho\right).$$
\end{lemma}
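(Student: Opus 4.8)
This is a continuity‑plus‑monotonicity statement, and I would prove it by first isolating one clean \emph{pointwise} estimate and then packaging it. First I would normalize: the two angles are unchanged under scaling of $v$, so assume $\norm{v}_2=1$; by rotational invariance of $\mathrm{unif}(\mathbb{B}_0(1))$ I may further fix coordinates with $w=e_1$ and $v=\cos\phi\,e_1+\sin\phi\,e_2$. Since $\rho<1$ and $\norm{\xi}_2\le 1$, we have $\norm{w+\rho\xi}_2\ge 1-\rho>0$, so $\phi_\xi\in[0,\pi]$ is well defined for every $\xi$ and depends continuously on both $\xi$ and $\rho$.

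The key step is the uniform pointwise bound $|\phi_\xi-\phi|\le \tfrac{\pi\rho}{2(1-\rho)}$ for all $\xi\in\mathbb{B}_0(1)$. A naive Taylor expansion of $\arccos$ is dangerous here because of the square‑root singularity of $\arccos$ at $\pm1$; instead I would use that the angle between nonzero vectors is a metric, so by the triangle inequality $|\phi_\xi-\phi|=|\angle(v,w+\rho\xi)-\angle(v,w)|\le\angle(w,w+\rho\xi)$. To bound $\psi:=\angle(w,w+\rho\xi)$: we have $\langle w,w+\rho\xi\rangle=1+\rho\langle w,\xi\rangle\ge 1-\rho>0$, so $\psi<\pi/2$; and the component of $\rho\xi$ orthogonal to $w$ has norm $\rho\norm{\xi-\langle\xi,w\rangle w}_2\le\rho\norm{\xi}_2\le\rho$, whence $\sin\psi=\rho\norm{\xi-\langle\xi,w\rangle w}_2/\norm{w+\rho\xi}_2\le\rho/(1-\rho)$; finally $\psi\le\tfrac{\pi}{2}\sin\psi$ on $[0,\pi/2]$ closes it. This is the only genuinely delicate point of the argument.

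Everything else is bookkeeping on top of this. Since $g'(\phi)=-(\pi-\phi)\sin\phi$, the function $g$ is $\pi$‑Lipschitz on $[0,\pi]$ and, being nonincreasing with $g(0)=\pi$, satisfies $0\le\pi-g\le 2\pi$; hence pointwise $|g(\phi_\xi)-g(\phi)|\le\pi|\phi_\xi-\phi|$ and $|(\pi-g(\phi_\xi))^2-(\pi-g(\phi))^2|\le 4\pi^2|\phi_\xi-\phi|$. Taking expectations and inserting the pointwise bound shows that $\mathbb{E}_\xi\phi_\xi$, $\mathbb{E}_\xi g(\phi_\xi)$, $\mathbb{E}_\xi(\pi-g(\phi_\xi))^2$ all lie within $O(\rho)$ of $\phi$, $g(\phi)$, $(\pi-g(\phi))^2$ respectively — exactly the ``Moreover'' claims. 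To obtain \emph{monotone} envelopes I would simply set $U^{(1)}_\phi(\rho):=\sup_{0\le s\le\rho}\mathbb{E}_\xi\phi_{s\xi}$, $U^{(2)}_\phi(\rho):=\sup_{0\le s\le\rho}\mathbb{E}_\xi(\pi-g(\phi_{s\xi}))^2$, and $U^{(3)}_\phi(\rho):=\inf_{0\le s\le\rho}\mathbb{E}_\xi g(\phi_{s\xi})$. Each $s\mapsto\mathbb{E}_\xi(\cdot)$ is continuous on the compact interval $[0,\rho]$ (dominated convergence, using $\norm{w+s\xi}_2\ge 1-s$ and boundedness of the integrand), so these are finite; they are monotone in $\rho$ by construction; evaluating at $s=\rho$ gives the required one‑sided bounds on $\mathbb{E}_\xi\phi_\xi$, $\mathbb{E}_\xi g(\phi_\xi)$, $\mathbb{E}_\xi(\pi-g(\phi_\xi))^2$; and the uniform $O(s)$ closeness above forces $U^{(1)}_\phi(\rho)\to\phi$, $U^{(2)}_\phi(\rho)\to(\pi-g(\phi))^2$, $U^{(3)}_\phi(\rho)\to g(\phi)$ as $\rho\to0$.

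\textbf{Main obstacle and a remark.} The hard part, as noted, is the uniform pointwise $O(\rho)$ control of $\phi_\xi-\phi$; routing it through the angle‑metric triangle inequality is what makes the proof short. I would also point out that the hypothesis $\phi\le\pi/2$ is not actually needed by this soft argument — it is inherited from the Phase II setting — although it is precisely what one would need if one wanted instead the more explicit two‑dimensional reduction (noting that $\phi_\xi$ depends on $\xi$ only through its projection onto $\mathrm{span}(v,w)$ and $\norm{\xi^\perp}_2$) to produce closed‑form, manifestly monotone expressions for the $U^{(i)}_\phi$.
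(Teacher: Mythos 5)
Your proposal is correct, but it reaches the lemma by a genuinely different route than the paper. The paper's proof fixes $v=e_1$ and computes the \emph{exact} extreme values of $\phi_\xi$ over the perturbation ball: the maximal angle is attained where the ray through $w+\rho\xi$ is tangent to $\mathbb{B}_w(\rho)$, and a small Lagrange-multiplier computation gives the pointwise sandwich $\arccos\bigl(\cos\phi\sqrt{1-\rho^2}+\rho\sin\phi\bigr)\le\phi_\xi\le\arccos\bigl(\cos\phi\sqrt{1-\rho^2}-\rho\sin\phi\bigr)$, i.e.\ essentially $|\phi_\xi-\phi|\le\arcsin\rho$; the envelopes are then taken in closed form, $U^{(1)}_\phi(\rho)=\arccos\bigl(\cos\phi\sqrt{1-\rho^2}-\rho\sin\phi\bigr)$, $U^{(2)}_\phi=(\pi-g(U^{(1)}_\phi))^2$, $U^{(3)}_\phi=g(U^{(1)}_\phi)$, with monotonicity in $\rho$, the $\rho\to0$ limits, and the $O(\rho)$ statements read off directly from these formulas (this is where the hypothesis $\phi\le\pi/2$ enters, to keep the extremal angle representation valid). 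You instead prove the uniform pointwise bound $|\phi_\xi-\phi|\le\angle(w,w+\rho\xi)\le\frac{\pi}{2}\cdot\frac{\rho}{1-\rho}$ via the triangle inequality for the spherical metric plus $\sin\psi\le\rho/(1-\rho)$, push it through the $\pi$-Lipschitz, monotone $g$ (note $0\le\pi-g\le\pi$, so your constant $4\pi^2$ can be $2\pi^2$, immaterial), and then manufacture the monotone envelopes abstractly as running sup/inf of the expectations over noise levels $s\in[0,\rho]$. This is softer and shorter, avoids the constrained optimization, and, as you observe, does not need $\phi\le\pi/2$; the price is non-explicit envelopes and slightly worse constants (also note your $\rho/(1-\rho)$ bound degrades as $\rho\to1$, though the trivial bound $|\phi_\xi-\phi|\le\pi$ keeps the $O(\rho)$ claim valid there). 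Since the lemma only asserts existence of envelopes with the stated monotonicity, limits, and one-sided bounds, and the downstream uses (Theorem \ref{thm_ASC}, Lemma \ref{lem_keep}) rely only on those properties and the $O(\rho)$ approximation, your construction fully suffices.
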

\begin{proof}
	Without loss of generality, let $v=\left(1,0,...,0\right)^\top$. Then since $\phi\leq\frac{\pi}{2}$, we have $w_1\geq 0$.
	
	We  find the upper bound of $\phi_\xi=\arccos\frac{w_1+\rho\xi_1}{\norm{w+\rho\xi}_2}$, when $\rho$ and $w$ are fixed. $\phi_\xi$ could be explained as the angle between $X$ and $v$, where $X=w+\rho\xi \in \mathbb{B}_w\left(\rho\right)$. Thus, $\phi_\xi$ achieves the maximum when $X$ is tangent to $\mathbb{B}_w\left(\rho\right)$. This means that $\left(w+\rho\xi\right)^\top\rho\xi=0$ and $\norm{\xi}_2=1$, which is equivalent to $w^\top\xi+\rho = 0$ and $\norm{\xi}_2=1$. This leads to $\norm{w+\rho\xi}_2=\sqrt{1-\rho^2}$. Therefore, to get the upper bound of $\phi_\xi$, we need the lower bound of $\xi_1$. This is formulated as following,
	\begin{align*}
		\min {\xi_1} ~\textrm{s.t.}~\sum_i w_i\xi_i+\rho=0,\ \sum_i w_i^2=1,\ \sum_i \xi^2_i=1.
	\end{align*}
	
	By the Lagrange multiplier method, we have $\xi_1^*=-\sqrt{\left(1-\rho^2\right)\left(1-w_1^2\right)}-\rho w_1$. Thus, $$\phi_\xi\leq \arccos\left(w_1\sqrt{1-\rho^2}-\rho\sqrt{1-w_1^2}\right)\triangleq U_\phi^{(1)}\left(\rho\right).$$
	
	Moreover, with the same argument above, we have $$\phi_\xi\geq \arccos\left(w_1\sqrt{1-\rho^2}+\rho\sqrt{1-w_1^2}\right).$$
	Therefore, we have $|\mathbb{E}_\xi\phi_\xi-\phi|=|\mathbb{E}_\xi\phi_\xi-\arccos w_1|\leq C_1\rho$.
	
	Since $g\left(\phi\right)$ is decreasing, $\left(\pi-g\left(\phi\right)\right)^2$ is increasing and both of them are Lipschitz continuous, we have
	\begin{align*}
		&\mathbb{E}_\xi\left(\pi-g\left(\phi_\xi\right)\right)^2 
		\leq\left(\pi-g\left(U_\phi^{(1)}\left(\rho\right)\right)\right)^2\triangleq U_\phi^{(2)}\left(\rho\right),\quad
		\mathbb{E}_\xi g\left(\phi_\xi\right)
		\geq g\left(U_\phi^{(1)}\left(\rho\right)\right)\triangleq U_\phi^{(3)}\left(\rho\right),\\
		&|\mathbb{E}_\xi\left(\pi-g\left(\phi_\xi\right)\right)^2-\left(\pi-g\left(\phi\right)\right)^2|\leq C_2\rho,\quad
		|\mathbb{E}_\xi g\left(\phi_\xi\right) - g\left(\phi\right)| \leq C_3\rho.
	\end{align*}
	By simple manipulation, one can easily verify that 
	$\lim\limits_{\rho\rightarrow0} U_\phi^1\left(\rho\right)=\phi$, $\lim\limits_{\rho\rightarrow0} U_\phi^2\left(\rho\right)=\left(\pi-g\left(\phi\right)\right)^2$, $\lim\limits_{\rho\rightarrow0} U_\phi^3\left(\rho\right)=g\left(\phi\right)$ 
	and ${U^{(1)}_\phi}^\prime\left(\rho\right)\geq 0$, ${U^{(2)}_\phi}^\prime\left(\rho\right)\geq 0$, ${U^{(3)}_\phi}^\prime\left(\rho\right)\leq 0$.
\end{proof}

\subsection{Proof for Theorem \ref{thm_ASC}}\label{pf_ASC}
\begin{proof}
	When $\rho_w < 1$, we have
	\begin{align*}
		\EE_{\xi}\frac{\sin\phi_\xi\left(w^*-w^\top w^*w\right)^\top \xi}{\norm{w+\xi}_2}
		\leq C_1\sqrt{1-(w^\top w^*)^2}\rho_w=O(\rho_w)
	\end{align*}
	Taking $\rho_w=O\left(\frac{\gamma}{kp}\right)$ to be small enough and combining \eqref{eq1}, we have
	\begin{align*}
		\langle-\mathbb{E}_{\xi,\epsilon}\left(I-ww^\top\right)\nabla_wL\left(w+\xi,a+\epsilon\right),w^*-w\rangle
		&\geq\frac{a^\top a^* \left(\pi -\mathbb{E}_\xi\phi_\xi\right)}{2\pi}\left(1-(w^\top w^*)^2\right)-\gamma\\
		&=\frac{a^\top a^* \left(\pi -\mathbb{E}_\xi\phi_\xi\right)}{4\pi}\left(1+w^\top w^*\right)\norm{w-w^*}_2^2-\gamma\\
		&\geq \frac{m\left(1+C_9\right)}{16}\norm{w-w^*}_2^2-\gamma.\\
	\end{align*}
	Given small enough $\rho_w$, by Lemma \ref{lem_gphi_2} and $\norm{w-w^*}_2^2\leq C_{10}\gamma$, we have
	\begin{align*}
		\langle-\mathbb{E}_{\xi,\epsilon}\nabla_aL\left(w+\xi,a+\epsilon\right),a^*-a\rangle
		&= \frac{1}{2\pi}\left(\one^\top a- \one^\top a^*\right)^2+\frac{1}{2\pi}\left(\left(\pi-1\right)a-\left(\mathbb{E}_\xi g\left(\phi_\xi\right)-1\right)a^*\right)^\top \left(a-a^*\right)\\
		&=\frac{1}{2\pi}\left(\one^\top a- \one^\top a^*\right)^2
		+\frac{1}{2\pi}\left(\pi- \mathbb{E}_\xi g\left(\phi_\xi\right)\right)a^{*\top} \left(a-a^*\right)+\frac{\pi-1}{2\pi}\norm{a-a^*}_2^2\\
		&\geq \frac{\pi-1}{2\pi}\norm{a-a^*}_2^2 -\gamma.
	\end{align*}
\end{proof}

\subsection{Detailed Proof of Theorem \ref{stage2}}

\subsubsection{Proof of Lemma \ref{lem_keep}}
\begin{proof}
	Since $\phi_0\leq\frac{5\pi}{12}$, we have $g\left(\phi_0\right)>1.4$. By Lemma \ref{lem_gphi_2}, conditions $\EE_\xi\phi_\xi\leq\frac{3\pi}{4}$ and $1+O\left(\frac{1}{p}\right)\leq \EE_\xi g\left(\phi_\xi\right)\leq\pi$ are satisfied with our choice of small noise level $\rho_w$. Recall that $\one^\top a_0 \one^\top a^*- \left(\one^\top a^*\right)^2$ is bounded. Then, using Lemma \ref{lem_sum_a}, we have $\one^\top a_t \one^\top a^*- \left(\one^\top a^*\right)^2$ is still bounded in the same order for $\forall t\leq \tilde{O}\left(\eta^{-2}\right)$ with probability $1-\delta/3$. Combined with Lemma \ref{lem_inner_a}, with probability $1-\delta/3$ we have $m_a^\prime\leq a_t^\top a^*\leq M_a^\prime$ in the following $\tilde{O}\left(\eta^{-2}\right)$ steps, where $m_a^\prime=m_a/2$, $M_a^\prime=3M_a$. Then, following the same arguments in Step 2 of the proof of Lemma \ref{stage1_w}, we have $\phi_t\leq \frac{1}{2}\left(\frac{\pi}{2}+\frac{5\pi}{12}\right)=\frac{11\pi}{24}$ in the following $\tilde{O}\left(\eta^{-2}\right)$ steps with probability $1-\delta/3$. Therefore, combining the above results together, we have the desired results. 
\end{proof}

\subsubsection{Proof of Lemma \ref{stage2_w}}
\begin{proof}
	Recall that we have $\phi_t\leq\frac{11\pi}{24}$ and $m_a^\prime \leq a_t^\top a^*\leq M_a^\prime$ for all $t$. This implies that $w_t^\top w^*\geq0.1$. Thus, we have $\left(w_t,a_t\right)\in \cK_{0.1,m_a^\prime,M_a^\prime}$ for all $t$.
	
	The following two steps proof is similar with Lemma \ref{stage-nice}.
	
	\textbf{Step 1:} We  show that there exists $\tau_{21}=\tilde{O}\left(\frac{1}{\eta}\log\frac{1}{\gamma}\log\frac{1}{\delta}\right)$ such that $\norm{w_{\tau_{21}}-w^*}_2^2\leq\gamma/2$ holds with at least probability $1-\delta$.
	
	By Theorem \ref{thm_ASC}, for $(w,a)\in\cK_{0.1,m_a^\prime,M_a^\prime}$, we have
	\begin{align*}
		\langle-\mathbb{E}_{\xi,\epsilon}\left(I-ww^\top\right)\nabla_wL\left(w+\xi,a+\epsilon\right),w^*-w\rangle
		\geq \frac{11m_a}{160}\norm{w-w^*}_2^2-M_a\rho_w,
	\end{align*}
	where $M_a\rho_w=O(\gamma/p)$ is a small constant by our choice of $\rho_w$.
	
	Also, we have the bound on the expectation of the norm of the perturbed (manifold) gradient.
	\begin{align*}
		\mathbb{E}_{\xi,\epsilon}\norm{\left(I-ww^\top\right)\nabla_wL\left(w+\xi,a+\epsilon\right)}_2^2
		&\leq \frac{M_a^2 + \rho_a^2\norm{a^*}_2^2}{2} + C_1 k^2\rho_w^2,
	\end{align*}
	for some constant $C_1$.
	
	Thus, denote $\lambda_3=\frac{11m_a\eta}{160}$ and $b_3=\frac{11m_a}{1280}\eta\gamma$. We have
	\begin{align*}
		\mathbb{E}[\norm{\tilde{w}_{t+1}-w^*}_2^2|\mathcal{F}_t]
		&=\norm{w_t-w^*}_2^2 -2\langle-\eta\mathbb{E}_{\xi_t,\epsilon_t}\left(I-w_tw_t^\top\right)\nabla_wL\left(w_t+\xi_t,a_t+\epsilon_t\right),w^*-w_t\rangle\\
		& +\mathbb{E}_{\xi_t,\epsilon_t}\norm{\eta\left(I-w_tw_t^\top\right)\nabla_wL\left(w_t+\xi_t,a_t+\epsilon_t\right)}_2^2\\
		&\leq \left(1-\frac{11m_a\eta}{160}\right)\norm{w_t-w^*}_2^2 + \eta M_a\rho_w
		+\frac{\eta^2\left(M_a^2 + \rho_a^2\norm{a^*}_2^2\right)}{2} + C_1\eta^2k^2\rho_w^2 \\
		&\leq\left(1-\lambda_3\right)\norm{w_t-w^*}_2^2 + b_3,
	\end{align*}
	where the last line is due to our choice of small parameters $\rho_a$, $\rho_w$ and $\eta$.
	
	With same argument in the proof of Lemma \ref{stage-nice}, and we have
	\begin{align*}
		\norm{\tilde{w}_{t+1}-w^*}_2^2\geq\norm{w_{t+1}-w^*}_2^2.
	\end{align*}
	
	Hence, denote $\mathscr{E}_t=\{\forall\tau\leq t,\norm{w_\tau-w^*}_2^2\geq\frac{\gamma}{2}\}$, with our choice of $\eta$ and $t=\tilde{O}\left(\frac{p}{\eta}\log\frac{1}{\gamma}\right)$, we have
	\begin{align*}
		\frac{\gamma}{2}\mathbb{P}\left(\mathscr{E}_t\right)
		\leq\mathbb{E}\norm{w_t-w^*}_2^2
		&=\left(1-\lambda_3\right) \mathbb{E}\norm{w_{t-1}-w^*}_2^2 + b_3\\
		&\leq \left(1-\lambda_3\right)^t \norm{w_0-w^*}_2^2 + \frac{b_3}{\lambda_3}
		\leq \frac{1}{4}\gamma.
	\end{align*}
	Thus, we have $\mathbb{P}\left(\mathscr{E}_t\right)\leq 0.5$ and recursively apply the above lines for $\log\frac{1}{\delta}$ times, we know there exists $\tau_{21}=\tilde{O}\left(\frac{p}{\eta}\log\frac{1}{\gamma}\log\frac{1}{\delta}\right)$ such that $\norm{w_{\tau_{21}}-w^*}_2^2\leq\frac{\gamma}{2}$ with at least probability $1-\delta$.

	\textbf{Step 2:} We show that if $\norm{w_0-w^*}_2^2\leq\frac{\gamma}{2}$, $w_t$ stays in the region $\left\{w\Big|\norm{w-w^*}_2^2 \leq \gamma\right\}$ in the following $\tilde{O}\left(\frac{1}{\eta^2}\right)$ steps with at least probability $1-\delta$. 
	
	Denote $G_t=\left(1-\lambda_3\right)^{-t}\left(\norm{w_t-w^*}_2^2-\frac{b_3}{\lambda_3}\right)$ and $\mathscr{H}_t=\{\forall\tau\leq t,\norm{w_\tau-w^*}_2^2\leq\gamma\}\subset\mathcal{F}_t$. From Step 1, we have
	$$\mathbb{E}[G_{t+1}\mathds{1}_{\mathscr{H}_t}|\mathcal{F}_t]
	\leq G_t\mathds{1}_{\mathscr{H}_{t}}
	\leq G_t\mathds{1}_{\mathscr{H}_{t-1}}.$$

	Thus, $\{G_t\mathds{1}_{\mathscr{H}_{t}}\}$ is a supermartingale.
	
	To apply Azuma's Inequality, we first have to bound the difference between $G_{t+1}\mathds{1}_{\mathscr{H}_{t}}$ and $\mathbb{E}[G_{t+1}\mathds{1}_{\mathscr{H}_{t}}|\mathcal{F}_t]$.
	\begin{align*}
		d_{t+1}
		&\triangleq	
		|G_{t+1}\mathds{1}_{\mathscr{H}_{t}}-\mathbb{E}[G_{t+1}\mathds{1}_{\mathscr{H}_{t}}|\mathcal{F}_t]|
		\leq\left(1-\lambda_3\right)^{-t-1}|\norm{w_{t+1}-w^*}_2^2-\mathbb{E}[\norm{w_{t+1}-w^*}_2^2|\mathcal{F}_t]|\\
		&\leq \left(1-\lambda_3\right)^{-t-1}C_2\eta\gamma^{\frac{1}{2}}k = \left(1-\lambda_3\right)^{-t-1}M_3,
	\end{align*}
	where $\lambda_3=\tilde{O}\left(\eta/p\right)$, $M_3=\tilde{O}\left(\eta\gamma^{\frac{1}{2}}k\right)$.
	
	Denote $r_t=\sqrt{\sum_{i=0}^{t}d_i^2}$. By Azuma's Inequality, we get
	$$\mathbb{P}\left(G_t\mathds{1}_{\mathscr{H}_{t-1}}-G_0 \geq\tilde{O}\left(1\right)r_t\log^{\frac{1}{2}}\left(\frac{1}{\eta^2\delta}\right)\right)
	\leq \exp\left(-\frac{\tilde{O}\left(1\right)r_t^2\log\left(\frac{1}{\eta^2\delta}\right)}{2\sum_{i=0}^{t}d_i^2}\right)
	=\tilde{O}\left(\eta^2\delta\right).$$
	
	Therefore, with at least probability $1-\tilde{O}\left(\eta^2\delta\right)$, we have
	\begin{align*}
		\norm{w_t-w^*}_2^2
		&\leq \left(1-\lambda_3\right)^t\left(\norm{w_0-w^*}_2^2-\frac{b_3}{\lambda_3}\right) 
		+ \tilde{O}\left(1\right)\left(1-\lambda_3\right)^t r_t\log^{\frac{1}{2}}\left(\frac{1}{\eta^2\delta}\right)
		+ \frac{b_3}{\lambda_3}\\
		&\leq \norm{w_0-w^*}_2^2  
		+ \tilde{O}\left(1\right)\frac{M_3}{\sqrt{\lambda_3}}\log^{\frac{1}{2}}\left(\frac{1}{\eta^2\delta}\right)
		+ \frac{b_3}{\lambda_3}\leq \gamma,
	\end{align*}
	where the last line holds, since we can always find $\eta\leq\eta_{\max}=\tilde{O}\left(\frac{\gamma}{k^2p}\right)$ to satisfy the condition.
	
	The above inequality shows that if $\mathscr{E}_t$ holds, then $\mathscr{E}_{t+1}$ holds with at least probability $1-\tilde{O}\left(\eta^2\delta\right)$. Hence, with at least probability $1-\delta$, we have $\norm{w_t-w^*}_2^2\leq\gamma$ for all $t\leq T=\tilde{O}\left(\frac{1}{\eta^2}\right)$.
	
	Combining the above two steps, with probability $1-\delta$, we have $\norm{w_t-w^*}_2^2\leq C_{12}\gamma$ for all t's such that $\tau_{21}\leq t\leq\tilde{O}(\eta^{-2})$, where
	$C_{12}$ is a constant and $\tau_{21}=\tilde{O}\Big(\frac{p}{\eta}\log\frac{1}{\gamma}\log
	\frac{1}{\delta}\Big).$
\end{proof}

\subsubsection{Proof of Lemma \ref{stage2_a}}
\begin{proof}
	Our proof has two steps.\\
	\textbf{Step 1:} We  show that with probability at least $1-\delta$, there exists $\tau_{22}=\tilde{O}\left(\frac{1}{\eta}\log\frac{1}{\gamma}\log\frac{1}{\delta}\right)$ such that $\norm{a_{\tau_{22}}-a^*}_2^2\leq\gamma/2$.

	Recall that $(w_t,a_t)\in\cR_{m_a^\prime,M_a^\prime,C_{12}}$ holds for all t. Then by Theorem \ref{thm_ASC}, we have
	\begin{align*}
		\langle-\mathbb{E}_{\xi,\epsilon}\nabla_aL\left(w+\xi,a+\epsilon\right),a^*-a\rangle
		\geq \frac{\pi-1}{2\pi}\norm{a-a^*}_2^2 -\gamma.
	\end{align*}
	Following similar lines to the proof of Lemma \ref{stage1_a}, we have the bound on the expectation of the norm of the perturbed gradient.
	\begin{align*}
		\mathbb{E}_{\xi,\epsilon}\norm{\nabla_aL\left(w+\xi,a+\epsilon\right)}_2^2
		&=\mathbb{E}_{\xi,\epsilon} 
		\norm{\nabla_aL\left(w+\xi,a+\epsilon\right) - \nabla_aL\left(w^*,a^*\right)}_2^2\\
		&=\mathbb{E}_{\xi,\epsilon} \norm{\frac{1}{2\pi}\left(\mathds{1}\one^\top+\left(\pi-1\right)I\right)\left(a+\epsilon-a^*\right)
			-\frac{g\left(\phi_\xi\right)-\pi}{2\pi}a^*}_2^2\\
		&\leq \frac{1}{2\pi^2} \mathbb{E}_{\xi,\epsilon} \norm{\left(\mathds{1}\one^\top+\left(\pi-1\right)I\right)\left(a+\epsilon-a^*\right)}_2^2 
		+\gamma\leq \frac{\left(k+\pi-1\right)^2}{\pi^2}\left(\norm{a-a^*}_2^2+\rho_a^2\right) + \gamma.
	\end{align*}
	
	Combined the above two, with $\eta_{\max}=\tilde{O}\left(\gamma/k^2p\right)$ and $\eta\leq \eta_{\max}$, we have
	\begin{align*}
		\mathbb{E}[\norm{a_{t+1}-a^*}_2^2|\mathcal{F}_t]
		&=\norm{a_t-a^*}_2^2
		-2\langle-\eta\mathbb{E}_{\xi_t,\epsilon_t}\nabla_aL\left(w_t+\xi_t,a_t+\epsilon_t\right),a^*-a_t\rangle
		+\mathbb{E}_{\xi_t,\epsilon_t}\norm{\eta \nabla_aL\left(w_t+\xi_t,a_t+\epsilon_t\right)}_2^2\\
		&\leq \left(1-\frac{\left(\pi-1\right)\eta}{\pi}+\eta^2\frac{\left(k+\pi-1\right)^2}{\pi^2}\right)\norm{a_t-a^*}_2^2
		+2\eta\gamma+\eta^2\gamma + \frac{\left(k+\pi-1\right)^2}{\pi^2}\eta^2\rho_a^2\\
		&\leq \left(1-\frac{\left(\pi-1\right)\eta}{\pi}+\eta^2\frac{\left(k+\pi-1\right)^2}{\pi^2}\right)\norm{a_t-a^*}_2^2
		+3\eta\gamma
	\end{align*}
	
	Thus, when $\eta\leq\frac{\pi^2}{25\left(k+\pi-1\right)^2}$, we have
	\begin{align*}
		\mathbb{E}[\norm{a_{t+1}-a^*}_2^2-5\gamma|\mathcal{F}_t] 
		&\leq  \left(1-\frac{\left(\pi-1\right)\eta}{\pi}+\eta^2\frac{\left(k+\pi-1\right)^2}{\pi^2}\right)\left(\norm{a_t-a^*}_2^2-5\gamma\right)
		-0.2\eta\gamma+5\frac{\left(k+\pi-1\right)^2}{\pi^2}\eta^2\gamma\\
		&\leq  \left(1-\frac{\left(\pi-1\right)\eta}{\pi}+\eta^2\frac{\left(k+\pi-1\right)^2}{\pi^2}\right)\left(\norm{a_t-a^*}_2^2-5\gamma\right)\\
		&=  \left(1-\lambda_4\right)\left(\norm{a_t-a^*}_2^2-5\gamma\right).
	\end{align*}
	
	Denote
	$\mathscr{E}_t=\{\forall\tau\leq t, \norm{a_\tau-a^*}_2^2\geq12\gamma\}$. When $t\geq\frac{\log\left(\frac{\norm{a_0-a^*}_2^2}{\gamma}\right)}{\lambda_4}
	=\tilde{O}\left(\frac{1}{\eta}\log\frac{1}{\gamma}\right)$, we have
	\begin{align*}
		12\gamma\mathbb{P}\left(\mathscr{E}_t\right)	
		&\leq \mathbb{E} \norm{a_t-a^*}_2^2
		=\left(1-\lambda_4\right)^t(\norm{a_0-a^*}_2^2-5\gamma) + 5\gamma
		\leq \left(1-\lambda_4\right)^t\norm{a_0-a^*}_2^2+5\gamma\leq 6\gamma.
	\end{align*}
	
	Therefore, $\mathbb{P}\left(\mathscr{E}_t\right)\leq 0.5$. Recursively applying the above lines with $\log\frac{1}{\delta}$ times, we know that with at least probability $1-\delta$, there exists $\tau_{22}=\tilde{O}\left(\frac{1}{\eta}\log\frac{1}{\gamma}\log\frac{1}{\delta}\right)$ such that $\norm{a_{\tau_{22}}-a^*}_2^2\leq12\gamma$. Rescaling $\gamma$, we get the desired result.

	\textbf{Step 2:} We show that, if $\norm{a_0-a^*}_2^2\leq\gamma/2$, then $a_t$ stays in the region $\left\{a\Big|\norm{a-a^*}_2^2\leq\gamma\right\}$ in the next $\tilde{O}\left(\frac{1}{\eta^2}\right)$ steps with probability at least $1-\delta$.
	
	Denote $G_t=\left(1-\lambda_4\right)^{-t}\left(\norm{a_t-a^*}_2^2-5\gamma\right)$ and $\mathscr{H}_t=\{\forall\tau\leq t, \norm{a_t-a^*}_2^2\leq6\gamma\}$. With the same argument in Step 1, we have
	$$\mathbb{E}[G_{t+1}\mathds{1}_{\mathscr{H}_t}|\mathcal{F}_t]\leq G_t\mathds{1}_{\mathscr{H}_t}\leq G_t\mathds{1}_{\mathscr{H}_{t-1}}.$$
	Thus, $\{G_t\mathds{1}_{\mathscr{H}_{t}}\}$ is a supermartingale.
	
	To use Azuma's Inequality, we first have to bound the difference between $G_{t+1}\mathds{1}_{\mathscr{H}_{t}}$ and $\mathbb{E}[G_{t+1}\mathds{1}_{\mathscr{H}_{t}}|\mathcal{F}_t]$.
	\begin{align*}
		d_{t+1}
		&\triangleq	
		|G_{t+1}\mathds{1}_{\mathscr{H}_{t}}-\mathbb{E}[G_{t+1}\mathds{1}_{\mathscr{H}_{t}}|\mathcal{F}_t]|\\
		&= \left(1-\lambda_4\right)^{-t-1} \left|\norm{a_{t+1}-a^*}_2^2-\mathbb{E}[\norm{a_{t+1}-a^*}_2^2|\mathcal{F}_t]\right| \mathds{1}_{\mathscr{H}_{t}}\\
		&\leq \left(1-\lambda_4\right)^{-t-1} (C_1\eta\gamma^{\frac{1}{2}}k + C_2\eta^2k^2)
		= \left(1-\lambda_4\right)^{-t-1}M_4,
	\end{align*}
	for some positive constant $C_1$ and $C_2$, where $\lambda_4=\tilde{O}\left(\eta\right)$, $M_4=\tilde{O}\left(\eta\gamma^{\frac{1}{2}}k\right)$.
	
	Denote $r_t=\sqrt{\sum_{i=0}^{t}d_i^2}$. By Azuma's Inequality,
	$$\mathbb{P}\left(G_t\mathds{1}_{\mathscr{H}_{t-1}}-G_0 \geq\tilde{O}\left(1\right)r_t\log^{\frac{1}{2}}\left(\frac{1}{\eta^2\delta}\right)\right) 
	\leq \exp\left(-\frac{\tilde{O}\left(1\right)r_t^2\log\left(\frac{1}{\eta^2\delta}\right)}{2\sum_{i=0}^{t}d_i^2}\right)
	=\tilde{O}\left(\eta^2\delta\right).$$
	
	Therefore, with at least probability $1-\tilde{O}\left(\eta^2\delta\right)$, we have
	\begin{align*}
		\norm{a_t-a^*}_2^2
		&\leq \left(1-\lambda_4\right)^t\left(\norm{a_0-a^*}_2^2-5\gamma\right) 
		+ \tilde{O}\left(1\right)\left(1-\lambda_4\right)^t r_t\log^{\frac{1}{2}}\left(\frac{1}{\eta^2\delta}\right)
		+ 5\gamma\\
		&\leq \norm{a_0-a^*}_2^2  
		+ \tilde{O}\left(1\right)\frac{M}{\sqrt{\lambda_4}}\log^{\frac{1}{2}}\left(\frac{1}{\eta^2\delta}\right)
		+ 5\gamma\leq 6\gamma,
	\end{align*}
	where the last line holds, since we can always find $\eta\leq\eta_{\max}=\tilde{O}\left(\frac{\gamma}{k^2p}\right)$ to satisfy the condition. 
	The above inequality shows that if $\mathscr{H}_t$ holds, then $\mathscr{H}_{t+1}$ holds with at least probability $1-\tilde{O}\left(\eta^2\delta\right)$. Hence, with at least probability $1-\delta$, we have $\norm{a_t-a^*}_2^2\leq6\gamma$ for all $t\leq T=\tilde{O}\left(\frac{1}{\eta^2}\right)$. Rescaling $\gamma$, we have the desired results for Step 2.
	
	Combining the above two steps, with at least probability $1-\delta$, we have $\norm{a_t-{a}^*}_2^2\leq \gamma$ for all $t$'s such that $\tau_{22}\leq t\leq\tilde{O}(\eta^{-2})$, where $\tau_{22}=\tilde{O}\Big(\frac{p}{\eta}\log\frac{1}{\gamma}\log
	\frac{1}{\delta}\Big).$
\end{proof}

\section{An Additional Experiment for Training Overparameterized Neural Network}\label{add_exp}

Our additional experiment still considers the regression problem under the realizable setting, where the response is generated by a noiseless {\bf teacher} network
\begin{align*}
	y=f(Z, w^*,a^*)=(a^*)^\top\sigma(Z^\top w^*).
\end{align*}
The {\bf student} network $h$, however, adopts a different architecture and contains two convolutional filters, i.e.,
\begin{align*}
	h(Z,w,u,a,b) = a^\top\sigma(Z^\top w) + b^\top\sigma(Z^\top v),
\end{align*}
where $v\in\RR^p$ and $b\in\RR^k$.  Compared with the teach network, the student network is overparameterized. We then learn the overparameterized student network by solving the following optimization problem:
\begin{align}\label{opt-exp}
	\min_{w,v,a,b} F(w,v,a,b)\quad\text{subject to}\quad w^\top w=1~\textrm{and}~v^\top v=1,
\end{align}
where $F(w,v,a,b)=\frac{1}{2}\mathbb{E}_Z(h(Z,w,v,a,b) - f(Z,w^*,a^*))^2$.

Unfortunately, $F(w,v,a,b)$ and $\nabla F(w,v,a,b)$ do not admit analytical forms. Therefore, we randomly sample $n$ realizations of $Z$ (denoted by $Z_i,~i=1,...n$), and solve a finite sample approximation of \eqref{opt-exp},
\begin{align}\label{fin-opt-exp}
	\min_{w,v,a,b} F_n(w,v,a,b)\quad\text{subject to}\quad w^\top w=1~\textrm{and}~v^\top v=1,
\end{align}
where $F_n(w,v,a,b)=\frac{1}{2n}\sum_{i=1}^n(h(Z_i,w,v,a,b) - f(Z_i,w^*,a^*))^2$.

For our experiment, we choose $k=10$ and $p=15$. The first $5$ entries of $a^*$ all equal to $1/\sqrt{10}$ and the remaining entries of $a^*$ all equal to $-1/\sqrt{10}$. $w^*$ is randomly generated over the unit sphere. We choose $n=10,000$, and expect \eqref{fin-opt-exp} to have an optimization landscape to \eqref{opt-exp}.

We run the gradient descent algorithm to solve \eqref{fin-opt-exp}. The initialization is chosen at
\begin{align*}
	w=-w^*,~v = -w^*,~a_0=(\mathds{1}\mathds{1}^\top+(\pi-1)I)^{-1}(\mathds{1}\mathds{1}^\top-I)a^*~\textrm{and}~b_0=0.
\end{align*}
We choose the step size $\eta=10^{-5}$ and run for $10^{8}$ iterations. We eventually observe $\norm{\nabla F_n(w,v,a,b)}_2 < 10^{-4}$ and $F_n(w,v,a,b)> 0.15$. We suspect that the gradient descent algorithm approaches some spurious local optimum.

\end{document}